\documentclass[twoside,11pt]{article}

\usepackage{blindtext}


%

%
%
%
\usepackage{thm-restate}
\usepackage{jmlr2e}
\usepackage{graphicx}
\usepackage{wrapfig}
\usepackage{xcolor}
\usepackage{color, colortbl}	
\definecolor{Gray}{gray}{0.9}
\usepackage[ruled,vlined,linesnumbered]{algorithm2e}
\usepackage{amsmath}
\usepackage{amssymb}
\usepackage{commath}
\usepackage[font=small]{caption}
\usepackage{subcaption}
\usepackage{enumerate}
\usepackage{pifont}
\usepackage{amsfonts}
\usepackage{multirow}
\usepackage{makecell}
\usepackage{booktabs} 
\usepackage{dsfont}
\usepackage{bm}

\usepackage{bbding}
\usepackage{siunitx}
\usepackage{tcolorbox}
\usepackage{tikz}
\newcommand{\tikzcircle}[2][cyan,fill=cyan]{\tikz[baseline=-0.7ex]\draw[#1,radius=#2] (0,0) circle ;}

\newcommand{\tikzline}[2][cyan, fill=cyan]{\tikz[baseline=-0.7ex]\draw[-,color=#2,thick] (0,0) -- (0.3,0);}
\SetKwRepeat{Do}{do}{while}


\newcommand{\nn}{f}
\newcommand{\nnunder}{\underline{f}}
\newcommand{\indim}{d}
\newcommand{\outdim}{m}
\newcommand{\inpoint}{x}
\newcommand{\outpoint}{y}

\newcommand{\indomain}{\mathcal{C}}

\newcommand{\intervar}{z^{(i)}_{j}}


\newcommand{\spec}{g}
\newcommand{\net}{f}
\newcommand{\specnum}{K}

\newcommand{\preimage}{\nn^{-1}}
\newcommand{\polytopeset}{\mathcal{T}}
\newcommand{\polytope}{T}
\newcommand{\polytopenum}{D}


\newcommand{\threshold}{v}
\newcommand{\inset}{I}
\newcommand{\outset}{O}
\newcommand{\proportion}{p}

\newcommand{\numsamples}{N}
\newcommand{\nnslope}{\bm{\alpha}}
\newcommand{\nnslopesingle}{\alpha}

\newcommand{\weight}{\textbf{W}}
\newcommand{\bias}{\textbf{b}}
\newcommand{\preact}{h}
\newcommand{\postact}{a}
\newcommand{\numlayers}{L}
\newcommand{\relu}{\textnormal{ReLU}}

\newcommand{\lowerweight}{\underline{\textbf{A}}}
\newcommand{\lowerbias}{\underline{\textbf{b}}}
\newcommand{\lowerweightsingle}{\underline{a}}
\newcommand{\lowerbiassingle}{\underline{b}}
\newcommand{\upperweight}{\overline{\textbf{A}}}
\newcommand{\upperbias}{\overline{\textbf{b}}}
\newcommand{\upperweightsingle}{\overline{a}}
\newcommand{\upperbiassingle}{\overline{b}}
\newcommand{\concretelower}{\textbf{l}}
\newcommand{\concreteupper}{\textbf{u}}
\newcommand{\concretelowersingle}{l}
\newcommand{\concreteuppersingle}{u}

\newcommand{\featurelowerbound}{\underline{x}}
\newcommand{\featureupperbound}{\overline{x}}

\newcommand{\actneuron}{\hat{\textbf{z}}}
\newcommand{\preactneuron}{\textbf{z}}
\newcommand{\dualvar}{\beta^{(i)}_{j}}
\newcommand{\nndual}{\bm{\beta}}


\newcommand{\lincon}{\psi}
\newcommand{\boxcon}{\phi}
\newcommand{\fol}{A}
\newcommand{\folunder}{\alpha}
\newcommand{\folover}{\alpha}

\newcommand{\volume}{\textnormal{vol}}
\newcommand{\cov}{\textnormal{cov}}

\newcommand{\algo}{QV}

\newcommand{\linconw}{c}
\newcommand{\linconb}{d}

\newcommand{\maxiterations}{R}
\newcommand{\targetcov}{r}

\newcommand{\rev}[1]{{\color{black}{#1}}} 
 
\newcommand{\xy}[1]{{\color{black}{#1}}} 
\newcommand{\bwc}[1]{{\color{black} (Benjie) {#1}}}
\newcommand{\bw}[1]{{\color{black} {#1}}}


\usepackage{lastpage}
\jmlrheading{25}{2024}{1-\pageref{LastPage}}{7/24; Revised X/24}{X/24}{21-0000}{Xiyue Zhang, Benjie Wang, Marta Kwiatkowska and Huan Zhang}


\ShortHeadings{Preimage Approximation for Neural Networks}{Zhang, Wang, Kwiatkowska and Zhang}
\firstpageno{1}

\begin{document}

\title{PREMAP: A Unifying PREiMage APproximation Framework for Neural Networks}

\author{\name Xiyue Zhang\thanks{Equal contribution} \footnotemark[2] \email xiyue.zhang@cs.ox.ac.uk \\
        \name Benjie Wang\footnotemark[1] \email benjie.wang@cs.ox.ac.uk \\
        \name Marta Kwiatkowska \email marta.kwiatkowska@cs.ox.ac.uk \\
       \addr Department of Computer Science\\
       University of Oxford\\
       Oxford, OX1 3QD, UK
       \AND
       \name Huan Zhang \email huan@huan-zhang.com \\
       \addr Department of Electrical and Computer Engineering\\
       University of Illinois Urbana–Champaign\\
       Urbana, IL 61801, USA}

\editor{My editor}

\maketitle

\footnotetext[2]{Current address: School of Computer Science, University of Bristol, UK (Correspondence to xiyue.zhang@bristol.ac.uk)}
\begin{abstract}
Most methods for neural network verification focus on bounding the \emph{image}, i.e., set of outputs for a given input set. This can be used to, for example, check the robustness of neural network predictions to bounded perturbations of an input.
However, verifying properties concerning the \emph{preimage}, i.e., the set of inputs satisfying an output property, requires abstractions in the input space.
We present a general framework for preimage abstraction that produces under- and over-approximations of any polyhedral output set. Our framework employs cheap parameterised linear relaxations of the neural network, together with an anytime refinement procedure that iteratively partitions the input region by splitting on input features and neurons. 
The effectiveness of our approach relies on carefully designed heuristics and optimization objectives to achieve rapid improvements in the approximation volume. 
We evaluate our method on a range of tasks, demonstrating significant improvement in efficiency 
and scalability to high-input-dimensional image classification tasks compared to state-of-the-art techniques.
Further, we showcase the application to quantitative verification and robustness analysis, presenting a sound and complete algorithm for the former and providing sound quantitative results for the latter. 
\end{abstract}

\begin{keywords}
  preimage approximation, abstraction and refinement, linear relaxation, formal verification, neural network
\end{keywords}

\section{Introduction}
\label{sec:introduction}
Despite the remarkable empirical success of neural networks, ensuring their safety against potentially adversarial behaviour, especially when using them as decision-making components in autonomous systems~\citep{Bojarski16AutoControl,Codevilla18,Yun17Robotics}, is an important and challenging task. 
Towards this aim, various approaches have been developed for the verification of neural networks, with extensive effort devoted, in particular, to the problem of \emph{local robustness verification}\bw{, which focuses on deciding  
the presence or absence of adversarial examples~\citep{szegedy2013intriguing,Biggio13} within an $\epsilon$-perturbation neighbourhood} \citep{huang2017safety,katz2017reluplex,zhang2018crown,bunel2018unified,tjeng2019evaluating,singh2019deeppoly,xu2020automated,xu2021fast,wang2021beta}.

While local robustness verification is useful for certifying that a neural network has the same prediction in a neighbourhood of an input, 
it does not provide finer-grained information on the behaviour of the network in the input domain. 
An alternative and more general approach for neural network analysis is to construct the \emph{preimage} abstraction of its predictions \citep{Matoba20Exact,Dathathri19Inverse}. 
Given a set of outputs, the preimage is defined as the set of all inputs mapped by the neural network to that output set. \bw{For example, given a particular action for a neural network controller (e.g., drive left), the preimage captures the set of percepts (e.g., car positions) that cause the neural network to take this action.} By characterising the preimage 
symbolically in an abstract representation, e.g., polyhedra, one can perform more complex analysis for a wider class of properties beyond local robustness, such as computing the \emph{proportion} of inputs satisfying a property 
\citep{webb2018statistical,mangal2019probabilistic}, \bw{or performing downstream reasoning tasks.} 

Unfortunately, exact preimage generation \citep{Matoba20Exact} is intractable at scale, as it requires splitting into input subregions where the neural network is linear. Each such subregion corresponds to a set of determined activation patterns of the nonlinear neurons, the number of which grows exponentially with the number of neurons. Therefore, we focus on 
the problem of \emph{preimage approximation}, that is, constructing symbolic abstractions for the preimage. 
In this work, we propose PREMAP, a general framework for preimage approximation that computes under-approximations and over-approximations represented as disjoint unions of polytopes (DUP). 

Our method leverages recent progress in local robustness verification, which uses parameterised linear relaxations of neural networks together with divide-and-conquer refinement strategies to analyse the input space in an efficient and GPU-friendly manner \citep{zhang2018crown,wang2021beta}. We observe that, unlike robustness verification, where the goal is to determine the behaviour of the neural network at the \emph{worst-case} point in the input space (and thus verify or falsify the property), in preimage approximation we instead aim to 
minimize the \textit{overall} difference in volume between the approximation and the (intractable) exact preimage.
Thus, we design a methodology that focuses on effectively optimising this new volume-based objective, while maintaining the GPU parallelism, efficiency, and flexibility drawn from the state-of-the-art robustness verifiers.

\xy{
In more detail, this paper \bw{makes} the following novel contributions:
\begin{enumerate}
\item \bw{the first} unifying framework \bw{capable of efficiently} generating symbolic under- and over-approximations of the preimage abstraction of any polyhedron output set;
\item an \bw{efficient and anytime} \bw{preimage} refinement algorithm, \bw{which iteratively partitions the input region into subregions using input and/or intermediate (ReLU) splitting (hyper)planes;}
\item \bw{carefully-designed heuristics for selecting input features and neurons to split on, which (i) take advantage of GPU parallelism for efficient evaluation; and (ii) significantly improve approximation quality compared to na\"ive baselines;}
\item \bw{a novel differentiable optimisation objective for improving preimage approximation precision, with respect to (i) convex bounding parameters of nonlinear neurons and (ii) Lagrange multipliers for neuron splitting constraints;}
\item \bw{empirical evaluation of preimage approximation on a range of datasets, and an application to the problem of quantitative verification;}

\item \bw{a publicly-available software implementation of our preimage approximation framework~\citep{premap2025}.}
\end{enumerate}

This work significantly \bw{extends} the preliminary version in \cite{zhang23preimage} 
\bw{in the following ways:}
(i) 
\bw{introducing an \emph{over-approximation} algorithm within the framework, with accompanying empirical results;}
(ii) \bw{improving the refinement procedure through new heuristics for selecting input features to split on,}
using only 49.7\% (avg.) computation time of the prior method to achieve the same precision \bw{(Sections \ref{sec:branching},  \ref{sec:smoothed_input_splitting_exp})}; 
(iii) 
\bw{introducing Lagrangian relaxation to enforce neuron splitting constraints,} enabling further optimisation of the approximations with precision gains of up to 58.6\% (avg.) \bw{for a MNIST preimage approximation task (Sections \ref{sec:dual}, \ref{sec:relu_split_exp})}; 
and (iv) \bw{an extended empirical evaluation of the framework.}
} 


\rev{The paper is organized as follows. 
We present related works in Section \ref{sec:related}.
Section \ref{sec:pre} introduces the notation and preliminary definitions of neural networks,  linear relaxation  
and polyhedra representations.}
In Section \ref{sec:problem_definition}, we present the formulation of the problems studied, namely preimage approximation and quantitative analysis of neural networks. Our preimage approximation method is provided in Section \ref{sec:method}, together with the application to quantitative verification of neural networks and proofs of soundness and completeness. 
In Section \ref{sec:evaluation}, we present the experimental evaluation of our approach and demonstrate its effectiveness and scalability compared to the state-of-the-art techniques, and applications in quantitative verification and robustness analysis. 
We conclude the paper in Section \ref{sec:conclusion}.


\section{Related Work}
\label{sec:related}
Our paper is related to a series of works on 
robustness verification of neural networks.
To address the scalability issues with
\textit{complete} verifiers \citep{huang2017safety,katz2017reluplex,tjeng2019evaluating} based on constraint solving,
convex relaxation \citep{salman2019convex} has been used for developing highly efficient \textit{incomplete} verification methods \citep{zhang2018crown,wong2018provable,singh2019deeppoly,xu2020automated}.
Later works employed the branch-and-bound (BaB) framework \citep{bunel2018unified,Bunel20BaB} to achieve completeness, using incomplete methods for the bounding procedure \citep{xu2021fast,wang2021beta,Ferrari22MultiBaB}. In this work, we adapt convex relaxation for efficient preimage approximation. Further, our divide-and-conquer procedure is analogous to BaB, \xy{but focuses on maximising covered volume for under-approximation (resp. minimising for over-approximation) rather than maximising or minimising a function value}.

There are also works
that have sought to define a weaker notion of local robustness known as \textit{statistical robustness} \citep{webb2018statistical,mangal2019probabilistic,wang2021statistically}, which requires that a proportion of points under some perturbation distribution around an input point are classified in the same way. 
Verification of statistical robustness is typically achieved by sampling and statistical guarantees \citep{webb2018statistical,baluta2021quantitative,tit2021corruption,yang2021quantitative}.
In this paper, we apply our symbolic approximation approach to quantitative analysis of neural networks, while providing \textit{exact quantitative} rather than 
\textit{statistical} evaluation \citep{Webb19Stat}. \rev{In particular, similarly to \citet{xiang2020reachable,rober2023backward}, while we employ sampling in order to guide our divide-and-conquer procedure, the guarantees obtained are exact.}
 
Another line of related works considers deriving exact or approximate abstractions of neural networks,
 which are applied for explanation~\citep{sotoudeh2021syrenn}, verification~\citep{elboher2020abstraction,pulina2010abstraction}, reachability analysis~\citep{prabhakar2019abstraction}, and preimage approximation~\citep{Dathathri19Inverse,ProveBound}.
\cite{Dathathri19Inverse} leverages symbolic interpolants \citep{Albarghouthi13interpolants} for preimage approximations, facing exponential complexity in the number of hidden neurons. 
\cite{ProveBound} considers the preimage over-approximation problem via inverse bound propagation, but their approach cannot be directly extended to the under-approximation setting. They also do not consider any strategic branching and refinement methodologies like those in our unified framework.
Our anytime algorithm, which combines convex relaxation with principled splitting strategies for refinement, is applicable for both under- and over-approximations. 

\rev{
In the context of analysis of systems with neural network controllers, the backward reachability problem is to compute the set of states for which a system's trajectories can reach a particular target region within a finite time horizon. Prior works have explored both exact computation of this set \citep{vincent2021reachable} as well as guaranteed over-approximation \citep{rober2022backward,rober2023backward,zhang2023backward,zhang2024hybrid,ProveBound}. Empirically, when applied to neural network controllers, we find that our approach performs competitively with the state-of-the-art method of \citet{ProveBound}. 
}

\section{Preliminaries}
\label{sec:pre}
We use 
$\net: \mathbb{R}^{\indim} \to \mathbb{R}^{\outdim}$
to denote a feed-forward neural network. 
\xy{For layer $i$, we use $\weight^{(i)}$ to denote the weight matrix, $\bias^{(i)}$ 
the bias, 
$\preactneuron^{(i)}$ the pre-activation neurons, and $\actneuron^{(i)}$ 
the post-activation neurons, such that we have $\preactneuron^{(i)} = \weight^{(i)} \actneuron^{(i-1)} + \bias^{(i)}$. 
We use $\preact^{(i)}(x)$ to denote the function from input to pre-activation neurons, and $\postact^{(i)}(x)$ the function from input to 
the post-activation neurons, i.e.,  $\preactneuron^{(i)}=\preact^{(i)}(x)$ and $\actneuron^{(i)}=\postact^{(i)}(x)$.} 
In this paper, we focus on ReLU neural networks with $\postact^{(i)}(\inpoint)=\relu(\preact^{(i)}(\inpoint))$, 
where $\relu(\preact) := \max(\preact, 0)$ is applied element-wise.
However, our method can be generalized to other activation functions \bw{that can be} bounded by linear 
\bw{functions},
\bw{similarly to}~\cite{zhang2018crown}.

\textbf{Linear Relaxation of Neural Networks.}
Nonlinear activation functions lead to the NP-completeness of the neural network verification problem \bw{as} proved in \cite{katz2017reluplex}. 
To address such intractability, linear relaxation is often used to transform the nonconvex constraints into linear programs. 
As shown in Figure~\ref{fig:linear_relaxation}, given  \textit{concrete}  lower and upper bounds $\concretelower^{(i)}\leq \preact^{(i)}(\inpoint) \leq \concreteupper^{(i)}$ on the pre-activation values of layer $i$, there are three cases to consider.
\rev{In the \emph{inactive} ($\concreteuppersingle^{(i)}_j \leq 0$) and \emph{active} ($\concretelowersingle^{(i)}_j \ge 0)$ cases, 
the mapping function from pre-activation to post-activation values becomes linear, with
$\postact^{(i)}_j(\inpoint) = 0$ and $\postact^{(i)}_j(\inpoint) = \preact^{(i)}_j(\inpoint)$ respectively.}
In the \emph{unstable} case, $\postact^{(i)}_j(\inpoint)$ can be bounded by 
\begin{equation}\label{eq:node_bounding}
\nnslopesingle^{(i)}_j \preact^{(i)}_j(\inpoint) \leq \postact^{(i)}_j(\inpoint) \leq 
-\frac{\concreteuppersingle^{(i)}_j\concretelowersingle^{(i)}_j}{\concreteuppersingle^{(i)}_j - \concretelowersingle^{(i)}_j}
    + \frac{\concreteuppersingle^{(i)}_j}{\concreteuppersingle^{(i )}_j - \concretelowersingle^{(i )}_j} \preact^{(i)}_j(\inpoint)
\end{equation}
where $\alpha^{(i)}_j$ is a configurable parameter that produces a valid lower bound for any value in 
$[0,1]$. 
Linear bounds can also be obtained for other non-piecewise linear activation functions by considering the characteristics of the activation function, such as the S-shape activation functions \citep{zhang2018crown, Matt24Sigmoid}.

\begin{figure}[t]
     \centering
     \begin{subfigure}[b]{0.3\textwidth}
         \centering
         \includegraphics[width=\textwidth]{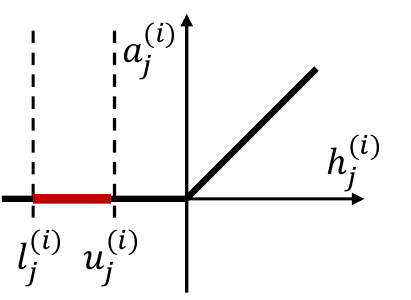}
         \label{fig:negative}
     \end{subfigure}
     \quad
     \begin{subfigure}[b]{0.3\textwidth}
         \centering
         \includegraphics[width=\textwidth]{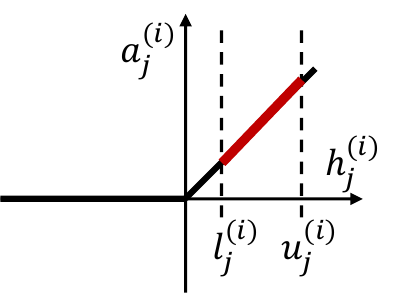}
         \label{fig:positive}
     \end{subfigure}
     \quad
     \begin{subfigure}[b]{0.3\textwidth}
         \centering
         \includegraphics[width=\textwidth]{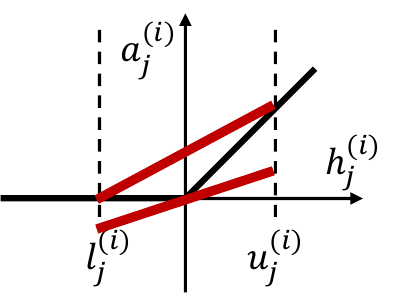}
         \label{fig:unstable}
     \end{subfigure}
        \caption{Linear bounding functions for inactive, active, unstable ReLU neurons.}
        \label{fig:linear_relaxation}
\end{figure}

Linear relaxation can be used to compute linear lower and upper bounds of the form $\lowerweight \inpoint + \lowerbias \leq f(\inpoint) \leq \upperweight \inpoint + \upperbias$ on the output of a neural network, for a given bounded input region $\indomain$.
These methods are known as linear relaxation based perturbation analysis (LiRPA) algorithms \citep{xu2020automated,xu2021fast,singh2019deeppoly}. In particular,
\emph{backward-mode} LiRPA computes linear bounds on 
$f$
by propagating linear 
bounding functions 
backward from the output, layer by layer, to the input layer.

\textbf{Polytope Representations.}
Given an Euclidean space $\mathbb{R}^{\indim}$, \rev{a polyhedron $\polytope$ is defined to be the intersection of a finite number of half spaces}. More formally, suppose we have a set of linear constraints defined by $\lincon_i(\inpoint) := \linconw_i^T \inpoint + \linconb_i \geq 0$ for $i = 1, ... \specnum$, where $\linconw_i \in \mathbb{R}^{\indim}, \linconb_i \in \mathbb{R}$ are constants, and $\inpoint = (\inpoint_1, ..., \inpoint_\indim)$ is a tuple of variables. Then a polyhedron is defined as $\polytope = \{\inpoint \in \mathbb{R}^{\indim}| \bigwedge_{i = 1}^{\specnum} \lincon_i(\inpoint) \}$,
where $\polytope$ consists of all values of $\inpoint$ satisfying the first-order logic (FOL) formula $\folunder(\inpoint) := \bigwedge_{i = 1}^{\specnum} \lincon_i(\inpoint)$.
We use the term polytope to refer to a bounded polyhedron, that is, a polyhedron $\polytope$ such that $\exists R \in \mathbb{R}^{> 0} : \forall \inpoint_1, \inpoint_2 \in \polytope$, $\norm{\inpoint_1 - \inpoint_2}_2 \le R$ holds. 
The abstract domain of polyhedra has been widely used for the verification of neural networks and computer programs as in \cite{singh2019deeppoly,Benoy02Polyhedral,Boutonnet19Polyhedron}. 
An important type of polytope is the hyperrectangle (box), which is a polytope defined by a closed and bounded interval $[\underline{\inpoint_i}, \overline{\inpoint_i}]$ for each dimension, where $\underline{\inpoint_i}, \overline{\inpoint_i} \in \mathbb{Q}$. More formally, using the linear constraints $\boxcon_i := (\inpoint_i \geq \underline{\inpoint_i}) \wedge (\inpoint_i \leq \overline{\inpoint_i})$ for each dimension, \rev{the hyperrectangle takes the form $\indomain = \{\inpoint \in \mathbb{R}^{\indim} | \inpoint \models \bigwedge_{i = 1}^{\indim} \boxcon_i \}$, where $ \inpoint \models \bigwedge_{i = 1}^{\indim} \boxcon_i$ denotes that the input $x$ satisfies the constraints specified by the conjunction of inequalities}.

\begin{figure}
\begin{subfigure}[b]{0.60\linewidth}
\begin{tikzpicture}
\fill[cyan] (1.5, 1.5) ellipse (75pt and 60pt); 
    \draw[] (0, 0) rectangle (3,3) ;
    \node[] at (3.4, 3.2) {$\mathcal{C}$};
    
    \fill[cyan] (5.5,0.5) ellipse (25pt and 25pt); 
    \draw[] (5, 0) rectangle (8,3) ;
    \node[] at (8.4, 3.2) {$\mathcal{C}$};
    
\end{tikzpicture}
\caption{Standard NN Verification: decide whether $f(x) \in \outset, \forall x \in \mathcal{C}$.}
\end{subfigure}
\hfill
\begin{subfigure}[b]{0.34\linewidth}
\centering
\begin{tikzpicture}
\fill[cyan] (0.5, 0.5) ellipse (25pt and 25pt); 
    \draw[] (0, 0) rectangle (3,3) ;
    \node[] at (3.4, 3.2) {$\mathcal{C}$};
    \draw[thick,color=blue] (0, 1.2) -- (1.2, 0);
    \draw[thick,color=red] (0,1.9) -- (2.8,0);
    \draw[thick,color=blue] (0, 0) -- (0, 1.2);
    \draw[thick, color=blue] (0, 0) -- (1.2, 0);
    \draw[thick,color=red] (0, 1.2) -- (0, 1.9);
    \draw[thick, color=red] (1.2, 0) -- (2.8, 0);
\end{tikzpicture}
\caption{NN Preimage Approximation: characterize $f^{-1}(\outset) := \{x \in \mathcal{C} \mid f(x) \in \outset \}$.}
\end{subfigure}
\caption{Illustration of the preimage approximation problem. In contrast to NN robustness verification, where the goal is to answer \texttt{Yes} or \texttt{No} for the statement $f(x) \in \outset, \forall x \in \mathcal{C}$, in preimage approximation the goal is to find a bounding under-approximation \tikzline{blue} and over-approximation \tikzline{red} to the preimage $f^{-1}(\outset)$. \tikzcircle{4pt} indicates the region where $f(x) \in \outset$. 
}\label{fig:preimage_problem}
\end{figure}

\section{Problem Formulation}
\label{sec:problem_definition}

\subsection{Preimage Approximation}

In this work, we are interested in the problem of computing preimages for neural networks. Given a subset $\outset \subset \mathbb{R}^{\outdim}$ of the codomain, the preimage of a function $\nn: \mathbb{R}^{\indim} \to \mathbb{R}^{\outdim}$ is defined to be the set of all inputs $\inpoint \in \mathbb{R}^{\indim}$ that are mapped to an element of $\outset$ by $\nn$. For neural networks in particular, the input is typically restricted to some bounded input region $\indomain \subset \mathbb{R}^{\indim}$. In this work, we restrict the output set $\outset$ to be a polyhedron, and the input set $\indomain$ to be an axis-aligned hyperrectangle region $\indomain \subset \mathbb{R}^{\indim}$, as these are commonly used in neural network verification. 

\rev{As illustrated in Figure~\ref{fig:preimage_problem}, we contrast the preimage approximation problem with the setting of neural network robustness verification, where the goal is to answer a binary question, i.e., whether $f(x) \in \outset$ holds for all $x \in \mathcal{C}$.
In preimage approximation, the objective is to compute the explicit characterisation of the preimage $f^{-1}(\outset)$ for the targeted output set $\outset$.
In particular, we aim to compute bounding under- and over-approximations} (depicted by \tikzline{blue} and \tikzline{red}, respectively) \rev{of the true preimage} (illustrated by \tikzcircle{4pt}) \rev{within the input domain.} 
We now define the notion of a restricted preimage.

\begin{definition}[Restricted Preimage]
Given a neural network $\nn: \mathbb{R}^{\indim} \to \mathbb{R}^{\outdim}$, and an input set $\indomain \subset \mathbb{R}^{\indim}$, the restricted preimage of an output set $\outset \subset \mathbb{R}^{\outdim}$ is defined to be the set $\nn^{-1}_{\indomain}(\outset) := \{\inpoint \in \mathbb{R}^{\indim}| \nn(\inpoint) \in \outset \wedge \inpoint \in \indomain\}$.
\end{definition}

\begin{example}\label{eg:formulation}
To illustrate our problem formulation and approach,
we introduce a vehicle parking task from \cite{Ayala11vehicle} as a running example.  
In this task, there are four parking lots, located in each quadrant of a $2\times 2$ grid $[0,2]^2$, and a neural network with two hidden layers of 10 ReLU neurons $\nn: \mathbb{R}^2 \to \mathbb{R}^4$ is trained to classify which parking lot an input point belongs to.
To analyze the behaviour of the neural network in the input region \xy{$[0, 2] \times [0, 2]$, we set $\indomain = \{\inpoint \in \mathbb{R}^2 | (0 \leq \inpoint_1 \leq 2) \wedge (0 \leq \inpoint_2 \leq 2)\}$. Then the restricted preimage $\nn^{-1}_{\indomain}(\outset)$ of the set $\outset = \{\outpoint \in \mathbb{R}^4 | \bigwedge_{i \in \{2, 3, 4\}} \outpoint_1 - \outpoint_i \geq 0\} $ is the subspace of the region $[0, 2] \times [0, 2]$ that is \textit{labelled} as parking lot $1$ by the neural network.}
\end{example}

We focus on \emph{provable} approximations of the preimage. Given a first-order formula $\fol$, $\folunder$ is an \emph{under-approximation} (resp. \emph{over-approximation}) of $\fol$ if it holds that $\forall \inpoint. \folunder(\inpoint) \implies \fol(\inpoint)$ (resp. $\forall \inpoint. \fol(\inpoint) \implies \folover(\inpoint)$). 
In our context, the restricted preimage is defined by the formula $\fol(\inpoint) = (\nn(\inpoint) \in \outset) \wedge (\inpoint \in \indomain)$, and we restrict to approximations $\folunder$ that take the form of a disjoint union of polytopes (DUP). 
The goal of our method is to generate a DUP approximation $\polytopeset$ that is as tight as possible; that is, we aim to maximize the volume $\volume(\polytopeset)$ of an under-approximation, or minimize the volume $\volume(\polytopeset)$ of an over-approximation.

\begin{definition}[Disjoint Union of Polytopes]
A disjoint union of polytopes (DUP) is a FOL formula $\folunder$ of the form
$    \folunder(\inpoint) := \bigvee_{i = 1}^{\polytopenum} \folunder_i(\inpoint)$, 
where each $\folunder_i$ is a polytope formula 
(conjunction of a finite set of linear half-space constraints), with the property that $\folunder_i \wedge \folunder_j$ is unsatisfiable for any $i \neq j$.
\end{definition}

\subsection{Quantitative Properties} 

One of the most important verification problems for neural networks is that of proving guarantees on the output of a network for a given input set \citep{gehr2018ai2,gopinath2019properties,ruan2018reachability}. This is often expressed as a property of the form $(\inset, \outset)$ such that $\forall \inpoint \in \inset \implies \nn(\inpoint) \in \outset$. We can generalize this to \emph{quantitative} properties:

\begin{definition}[Quantitative Property]
    Given a neural network $\nn: \mathbb{R}^{\indim} \to \mathbb{R}^{\outdim}$, a measurable input set with non-zero measure (volume) $\inset \subseteq \mathbb{R}^{\indim}$, a measurable output set $\outset \subseteq \mathbb{R}^{\outdim}$, and a rational proportion $\proportion \in [0, 1]$, we say that the neural network satisfies the property $(\inset, \outset, \proportion)$ if $\frac{\volume(\nn^{-1}_{\inset}(\outset))}{\volume(\inset)} \geq \proportion$. 
\end{definition}
Note that the restricted preimage of a polyhedron under a neural network is Lebesgue measurable since polyhedra 
(intersection of a finite number of half-spaces) are Borel measurable and NNs are continuous functions.
\rev{
\begin{example}\label{eg:quant_property}
Consider the vehicle parking task, where the goal is to predict where to park among four parking lots. 
Consider the input region $\inset= \{x \in \mathbb{R}^{2}~|~x \in [0,1]^2\}$, representing the first parking lot region, and the output set  $\outset=\{\outpoint \in \mathbb{R}^{4}|\bigwedge_{i = 2}^{4} y_1 - \ y_i \geq 0\}$, which specifies the neural network predicting that the vehicle should park in the first lot, i.e., $y_1$ is the largest score among all decisions.
Let the quantitative proportion be $p=0.9$.
This defines a quantitative property $(\inset, \outset, \proportion)$ asserting that the volume of the generated preimage under-approximation $\nn^{-1}_{\inset}(\outset)$, from which the neural network maps to $\outset$, is at least 90\% of the total volume of $\inset$.
\end{example}
}

Neural network verification algorithms can be 
\bw{characterized by two main properties: \emph{soundness}},
which \bw{states that the algorithm} always returns correct results, and \emph{completeness}, \bw{which states that the algorithm always reaches}
a conclusion on any verification query~\citep{liu2021algorithms}.
We now define the soundness and completeness of verification algorithms for quantitative properties.

\begin{definition}[Soundness]
    A verification algorithm $\algo$ is sound if, whenever $\algo$ outputs \textnormal{True}, the property $(\inset, \outset, \proportion)$ holds.
\end{definition}

\begin{definition}[Completeness]
    A verification algorithm $\algo$\ is complete if (i) $\algo$ never returns \textnormal{Unknown}, and (ii) whenever $\algo$ outputs \textnormal{False}, the property $(\inset, \outset, \proportion)$ does not hold. 
\end{definition}

If the property $(\inset, \outset)$ holds, then the quantitative property $(\inset, \outset, 1)$ holds, while quantitative properties for $0 \leq  \proportion < 1$ provide more information when $(\inset, \outset)$ does \emph{not} hold. 
Most neural network verification methods produce approximations of the \emph{image} of $\inset$ in the output space, which cannot be used to verify quantitative properties. 
Preimage \textit{over-approximations} include 
points outside of the true preimage;
\bw{thus, they cannot be applied for sound quantitative verification}. 
In contrast, preimage \emph{under-approximations} provide a lower bound on the volume of the preimage, allowing us to soundly verify quantitative properties.

\section{Methodology}
\label{sec:method}

\begin{figure}[tbp]
\centering
\includegraphics[width=0.95\textwidth]
     {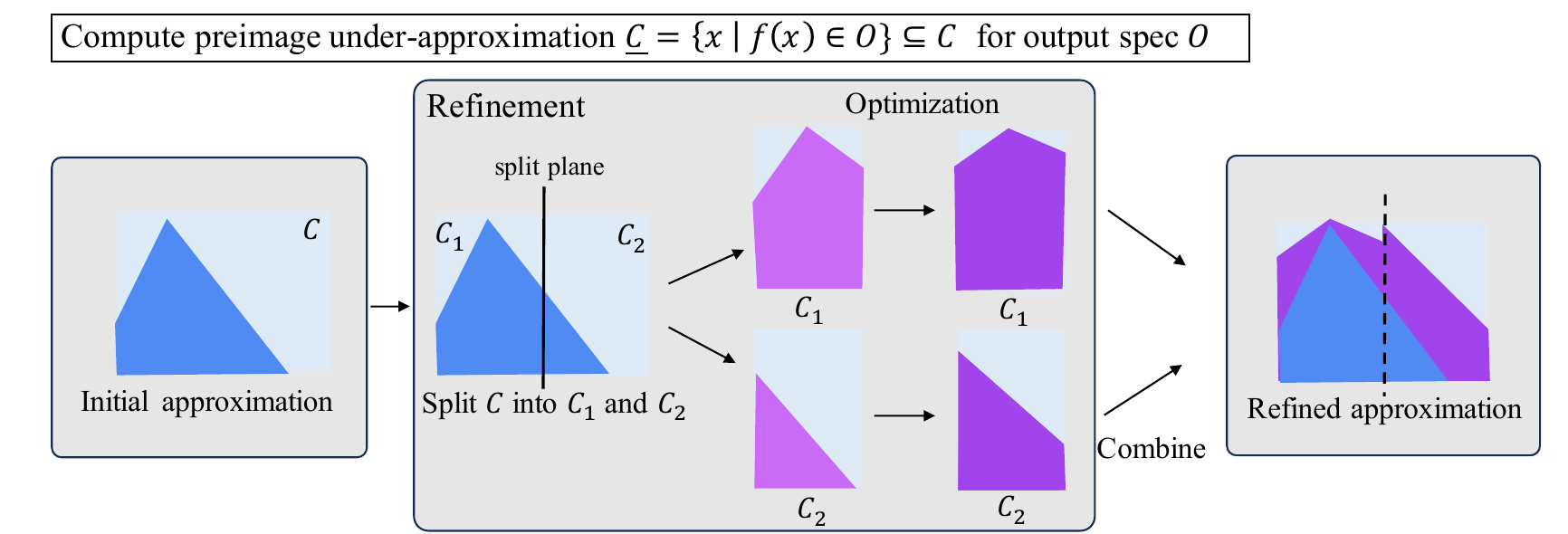}
\caption{\xy{Illustration of the} workflow for preimage \bw{under-approximation (shown in 2D for clarity)}.
\xy{Given a neural network $\net: \mathbb{R}^{\indim} \to \mathbb{R}^{\outdim}$ and output specification $\outset \subset \mathbb{R}^{\outdim}$,}
\bw{our algorithm generates an under-approximation $\underline{\indomain}$ to the preimage in the input region $\indomain$.}
Starting from the input region $\indomain$, the procedure repeatedly splits the selected region into smaller subregions $\mathcal{C}_1$ and $\mathcal{C}_2$ with tighter input bounds, and then optimises the bounding and Lagrangian parameters to increase the volume and thus improve the quality of the under-approximation. 
The refined under-approximation is combined into a union of polytopes.}
\label{fig:workflow}
\end{figure}

\subsection{Overview} 
In this section, we present the main components of our methodology.  
\xy{Figure~\ref{fig:workflow} shows the workflow of our preimage approximation method (using under-approximation as an illustration).
}

In Section \ref{sec:poly_gen}, we introduce how to cheaply and soundly \bw{under-approximate (or over-approximate)} the (restricted) preimage with a single polytope by means of the linear relaxation methods (Algorithm \ref{alg:genunderapprox}), which offer greater scalability than the exact method \citep{Matoba20Exact}.
\xy{
To handle the approximation loss caused by linear relaxation, in Section \ref{sec:branching} we propose an anytime refinement algorithm that improves the approximation by partitioning a (sub)region into subregions with splitting (hyper)planes, with each subregion then being approximated more accurately in parallel. 
In Section \ref{sec:local_opt}, we propose a novel differentiable objective to optimise the bounding parameters of linear relaxation to tighten the polytope approximation.
Next, in Section \ref{sec:dual}, we propose a refinement scheme based on intermediate ReLU splitting planes and derive a preimage optimisation method using Lagrangian relaxation of the splitting constraints.
The main contribution of this paper 
(Algorithm \ref{alg:main}) integrates these four components and is described in Section \ref{sec:overall}.}
Finally, in Section \ref{sec:verif}, we apply our method to quantitative verification (Algorithm \ref{alg:verify}) and prove its soundness and completeness.

To simplify the presentation,
we focus on  computing under-approximations and 
explain the necessary changes to compute over-approximations in highlight boxes throughout.

\begin{algorithm}[tb]
\caption{GenApprox}\label{alg:genunderapprox}
\KwIn{List of subregions $\indomain$, Output set $\outset$, Number of samples $\numsamples$, Boolean $Under$}
\KwOut{\bw{List of polytopes $\textbf{\polytope}$}}

$\textbf{\polytope} = []$\;
\For{\textnormal{subregion} $\indomain_{sub} \in \indomain$ \tcp{Parallel over subregions}}  {
$\inpoint_1, ..., \inpoint_{\numsamples} \leftarrow $ Sample($\indomain_{sub}, \numsamples$)\;
    \If{Under}{
    \bw{$[\underline{\spec_1}(\inpoint, \nnslope_1, \nndual_1), ..., \underline{\spec_{\specnum}}(\inpoint, \nnslope_{\specnum}, \nndual_{\specnum})] \leftarrow$ LinearLowerBound($\indomain_{sub}, \outset$)
    }
    \; \label{algline:linearbound_lower}
    
   Loss$(\nnslope_1, ..., \nnslope_{\specnum}, \nndual_1, ..., \nndual_{\specnum}) \leftarrow - \frac{\text{vol}(\indomain_{sub})}{\numsamples} \sum_{j = 1, ..., \numsamples} \sigma(-\textnormal{LSE}(-\underline{\spec_1}(\inpoint_j, \nnslope_1,\nndual_1), ..., -\underline{\spec_{\specnum}}(\inpoint_j, \nnslope_{\specnum},\nndual_{\specnum})))$\label{algline:loss_under}\;
   $\nnslope_1^*, ..., \nnslope_{\specnum}^* ,\nndual_1^*, ..., \nndual_{\specnum}^* \leftarrow \textnormal{argmin}(\textnormal{Loss}(\nnslope_1, ..., \nnslope_{\specnum},\nndual_1, ..., \nndual_{\specnum}))$\;
    $\textbf{\polytope} = \textnormal{Append}(\textbf{\polytope}, [\underline{\spec_1}(\inpoint, \nnslope_1^*,\nndual_1^*) \geq 0, ..., \underline{\spec_{\specnum}}(\inpoint, \nnslope_{\specnum}^*,\nndual_{\specnum}^*) \geq 0, \inpoint \in \indomain_{sub}])$\label{algline:append_under}
   }
   \Else{
   \bw{$[\overline{\spec_1}(\inpoint, \nnslope_1, \nndual_1), ..., \overline{\spec_{\specnum}}(\inpoint, \nnslope_{\specnum}, \nndual_{\specnum})] \leftarrow$ LinearUpperBound($\indomain_{sub}, \outset$)
    }
    \; \label{algline:linearbound_upper}
    
   Loss$(\nnslope_1, ..., \nnslope_{\specnum}, \nndual_1, ..., \nndual_{\specnum}) \leftarrow \frac{\text{vol}(\indomain_{sub})}{\numsamples} \sum_{j = 1, ..., \numsamples} \sigma(-\textnormal{LSE}(-\overline{\spec_1}(\inpoint_j, \nnslope_1,\nndual_1), ..., -\overline{\spec_{\specnum}}(\inpoint_j, \nnslope_{\specnum},\nndual_{\specnum})))$\label{algline:loss_over}\;
   $\nnslope_1^*, ..., \nnslope_{\specnum}^* ,\nndual_1^*, ..., \nndual_{\specnum}^* \leftarrow \textnormal{argmin}(\textnormal{Loss}(\nnslope_1, ..., \nnslope_{\specnum},\nndual_1, ..., \nndual_{\specnum}))$\;
    $\textbf{\polytope} = \textnormal{Append}(\textbf{\polytope}, [\overline{\spec_1}(\inpoint, \nnslope_1^*,\nndual_1^*) \geq 0, ..., \overline{\spec_{\specnum}}(\inpoint, \nnslope_{\specnum}^*,\nndual_{\specnum}^*) \geq 0, \inpoint \in \indomain_{sub}])$\label{algline:append_over}
   }
    
}

\KwRet{$\textbf{\polytope}$}
\end{algorithm}

\subsection{\bw{Polytope Approximation via Linear Relaxation}}
\label{sec:poly_gen}

We first show how to adapt linear relaxation techniques to efficiently generate valid under-approximations and over-approximations to the restricted preimage for a given input region $\indomain$ as a single polytope. Recall that LiRPA methods enable us to obtain linear lower and upper bounds on the output of a neural network $\nn$, that is, $\lowerweight \inpoint + \lowerbias \leq \net(\inpoint) \leq \upperweight \inpoint + \upperbias$, where the linear coefficients depend on the input region $\indomain$.

Suppose that we are given the input hyperrectangle $\indomain = \{\inpoint \in \mathbb{R}^{\indim} | \inpoint \models \bigwedge_{i = 1}^{\indim} \boxcon_i \}$, and the output polytope specified using the half-space constraints $\lincon_i(\outpoint) = (\linconw_i^{T} \outpoint + \linconb_i \geq 0)$ for $ i = 1, ..., \specnum$ over the output space. Let us first consider generating a guaranteed under-approximation. Given a constraint $\lincon_i$, we append an additional linear layer at the end of the network $\nn$, which maps $\outpoint \mapsto \linconw_i^{T} \outpoint + \linconb_i$, such that the function $\spec_i: \mathbb{R}^{\indim} \to \mathbb{R}$ represented by the new network is $\spec_i(\inpoint) = \linconw_i^{T} \nn(\inpoint) + \linconb_i$. 
\bw{Then, applying LiRPA lower bounding to each $\spec_i$, we obtain a lower bound $\underline{\spec_i}(\inpoint) = \lowerweightsingle_i^T \inpoint + \lowerbiassingle_i$ for each $i$, such that $\underline{\spec_i}(\inpoint) \geq 0 \implies \spec_i(\inpoint) \geq 0$ for $\inpoint \in \indomain$. 
Notice that, for each $i = 1,..., \specnum$, }
\bw{
$\lowerweightsingle_i^T \inpoint + \lowerbiassingle_i \geq 0$ is a half-space constraint in the input space. We conjoin these constraints, along with the restriction to the input region $\indomain$, to obtain a polytope:
\begin{equation}
    \underline{\polytope_{\indomain}}(\outset) := \{\inpoint| \bigwedge_{i =1}^{\specnum} (\underline{\spec_i}(\inpoint)  \geq 0 )\wedge \bigwedge_{i = 1}^{\indim} \boxcon_i(\inpoint) \}
\end{equation}


\begin{tcolorbox}
[width=\linewidth, sharp corners=all, colback=white!95!black, frame empty]

\paragraph{Over-Approximation} Alternatively, to generate a guaranteed over-approximation, we can instead apply LiRPA upper bounding to each $\spec_i$, obtaining upper bounds $\overline{\spec_i}(\inpoint) = \upperweightsingle_i^T \inpoint + \upperbiassingle_i$ for each $i$, such that $\spec_i(\inpoint) \geq 0 \implies \overline{\spec_i}(\inpoint) \geq 0$ for $\inpoint \in \indomain$, and defining the polytope:
\begin{equation}
    \overline{\polytope_{\indomain}}(\outset) := \{\inpoint| \bigwedge_{i =1}^{\specnum} (\overline{\spec_i}(\inpoint)  \geq 0 )\wedge \bigwedge_{i = 1}^{\indim} \boxcon_i(\inpoint) \}
\end{equation}
\end{tcolorbox}

\begin{restatable}{proposition}{propUnder}
$\underline{\polytope_{\indomain}}(\outset), \overline{\polytope_{\indomain}}(\outset)$ are respectively under- and over-approximations to the restricted preimage $\preimage_{\indomain}(\outset)$.
\end{restatable}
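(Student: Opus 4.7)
The plan is to prove each inclusion by unfolding the definitions and invoking the soundness of the LiRPA bounds $\underline{\spec_i}$ and $\overline{\spec_i}$ on the input region $\indomain$. Recall that, since $\outset = \{\outpoint \mid \bigwedge_{i=1}^{\specnum} \linconw_i^T \outpoint + \linconb_i \geq 0\}$ and $\spec_i(\inpoint) = \linconw_i^T \nn(\inpoint) + \linconb_i$, membership $\nn(\inpoint) \in \outset$ is equivalent to $\bigwedge_{i=1}^{\specnum} \spec_i(\inpoint) \geq 0$. Similarly, $\inpoint \in \indomain$ is equivalent to $\bigwedge_{i=1}^{\indim} \boxcon_i(\inpoint)$. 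Hence $\preimage_{\indomain}(\outset) = \{\inpoint \mid \bigwedge_{i=1}^{\specnum} \spec_i(\inpoint) \geq 0 \wedge \bigwedge_{i=1}^{\indim} \boxcon_i(\inpoint)\}$, which puts it in a form directly comparable to $\underline{\polytope_{\indomain}}(\outset)$ and $\overline{\polytope_{\indomain}}(\outset)$.

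For the under-approximation claim, I would take an arbitrary $\inpoint \in \underline{\polytope_{\indomain}}(\outset)$. The box conjuncts $\bigwedge_{i=1}^{\indim} \boxcon_i(\inpoint)$ immediately give $\inpoint \in \indomain$. The LiRPA lower-bounding guarantee then states that, for every $\inpoint \in \indomain$ and every $i$, $\underline{\spec_i}(\inpoint) \leq \spec_i(\inpoint)$; combined with the half-space conjuncts $\underline{\spec_i}(\inpoint) \geq 0$, this yields $\spec_i(\inpoint) \geq 0$ for all $i$, i.e.\ $\nn(\inpoint) \in \outset$. Thus $\inpoint \in \preimage_{\indomain}(\outset)$, establishing $\underline{\polytope_{\indomain}}(\outset) \subseteq \preimage_{\indomain}(\outset)$.

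For the over-approximation claim, I would take $\inpoint \in \preimage_{\indomain}(\outset)$. Then $\inpoint \in \indomain$ gives the box conjuncts $\bigwedge_{i=1}^{\indim} \boxcon_i(\inpoint)$, and $\spec_i(\inpoint) \geq 0$ holds for every $i$. Invoking the LiRPA upper-bounding guarantee $\overline{\spec_i}(\inpoint) \geq \spec_i(\inpoint)$ for $\inpoint \in \indomain$, I obtain $\overline{\spec_i}(\inpoint) \geq 0$ for each $i$, so $\inpoint \in \overline{\polytope_{\indomain}}(\outset)$. This gives $\preimage_{\indomain}(\outset) \subseteq \overline{\polytope_{\indomain}}(\outset)$.

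There is no genuine obstacle here: the proposition is essentially a bookkeeping consequence of the LiRPA soundness property, applied constraint by constraint. The only mild subtlety is that the LiRPA bounds are guaranteed only on $\indomain$, which is why the box conjuncts must be carried explicitly in both polytope definitions; the argument must first extract $\inpoint \in \indomain$ before appealing to the bounds. Once this ordering is respected, each direction reduces to a one-line implication.
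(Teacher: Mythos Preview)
Your proof is correct and follows essentially the same approach as the paper's own proof: both arguments invoke the soundness of the LiRPA bounds $\underline{\spec_i}(\inpoint) \leq \spec_i(\inpoint) \leq \overline{\spec_i}(\inpoint)$ on $\indomain$ and push these through the conjunction of half-space constraints, with the box conjuncts carried along to ensure $\inpoint \in \indomain$. The only difference is presentational: the paper writes each direction as a single implication between conjunctions, while you spell out the element-chasing and explicitly flag the ordering subtlety (extract $\inpoint \in \indomain$ before appealing to the bounds).
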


\begin{proof}
For the under-approximation, the LiRPA bound $\underline{\spec_i}(\inpoint) \leq \spec_i(\inpoint)$ holds for any $\inpoint \in \indomain$ and $i = 1, ..., \specnum$, and so we have $\bigwedge_{i =1}^{\specnum} (\underline{\spec_i}(\inpoint)  \geq 0 )\wedge \inpoint \in \indomain \implies \bigwedge_{i =1}^{\specnum} (\spec_i(\inpoint)  \geq 0 )\wedge \inpoint \in \indomain$, i.e., $\underline{\polytope_{\indomain}}(\outset)$ is an under-approximation to $\preimage_{\indomain}(\outset)$. Similarly, for the over-approximation, $\spec_i(\inpoint) \leq \overline{\spec_i}(\inpoint)$ holds for any $\inpoint \in \indomain$ and $i = 1, ..., \specnum$, and so $ \bigwedge_{i =1}^{\specnum} (\spec_i(\inpoint)  \geq 0 )\wedge \inpoint \in \indomain \implies \bigwedge_{i =1}^{\specnum} (\overline{\spec_i}(\inpoint)  \geq 0 )\wedge \inpoint \in \indomain$, i.e. $\overline{\polytope_{\indomain}}(\outset)$ is an over-approximation to $\preimage_{\indomain}(\outset)$.

\end{proof}

}
\begin{figure}[t]
\centering
\includegraphics[width=0.6\textwidth]{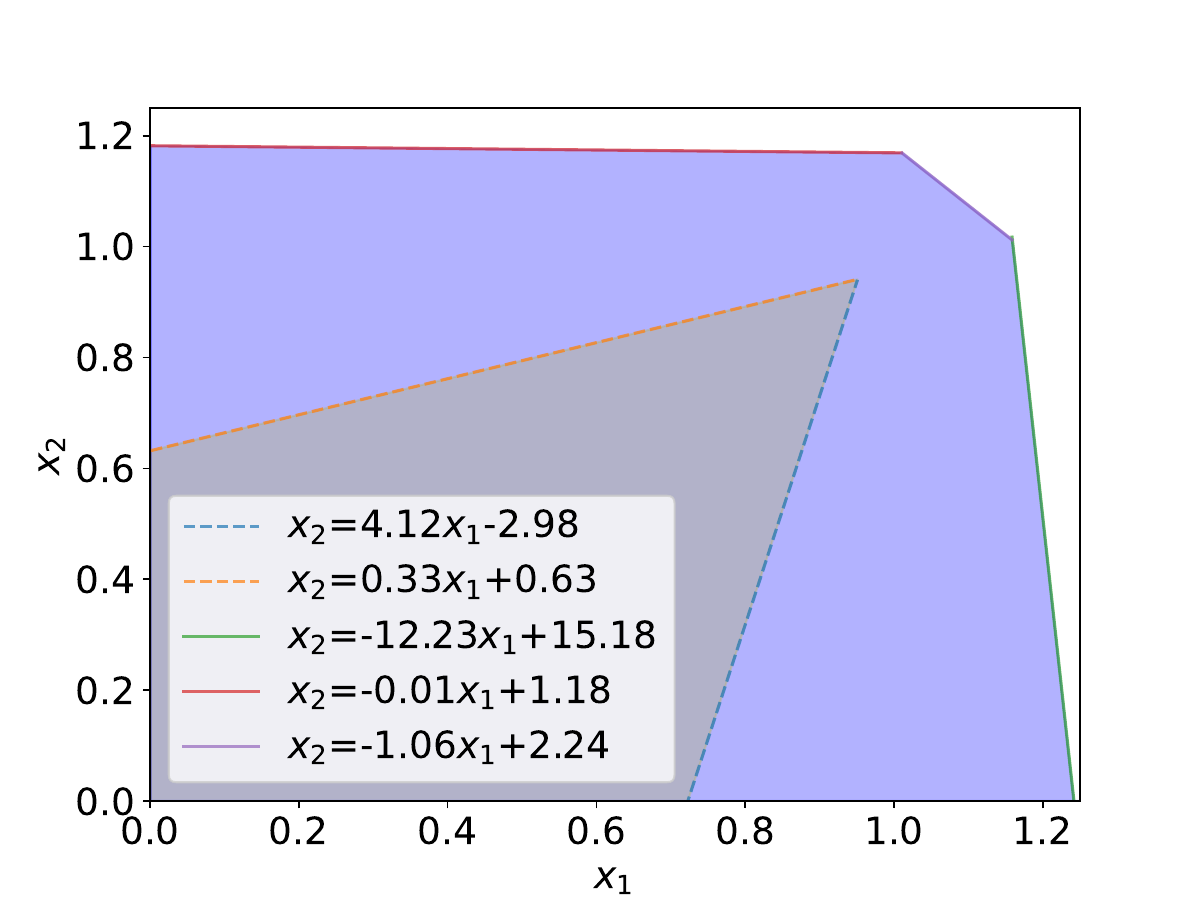}
\caption{Illustration of initial preimage under- and over-approximation for output specification $\wedge_{i \in \{2,3,4\}} (y_1 - y_i \geq 0)$ in the vehicle parking task (for more details see Example \ref{eg:underapprox}). The under-approximation is the polytope in grey, bounded by dotted half-planes, and the over-approximation is the polytope in blue, bounded by solid half-planes. 
\rev{The ground-truth preimage for the output specification is the rectangular region $[0,1]\times [0,1]$.}}
\label{fig:init_under_over}
\end{figure}
\begin{example}\label{eg:underapprox}
Returning to Example \ref{eg:formulation}, the output constraints (for $i = 2, 3, 4$) are given by $\lincon_i = (\outpoint_1 - \outpoint_i \geq 0) = (\linconw_i^{T} \outpoint + \linconb_i \geq 0)$,  where $\linconw_i := e_1 - e_i$ (we use $e_i$ to denote the $i^{\text{th}}$ standard basis vector) and $\linconb_i := 0$. \bw{Applying LiRPA bounding, we obtain the linear lower bounds $\underline{\spec_2}(\inpoint) = -4.12 \inpoint_1 + \inpoint_2 + 2.98 \geq 0; \underline{\spec_3}(\inpoint) = 0.33 \inpoint_1 - \inpoint_2 +0.63 \geq 0$; and $\underline{\spec_4}(\inpoint)$ (not shown).
The intersection of these constraints, shown in Figure \ref{fig:init_under_over} (region in grey), represents an under-approximation to the preimage.}
\xy{Similarly, we can obtain linear upper bounds $\overline{\spec_2}(\inpoint) = -12.23 \inpoint_1 - \inpoint_2 + 15.18 \geq 0; \overline{\spec_3}(\inpoint) = -0.01 \inpoint_1 - \inpoint_2 + 1.18 \geq 0$; and $\overline{\spec_4}(\inpoint)= -1.06 \inpoint_1 - \inpoint_2 + 2.24 \geq 0$; the intersection of those constraints represents an over-approximation to the preimage, as shown in Figure \ref{fig:init_under_over} (region in blue).}
\end{example}

We generate the linear bounds in parallel over the output polyhedron constraints $i = 1, ..., \specnum$ using the \textit{backward mode} LiRPA \citep{zhang2018crown}, and store the resulting \bw{approximating polytope as a list of constraints.}
This highly efficient procedure is used as a sub-routine \texttt{LinearBounds}  when generating either \bw{preimage under-approximations or over-approximations} in Algorithm \ref{alg:genunderapprox}
(Lines \ref{algline:linearbound_lower}, \ref{algline:linearbound_upper}).

\begin{algorithm}[htb]
\small
\caption{Preimage Approximation}\label{alg:main}
\KwIn{Neural network $f$, Input region $\indomain$, Output region $\outset$, Volume \bw{threshold $\threshold$}, Maximum iterations $\maxiterations$, Number of samples $\numsamples$, Boolean $Under$, Boolean $SplitOnInput$ }
\KwOut{Disjoint union of polytopes $\polytopeset_{\textnormal{Dom}}$}
\bw{$\polytope$ $\leftarrow$ GenApprox($\indomain$, $\outset$, $\numsamples$, $Under$)} \tcp*{Initial preimage polytope}\label{algline:initial_polytope}
\bw{Dom $\leftarrow \{(\indomain, \polytope, \text{CalcPriority}(\polytope, Under))\}$ \tcp*{Priority queue}}\label{algline:init_queue}
\bw{\tcp{$\polytopeset_{\textnormal{Dom}}$ is the union of the under/over-approximating polytopes in Dom}}
\While{\bw{$((\textnormal{Under} \textnormal{\textbf{ and }} \textnormal{EstimateVolume}(\polytopeset_{\textnormal{Dom}}) < \threshold ) $ \textnormal{\textbf{or}} $(\neg\textnormal{Under \textbf{and} }\textnormal{EstimateVolume}(\polytopeset_{\textnormal{Dom}}) > \threshold))$ \textnormal{\textbf{and}} $\textnormal{Iterations}\leq \maxiterations$ }\label{algline:while}}{ 
   \bw{ $\indomain_{\text{sub}}, \polytope, \text{Priority}$ $\leftarrow$ Pop(Dom) }\label{algline:refine_start}\tcp*{Subregion with highest priority}
    \If{SplitOnInput}
    {
    $id$ $\leftarrow$ SelectInputFeature($\text{Feature}_{I}, Under$) \tcp*{$\text{Feature}_{I}$ is the set of input features/dimensions}\label{algline:input_selection}
  }
  \Else{
    $id \leftarrow$ SelectReLUNode($\text{Node}_{Z}, Under$)\tcp*{$\text{Node}_{Z}$ is the set of unstable ReLU nodes}\label{algline:relu_selection}
  }
[$\indomain_{sub}^{l}$,$\indomain_{sub}^{u}$] 
 $\leftarrow$ SplitOnNode($\indomain_{sub}$, $id$)\label{algline:split_node_id}\tcp*{Split on the selected node}
  \bw{$\polytope^{l}, \polytope^{u}$$\leftarrow$ GenApprox([$\indomain_{sub}^{l}$,$\indomain_{sub}^{u}$],  $\outset$, $\numsamples, Under$) }\tcp*{Generate preimage}\label{algline:subdomain_polytope}
    \bw{Dom $\leftarrow$ Dom $\cup$ \{($\indomain_{sub}^{l}, \polytope^{l}$,$\text{CalcPriority}(\polytope^{l}, Under)$),  ($\indomain_{sub}^{u}, \polytope^{r}$,$\text{CalcPriority}(\polytope^{r}, Under)$)\}\label{algline:refine_end}}\tcp*{Disjoint polytope} 
    } \label{algline:update_approximation}
\KwRet{\bw{$\polytopeset_{\textnormal{Dom}}$}}
\end{algorithm}

\subsection{Global Branching and Refinement} \label{sec:branching}

\xy{As LiRPA performs crude linear relaxation,
the resulting bounds can be quite loose, even with optimisation over bounding parameters (as we will see in Section \ref{sec:local_opt}), meaning that the (single) 
polytope under-approximation 
or over-approximation 
is unlikely to \bw{be a good approximation to the preimage by itself}.
}
To address this challenge, we employ a divide-and-conquer approach that
iteratively refines our approximation of the preimage. Starting from the initial region $\indomain$ at the root, our method generates a tree by iteratively partitioning a subregion $\indomain_{sub}$ represented at a leaf node into two smaller subregions $\indomain_{sub}^{l}, \indomain_{sub}^{u}$, which are then attached as children to that leaf node. In this way, the subregions represented by all leaves of the tree are disjoint, such that their union is the initial region $\indomain$.

\bw{In order to under-approximate (resp. over-approximate) the preimage}, for each leaf subregion $\indomain_{sub}$ we compute, using  LiRPA bounds, 
\bw{an associated polytope that under-approximates (resp. over-approximates)} the preimage in $\indomain_{sub}$. Thus, irrespective of the number of refinements performed, the union of the \bw{under-approximating} polytopes \bw{(resp. over-approximating)} corresponding to all leaves forms an \emph{anytime} DUP under-approximation \bw{(resp. over-approximation)} $\polytopeset$ to the preimage in the original region $\indomain$.
The process of refining the subregions  continues until an appropriate termination criterion is met.

Unfortunately, even with a moderate number of input dimensions or unstable ReLU nodes, na\"ively splitting along all input- or ReLU-planes quickly becomes computationally intractable. For example, splitting a $\indim$-dimensional hyperrectangle using
bisections along each dimension results in $2^d$ subdomains to approximate. It thus becomes crucial to prioritise the subregions to split, as well as improve the efficiency of the splitting procedure itself.
We describe these in turn.

\paragraph{Subregion Selection.}
We propose a subregion selection strategy that prioritises splitting subregions \bw{with the largest} difference in volume
between the exact preimage $\preimage_{\indomain_{sub}}(\outset)$ and the (already computed) polytope approximation $\polytope_{\indomain_{sub}}(\outset)$
on that subdomain: this indicates ``how much improvement" can be achieved on this subdomain and is implemented as the \texttt{CalcPriority} function in Algorithm \ref{alg:main}.  
Unfortunately, computing the volume of a polytope exactly is a computationally expensive task, requiring specialised tools \citep{Augustin22polyvolume}. 
To overcome this, we employ Monte Carlo estimation of volume computation by sampling $\numsamples$ points $\inpoint_1, ..., \inpoint_\numsamples$ uniformly from the input subdomain $\indomain_{sub}$.
For an under-approximation, we have: 
\begin{align}
\textnormal{Priority}(\indomain_{sub}) &:= \frac{\text{vol}(\indomain_{sub})}{\numsamples} \times \left(\sum_{i = 1}^{\numsamples}\mathds{1}_{\inpoint_i \in \preimage_{\indomain_{sub}}(\outset)} - \sum_{i = 1}^{\numsamples}\mathds{1}_{\inpoint_i \in \underline{\polytope_{\indomain_{sub}}}(\outset)} \right)  \\
&\approx 
\volume(\preimage_{\indomain_{sub}}(\outset)) -  \volume(\underline{\polytope_{\indomain_{sub}}}(\outset)) \label{eqnline:priority_under_volume}
\end{align}



This measures the 
gap between the polytope under-approximation and the optimal approximation, namely, the preimage itself. 

\begin{tcolorbox}
[width=\linewidth, sharp corners=all, colback=white!95!black, frame empty]

\paragraph{Over-Approximation} Similarly, in the case of an over-approximation, we define:
\begin{align}
\textnormal{Priority}(\indomain_{sub}) &:= \frac{\text{vol}(\indomain_{sub})}{\numsamples} \times \left(\sum_{i = 1}^{\numsamples}\mathds{1}_{\inpoint_i \in \overline{\polytope_{\indomain_{sub}}}(\outset)} - \sum_{i = 1}^{\numsamples}\mathds{1}_{\inpoint_i \in \preimage_{\indomain_{sub}}(\outset)} \right)  \\
&\approx 
\volume(\overline{\polytope_{\indomain_{sub}}}(\outset)) - 
\volume(\preimage_{\indomain_{sub}}(\outset)) \label{eqnline:priority_over_volume}
\end{align}
\end{tcolorbox}

We then choose the leaf subdomain with the maximum priority.
This leaf subdomain is then partitioned into two subregions $\indomain_{sub}^l, \indomain_{sub}^u$, \bw{each of which we then approximate with polytopes $\polytope_{\indomain_{sub}^l}(\outset), \polytope_{\indomain_{sub}^u}(\outset)$.}
\rev{Tighter intermediate concrete bounds, and thus tighter linear bounding functions, can often be computed on the partitioned subregions.
While such locally improved bounds do not guarantee a global improvement in preimage volume, 
the polytope approximation on each subregion is typically
refined compared to that on the original subregion. 
In our framework, we design a greedy input splitting strategy (see below) that selects the partition leading to the greatest improvement in preimage volume. Additionally, we perform optimisation (Section \ref{sec:local_opt} and \ref{sec:dual}) over bounding parameters within each subregion to further enhance the tightness of the polytope approximation.}

\rev{Notice that, although we approximate the volumes by sampling, this does not affect the \emph{deterministic} volume guarantees provided by our method, as the Priority is simply a heuristic used to guide the algorithm.}
\rev{In the rest of this subsection, we consider how to split a leaf subregion into two subregions to optimise the volume of the preimage approximation. In particular, we propose two approaches: \emph{input splitting} and \emph{ReLU splitting}.}

\paragraph{Input Splitting.}
Given a subregion (hyperrectangle) defined by lower and upper bounds $\inpoint_i \in [\featurelowerbound_i, \featureupperbound_i]$ for all dimensions $i = 1, ..., \indim$, input splitting partitions it
into two subregions by 
cutting along some feature $i$. 
This splitting procedure will produce two subregions
that are similar to the original subregion, but have
updated bounds $[\featurelowerbound_i, \frac{\featurelowerbound_i + \featureupperbound_i}{2}], [\frac{\featurelowerbound_i + \featureupperbound_i}{2}, \featureupperbound_i]$ for feature $i$ instead. 
A commonly-adopted splitting heuristic is to select the dimension with the longest edge~\citep{Bunel20BaB}, that is, to select feature $i$ with the largest range: $\arg \max_i (\featureupperbound_i-\featurelowerbound_i)$.
However, this method does not perform well in terms of per-iteration volume improvement of the preimage approximation.

\bw{Thus, we propose to greedily select a dimension instead according to a volume-aware heuristic.}
Specifically, for each feature, \bw{we generate approximating polytopes $\polytope^l, \polytope^r$ for the two subregions resulting from the split, and choose the feature that maximises the following priority metric. In the case of under-approximation, when $\polytope^l$ consists of linear lower bounds $\underline{\spec_1}^l, \ldots, \underline{\spec_\specnum}^l$ and $\polytope^{r}$ consists of linear lower bounds $\underline{\spec_1}^r, \ldots, \underline{\spec_\specnum}^r$, we define:
}
\bw{
\begin{equation} \label{eqn:priority_under}
    \text{InputPriority}(\polytope^{l}, \polytope^{r}) := \frac{\text{vol}(\indomain_{sub})}{\numsamples} \left( \sum_{j = 1}^{\numsamples} \sigma\left(\min_{i = 1, ... \specnum} \underline{\spec_i^l}(\inpoint_j)  \right) + \sum_{j = 1}^{\numsamples} \sigma\left(\min_{i = 1, ... \specnum} \underline{\spec_i^r}(\inpoint_j)  \right)  \right)
\end{equation}
where $\sigma$ is the sigmoid function $\sigma(y) := \frac{1}{1 + e^{-y}}$. Intuitively, this is an approximation to the (total) volume of the under-approximating polytopes (e.g., $\inpoint_j$ is in the polytope $\polytope_l$ iff $\min_{i = 1, ... \specnum} \underline{\spec_i^l}(\inpoint_j) > 0$); we should prefer to split on input features that maximise the total volume. However, we found  empirically that, in early iterations of the refinement, the under-approximation could often be empty (as the set $\{\min_{i=1, ..., \specnum} \underline{\spec_i^l}(\inpoint_j) > 0 \}$ for all $i$ lies outside the subregion $\indomain_{sub}$), leading to zero priority for all features. For this reason, we propose to instead use the smooth sigmoid function to measure ``how close" the constraints are to being satisfied for the sampled points, in order to provide signal for the best feature to split on.

\begin{tcolorbox}
[width=\linewidth, sharp corners=all, colback=white!95!black, frame empty]

\paragraph{Over-Approximation} Similarly, if we are generating an over-approximation, then we prioritise according to the following (i.e., minimising volume):
\begin{equation} \label{eqn:priority_over}
    \text{InputPriority}(\polytope^{l}, \polytope^{r}) := -\frac{\text{vol}(\indomain_{sub})}{\numsamples} \left( \sum_{j = 1}^{\numsamples} \sigma\left(\min_{i = 1, ... \specnum} \overline{\spec_i^l}(\inpoint_j)  \right) + \sum_{j = 1}^{\numsamples} \sigma\left(\min_{i = 1, ... \specnum} \overline{\spec_i^r}(\inpoint_j)  \right)  \right)
\end{equation}
\end{tcolorbox}

}

\begin{figure}[t]
     \centering
     \begin{subfigure}[b]{0.32\textwidth}
         \centering
         \includegraphics[width=\textwidth]{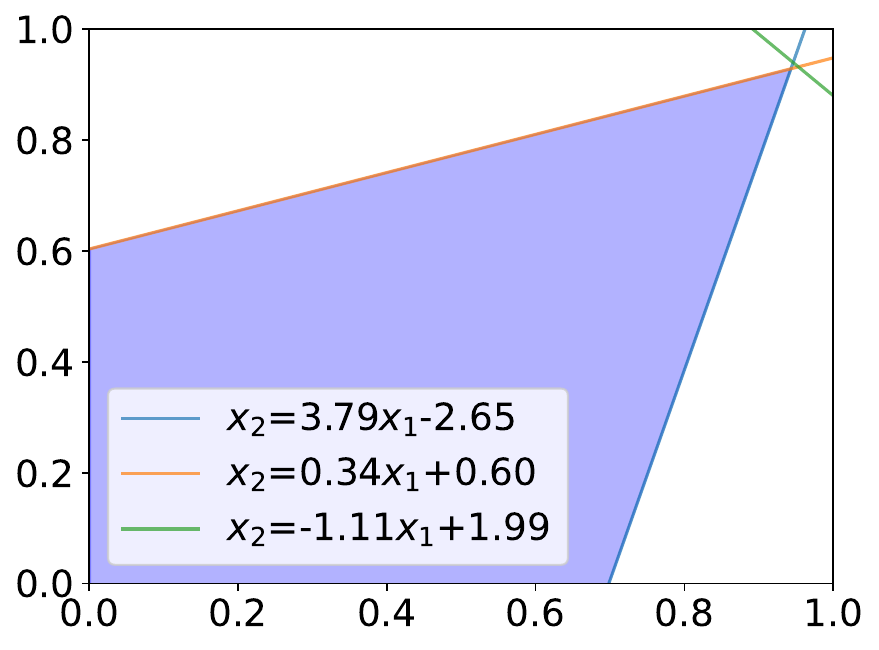}
         \caption{Initial under-approximation}
         \label{fig:init}
     \end{subfigure}
     \begin{subfigure}[b]{0.32\textwidth}
         \centering
         \includegraphics[width=\textwidth]{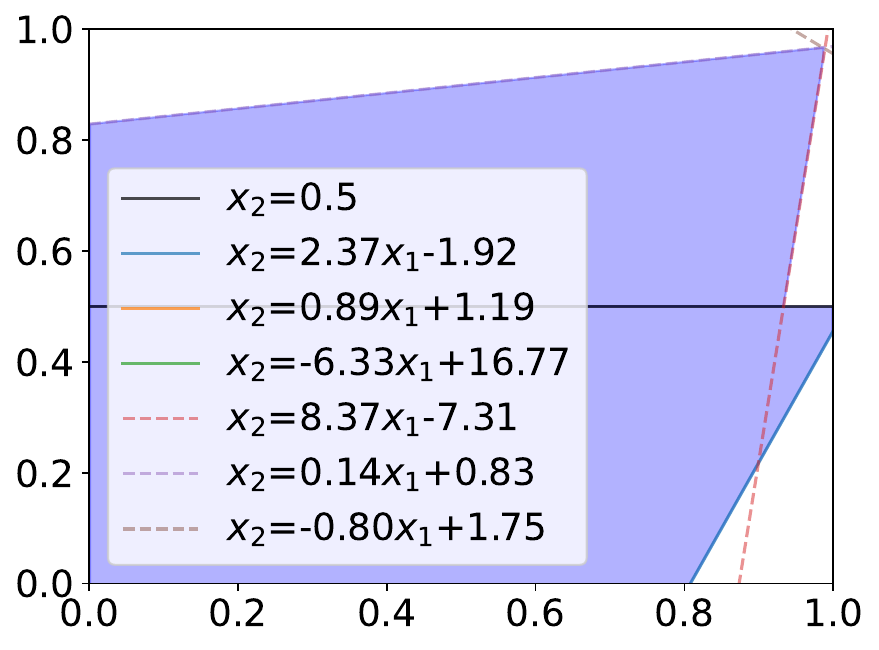}
         \caption{Input split}
         \label{fig:after_branch}
     \end{subfigure}
     \begin{subfigure}[b]{0.32\textwidth}
         \centering
         \includegraphics[width=\textwidth]{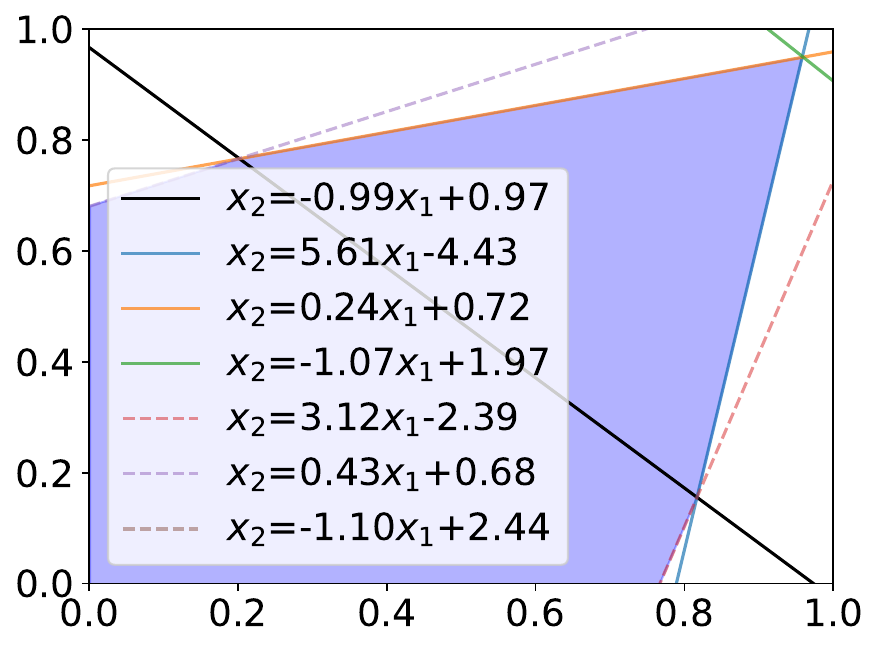}
         \caption{ReLU split}
         \label{fig:relu_split}
     \end{subfigure}
        \caption{Refinement of the initial preimage under-approximation with input and ReLU splitting. Figure \ref{fig:after_branch} and \ref{fig:relu_split} display the refined preimage, i.e., larger volume, after adding input and ReLU splitting planes, where the dotted and solid bounding planes are used to form the polytope on each subregion, respectively.}
        \label{fig:global_branching}
\end{figure}
\begin{figure}[t]
     \centering
     \begin{subfigure}[b]{0.32\textwidth}
         \centering
         \includegraphics[width=\textwidth]{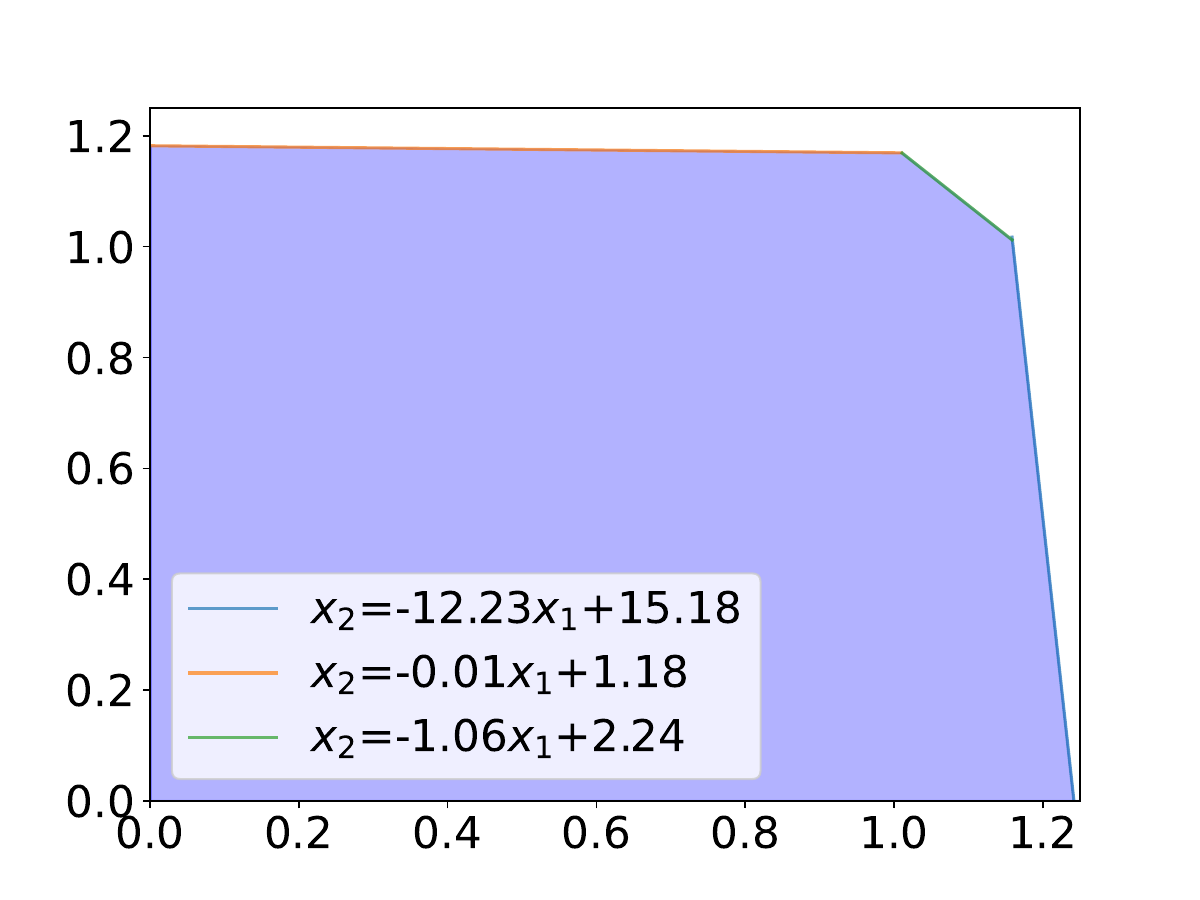}
         \caption{Initial over-approximation}
         \label{fig:init_over}
     \end{subfigure}
     \begin{subfigure}[b]{0.32\textwidth}
         \centering
         \includegraphics[width=\textwidth]{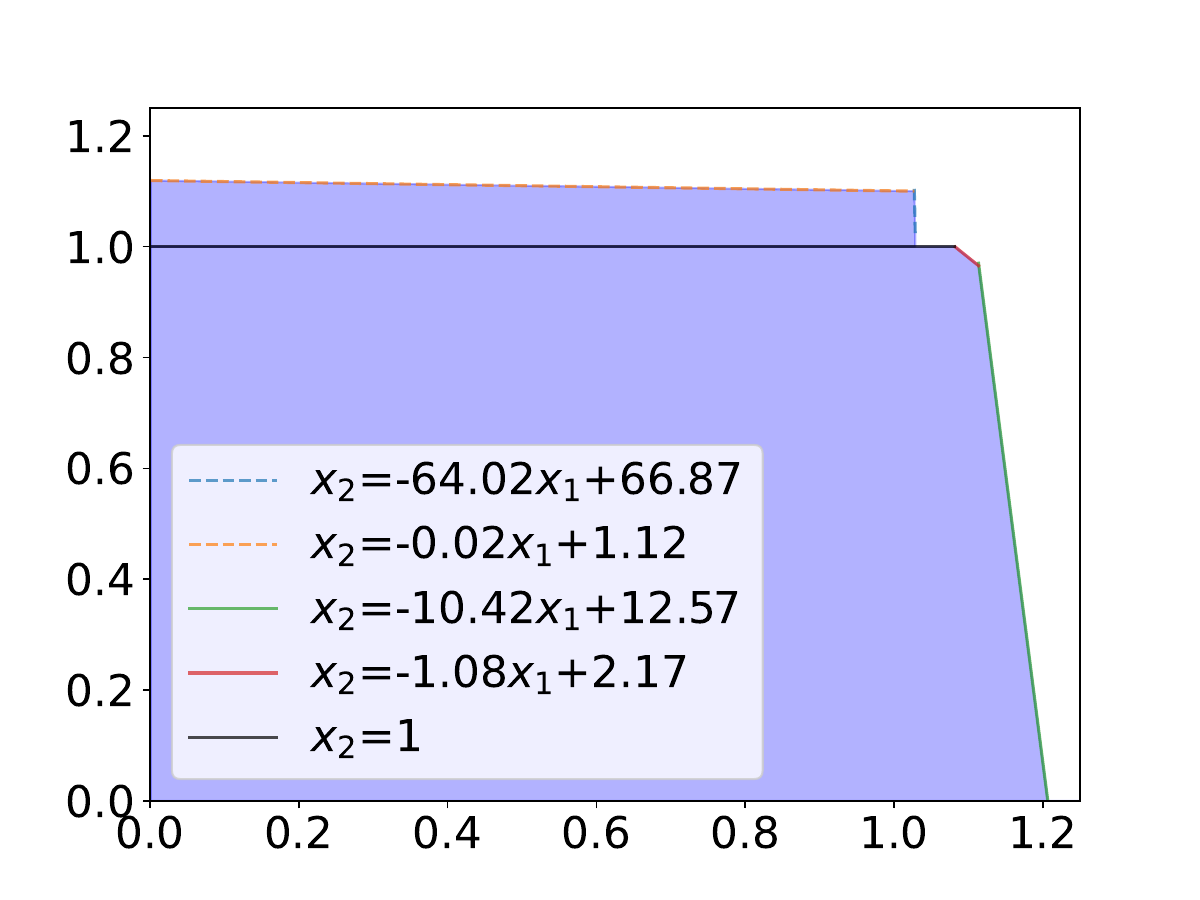}
         \caption{Input split}
         \label{fig:over_input_split}
     \end{subfigure}
     \begin{subfigure}[b]{0.32\textwidth}
         \centering
         \includegraphics[width=\textwidth]{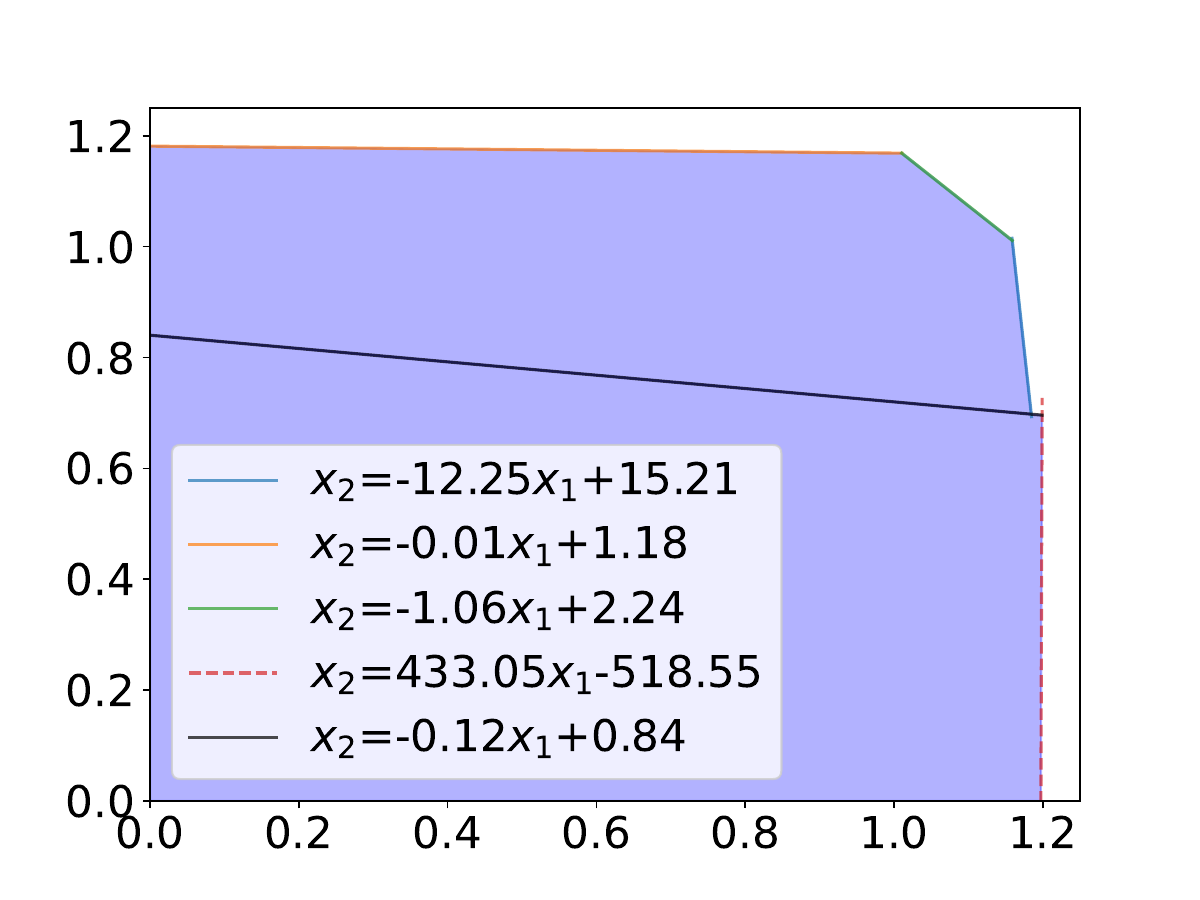}
         \caption{ReLU split}
         \label{fig:over_relu_split}
     \end{subfigure}
        \caption{
        Refinement of the initial preimage over-approximation with input and ReLU splitting. Figure \ref{fig:over_input_split} and \ref{fig:over_relu_split} display the refined preimage, i.e., smaller volume, after adding input and ReLU splitting planes, where the dotted and solid bounding planes are used to form the polytope on each subregion, respectively.}
        \label{fig:global_branching_over}
\end{figure}


\begin{example}
\label{eg:branching}
We revisit 
Example~\ref{eg:formulation}.
\xy{Figure \ref{fig:init} shows the initial polytope under-approximation computed on the input region $\indomain$ before refinement, where each solid line represents the bounding plane for each output specification ($\outpoint_1 - \outpoint_i \geq 0 $). 
Figure \ref{fig:after_branch} depicts the refined approximation by splitting the input region along the vertical axis, where the solid and dashed lines represent the 
bounding planes for the two resulting subregions. It can be seen that the total volume of the under-approximation has improved significantly.} \bw{Similarly, in Figure \ref{fig:init_over}, we show the initial polytope over-approximation before refinement, and in Figure \ref{fig:over_input_split} 
the improved over-approximation after greedy input splitting.}
\end{example}

\textbf{Intermediate ReLU Splitting.}
Refinement through splitting on input features is adequate for low-dimensional input problems such as reinforcement learning agents. However, it may be infeasible to generate sufficiently fine subregions for high-dimensional domains. 
We thus propose an algorithm for ReLU neural networks that uses intermediate ReLU splitting for preimage refinement.
After determining a subregion for refinement, we partition the subregion based upon the pre-activation value of an intermediate unstable neuron $\intervar=0$.
As a result, the original subregion $\indomain_{sub}$ is split into two new subregions
$\indomain^{+}_{\intervar}=\{\inpoint \in \indomain_{sub}~|~\intervar=\preact^{(i)}_j(\inpoint) \ge 0\}$
and $\indomain^{-}_{\intervar}=\{\inpoint \in \indomain_{sub}~|~\intervar=\preact^{(i)}_j(\inpoint) < 0\}$. Note that to obtain the polytope approximation, we can utilise linear lower/upper bounds on $\preact^{(i)}_j(\inpoint)$ as an approximation to the subregion boundary.

\rev{In this procedure, the order of splitting unstable ReLU neurons can greatly influence the quality and efficiency of the refinement. 
Existing heuristic methods for ReLU prioritisation select ReLU nodes that lead to greatest improvement in the final bound (maximum or minimum value) of the neural network $f$ over the input domain~\citep{Bunel20BaB}, e.g., improving $\min_{\inpoint \in \indomain} \nnunder(x)$.
These methods focus on optimising the worst-case output bounds at specific input points such as $x^{\ast}=\arg \min_{\inpoint \in \indomain} \nnunder(x)$.
However, these methods are not well-suited for preimage analysis, 
where our aim is instead to refine the preimage approximation to the exact preimage. Specifically, the objective is to minimise the volume of under-approximations and maximise volumes of over-approximations. Prioritising ReLU nodes based on local worst-case bound improvements may therefore fail to improve the overall precision of the preimage approximation.
}
To this end, 
we compute (an estimate of) the volume difference 
between the split subregions
$\lvert \volume(\indomain^{+}_{\intervar})-\volume(\indomain^{-}_{\intervar})\rvert$, using a single forward pass for a set of sampled data points from the input domain; note that this is bounded above by the total subregion volume $\volume(\indomain_{sub})$.
We then propose to select the ReLU node that minimises this difference.
Intuitively, this choice results in balanced subdomains after splitting. 

\rev{A key advantage of ReLU splitting is that it allows us to replace unstable neuron bounds with precise bounds. For an unstable ReLU neuron $\postact^{(i)}_j(\inpoint) = \max(0, \preact^{(i)}_j(\inpoint))$, we use linear relaxation to bound the post-activation value (as in Equation~\ref{eq:node_bounding}), which yields linear lower and upper bounds of the form $\underline{c} \preact^{(i)}_j(\inpoint) + \underline{d} \leq \postact^{(i)}_j(\inpoint) \leq \overline{c} \preact^{(i)}_j(\inpoint) + \overline{d}$, where $\underline{c}$, $\overline{c}$ denote the slopes and $\underline{d}$, $\overline{d}$ denote the intercepts of the lower and upper bounding functions, respectively.
When a split is introduced, the neuron becomes stable in each subdomain, and the exact linear function $\postact^{(i)}_j(\inpoint) = \preact^{(i)}_j(\inpoint)$ and $\postact^{(i)}_j(\inpoint) = 0$ can be used in place of its linear relaxation, as shown in Figure \ref{fig:linear_relaxation} (unstable to stable).} 
This can typically tighten the approximation on each subdomain as the linear relaxation errors for this unstable neuron are removed for each subdomain and substituted with the exact symbolic function for backward propagation.

\begin{example}
\label{eg:reluSplit}
We now apply our algorithm with ReLU splitting to the 
problem in Example~\ref{eg:formulation}.
Figure \ref{fig:relu_split} shows the refined preimage polytope by adding the splitting plane (black solid line) along the direction of a selected unstable ReLU node. 
\xy{Compared with Figure \ref{fig:init}, we can see that the volume of the approximation increased.} \bw{Similarly, in Figure \ref{fig:over_relu_split}, we show the improved over-approximation after ReLU splitting, compared to the initial over-approximation \ref{fig:init_over}.}

\end{example}

\xy{\textbf{Combining Preimage Polytopes.}
As the final step, we combine the refined symbolic approximations on each subregion to compute the disjoint polytope union for the desired preimage of the output property.
Note that the input splitting (hyper)planes naturally yield disjoint subregions. 
We can directly compute the final disjoint polytope union by combining the preimage polytopes of each subregion, where the splitting planes serve as part of the constraints that form the preimage polytope, e.g., two disjoint polytopes with the splitting constraints $x_2-0.5\geq 0$ and $-x_2+0.5\geq 0$, respectively, partitioned by $x_2=0.5$ in Figure \ref{fig:after_branch}.
In the case of ReLU splitting,
as each ReLU neuron represents a complex non-linear function with respect to the input, we cannot directly add the constraints introduced by ReLU splitting to the polytope representation.
Instead, we compute the linear upper or lower bounding functions of the non-linear constraint represented by the ReLU neuron, i.e., $\underline{\preact^{(i)}_j}(\inpoint) \le \preact^{(i)}_j(\inpoint) \le \overline{\preact^{(i)}_j}(\inpoint)$.
The constraints introduced by the linear bounding functions, i.e., $\underline{\preact^{(i)}_j}(\inpoint) \ge 0$  and $-\underline{\preact^{(i)}_j}(\inpoint) \ge 0$, can then be added to form disjoint polytopes. For instance, as shown in Figure \ref{fig:relu_split}, two disjoint polytopes are formed with the additional splitting constraints $-0.99x_1-x_2+0.97 \ge 0$ and $0.99x_1+x_2-0.97 \ge 0$, respectively,
partitioned by the linear splitting plane $x_2=-0.99x_1+0.97$ (exact linear function of the selected ReLU neuron in this case). 
In fact, any linear function between the linear upper and lower bounding functions of the ReLU neuron serves as a valid splitting (hyper)plane to form disjoint polytopes.
}

\rev{\textbf{Input vs ReLU Splitting.} 
Input and ReLU splitting are alternative strategies for splitting into smaller subregious that users can employ for different application scenarios, but not both simultaneously (Lines \ref{algline:refine_start}-\ref{algline:relu_selection} in Algorithm \ref{alg:main}). The choice of which to use should primarily be guided by the dimensionality of the problem; we found in our experiments that input splitting is quite effective for low-dimensional problems (see Section \ref{sec:inputVSrelu}), while ReLU splitting is necessary to scale to high dimensions. This is consistent with findings from LiRPA-based verification tools such as $\alpha$-$\beta$-CROWN \citep{wang2021beta} and our comparison experiments in Section \ref{sec:inputVSrelu}.}

\subsection{Local Optimization} 
\label{sec:local_opt}

One of the key components behind the effectiveness of LiRPA-based bounds is the ability to efficiently improve the 
tightness of the bounding function by optimising the relaxation parameters $\nnslope$ via projected gradient descent. 
In the context of local robustness verification,  
the goal is to optimise the 
\emph{concrete} (scalar) lower or upper bounds 
over the (sub)region $\indomain_{sub}$ \citep{xu2020automated}, i.e.,
$\min_{\inpoint \in \indomain_{sub}} \lowerweight(\nnslope) \inpoint + \lowerbias(\nnslope)$ in the case of lower bounds, where we explicitly note
the dependence of the linear coefficients on $\nnslope$.
In our case, we are instead interested in optimising $\nnslope$ to refine
the polytope 
\bw{approximation, that is, increase the volume of under-approximations and decrease the volume of over-approximations (to the exact preimage).}

As before, we employ statistical estimation; we sample $\numsamples$ points $\inpoint_1, ..., \inpoint_\numsamples$ uniformly from the input domain $\indomain_{sub}$
then employ Monte Carlo estimation for the volume of the approximating polytope. \bw{In the case of under-approximation, we have: } 
\bw{
\begin{align}\label{eq:stat_est}
&\widehat{\volume}(\underline{\polytope_{\indomain_{sub}, \nnslope}}(\outset)) = \frac{\text{vol}(\indomain_{sub})}{\numsamples} \times \sum_{i = 1}^{\numsamples}\mathds{1}_{\inpoint_i \in \underline{\polytope_{\indomain_{sub}, \nnslope}}(\outset)}
\end{align}}
%
where we highlight the dependence of  \bw{$\underline{\polytope_{\indomain_{sub}}}(\outset) = \{\inpoint| \bigwedge_{i =1}^{\specnum} \underline{\spec_i}(\inpoint, \nnslope_i)  \geq 0 \wedge \bigwedge_{i = 1}^{\indim} \boxcon_i(\inpoint) \}$ on $\nnslope = (\nnslope_1, ..., \nnslope_\specnum)$,  
and
$\nnslope_i$ are the $\nnslopesingle$-parameters for the linear relaxation of the neural network $\spec_i$ corresponding to the $i^{\textnormal{th}}$ half-space constraint in $\outset$.}
However,
this is still non-differentiable w.r.t.\ $\nnslope$ \rev{due to the indicator function}. We now show how to derive a differentiable relaxation, which is amenable to gradient-based optimization: 
\bw{
\begin{align}
    \widehat{\volume}(\underline{\polytope_{\indomain_{sub}, \nnslope}}(\outset)) &= \frac{\text{vol}(\indomain_{sub})}{\numsamples} \sum_{j = 1}^{\numsamples}\mathds{1}_{\inpoint_j \in \underline{\polytope_{\indomain_{sub}, \nnslope}}(\outset)} 
    = \frac{\text{vol}(\indomain_{sub})}{\numsamples} \sum_{j = 1}^{\numsamples} \mathds{1}_{\min_{i = 1, ... \specnum} \underline{\spec_i}(\inpoint_j, \nnslope_i) \geq 0}\\
    & \approx \frac{\text{vol}(\indomain_{sub})}{\numsamples} \sum_{j = 1}^{\numsamples} \sigma\left(\min_{i = 1, ... \specnum} \underline{\spec_i}(\inpoint_j, \nnslope_i)\right) \\
    & \approx \frac{\text{vol}(\indomain_{sub})}{\numsamples} \sum_{j = 1}^{\numsamples} \sigma\left(-\textnormal{LSE}( -\underline{\spec_1}(\inpoint_j, \nnslope_1), ..., -\underline{\spec_\specnum}(\inpoint_j, \nnslope_\specnum))\right) \label{eqnline:diff_approximation_alpha}
\end{align}
}
\bw{As before, we use a sigmoid relaxation to approximate the volume. However, the minimum function is still non-differentiable.
Thus}, we approximate the minimum over specifications using the log-sum-exp (LSE) function. The log-sum-exp function is defined by $LSE(y_1, ..., y_{\specnum}) := \log(\sum_{i = 1, ..., \specnum} e^{y_i})$, and is a differentiable approximation to the maximum function; we employ it to approximate the minimisation by adding the appropriate sign changes. The final expression is now a differentiable function of $\nnslope$.


Then the goal is to maximise the volume of the under-approximation with respect to $\nnslope$: 
\begin{equation}
    \textnormal{Loss}(\nnslope) = 
    - \widehat{\volume}(\underline{\polytope_{\indomain_{sub}, \nnslope}}(\outset))
\end{equation}
We employ this as the loss function in Algorithm \ref{alg:genunderapprox} (Line \ref{algline:loss_under}) for generating a polytope approximation, and optimise volume using projected gradient descent. 

\begin{tcolorbox}
[width=\linewidth, sharp corners=all, colback=white!95!black, frame empty]

\paragraph{Over-Approximation} In the case of an over-approximation (Line \ref{algline:loss_over} of Algorithm \ref{alg:genunderapprox}), we instead aim to minimise the volume of the approximation:
\begin{equation}
    \textnormal{Loss}(\nnslope) = 
    \widehat{\volume}(\overline{\polytope_{\indomain_{sub}, \nnslope}}(\outset))
\end{equation}

\end{tcolorbox}

\begin{figure}[t]
     \centering
     \begin{subfigure}[b]{0.4\textwidth}
         \centering
         \includegraphics[width=\textwidth]{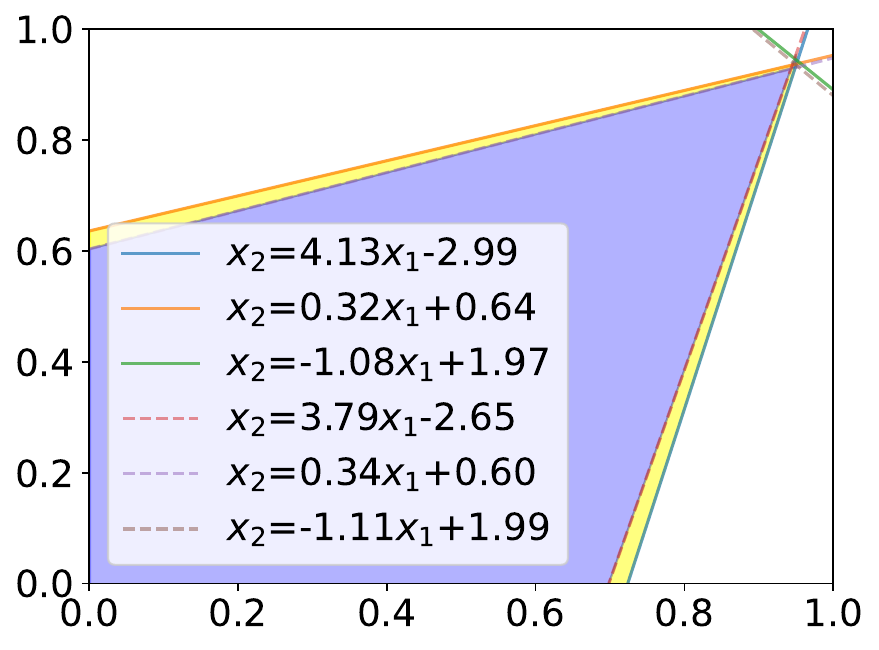}
         \caption{Optimised under-approximation}
         \label{fig:after_optim_under}
     \end{subfigure}
     \begin{subfigure}[b]{0.4\textwidth}
         \centering
         \includegraphics[width=\textwidth]{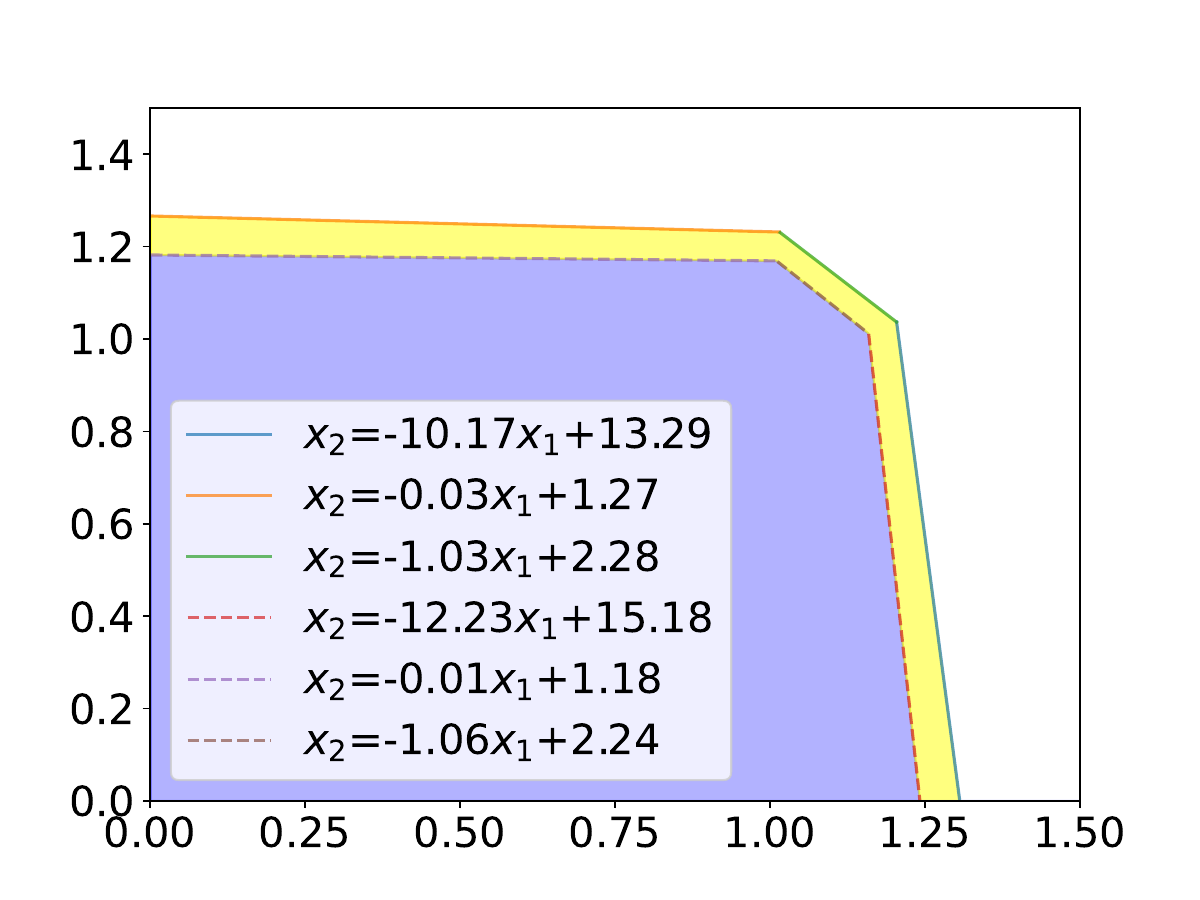}
         \caption{Optimised over-approximation}
         \label{fig:after_optim_over}
     \end{subfigure}
        \caption{\rev{Local optimisation for preimage under- and over-approximations. In Figure \ref{fig:after_optim_under}, the blue polytope represents the preimage under-approximation before optimisation and the yellow region illustrates the expanded polytope volume after optimisation.
        In Figure \ref{fig:after_optim_over}, the blue polytope represents the over-approximation before optimisation. The yellow region shows the reduced polytope volume after optimisation.}}
        \label{fig:local_optim_under_over}
\end{figure}
\begin{example}
\label{eg:optim}
    We revisit 
    Example~\ref{eg:formulation}.
    \rev{Figure \ref{fig:after_optim_under} shows the computed under-approximations before and after local optimisation.} 
    We can see that the bounding planes for all three specifications are optimised, \bw{such that the volume of the approximation has increased. \rev{Similarly, in Figure \ref{fig:after_optim_over} we show the over-approximations before and after optimisation;} it can be seen that the volume of the over-approximation has decreased. 
    }
\end{example}

\xy{\subsection{Optimisation of Lagrangian Relaxation }\label{sec:dual}

\bw{Previously, in Section \ref{sec:branching}, we proposed a preimage refinement method that adds intermediate ReLU splitting planes to tighten the bounds of a selected individual neuron.}
\bw{However, intermediate bounds for other neurons are not updated based on the newly added splitting constraint.} 
In the following, 
we first discuss the impact of stabilising an intermediate ReLU neuron from two different perspectives.
We then present an optimisation approach leveraging Lagrangian relaxation to enforce the splitting constraint on refining the preimage.

\textbf{Effect of Stabilisation of Intermediate Neurons.}
Our previous approach of \cite{zhang23preimage} exploits one level of bound tightening after ReLU splitting: the substitution of relaxation functions with exact linear functions for the individual neuron. 
Specifically, assume an intermediate (unstable) neuron $\intervar$ ($=\preact^{(i)}_j(\inpoint)$) is selected to split the input (sub)region $\indomain$ into two subregions $\indomain^{+}_{\intervar}=\{\inpoint \in \indomain~|~\intervar \ge 0\}$
and $\indomain^{-}_{\intervar}=\{\inpoint \in \indomain~|~\intervar < 0\}$.
For each subregion, the linear bounding functions of the nonlinear activation function $\postact^{(i)}_j(\intervar)$, as shown in Figure \ref{fig:linear_relaxation} (unstable mode), are then substituted with the exact ones, eliminating relaxation errors on the particular neuron.
Another effect, potentially more impactful, is the bound tightening of every other intermediate neuron.
Intuitively, one can tighten the intermediate bounds \bw{of (and thus stabilise)} the other unstable neurons, since we are \bw{restricted to} a smaller input region with the added splitting plane.
A straightforward solution to enforce the effect of the splitting constraint is to call a regular LP solver to compute the new lower and upper bounds for every intermediate ReLU neuron under the splitting constraints.
Naturally, this is computationally expensive ($2N$ LP calls where $N$ is the number of ReLU neurons).


\textbf{Refinement with Optimisation of Lagrangian Relaxation.} 
In order to derive tighter preimage approximations without explicitly introducing LP solver calls, we propose to \bw{adapt}
Lagrangian optimisation techniques \bw{\citep{wang2021beta} to preimage generation.}

\bw{Consider first the case of generating under-approximations}.
Without loss of generality, we focus on preimage generation for the $k$-th output specification constraint, $\underline{\spec_k}(\inpoint) = \lowerweightsingle_k^T \inpoint + \lowerbiassingle_k$. We will drop the subscript $k$ for simplicity.

Consider the subregion where we have $\intervar \le 0$.
To tighten the bounding plane $\underline{\spec}(x)$ of the preimage under the splitting constraint $\intervar \le 0$, we introduce the Lagrange multiplier, parameterized as $\dualvar (\ge 0)$, to enforce its effect throughout the neural network. 
\rev{Specifically, let us write 
 $g(\bm{z}^{(i)})$ to denote the neural network function mapping from the pre-activation neurons $\bm{z}^{(i)}$ of layer $i$ to the output.  In the typical backward LiRPA propagation through layer $i$, we have:}
\begin{equation}
    g(\bm{z}^{(i)}) \geq \bm{\lowerweight}^{(i)} \bm{z}^{(i)}  + \lowerbias^{(i)}
\end{equation}
Now, we add the splitting constraint using a Lagrange multiplier and obtain a Lagrangian relaxation of the original problem as follows:
\begin{equation}\label{eq:lagrangian}
    g(\bm{z}^{(i)}) \geq \max_{\dualvar \ge 0} \,\bm{\lowerweight}^{(i)} \bm{z}^{(i)}  + \lowerbias^{(i)} + \dualvar\intervar 
\end{equation}
Note that 
$\max_{\dualvar \ge 0} \dualvar\intervar = 0$, and thus Equation \ref{eq:lagrangian} holds in the universally quantified region. 
For the other case where $\intervar \ge 0$, we can obtain a sound lower bound similarly by changing the sign for the additional splitting constraint:
\begin{equation}\label{eq:lagrangian_pos}
    g(\bm{z}^{(i)}) \geq \max_{\dualvar \ge 0} \,\bm{\lowerweight}^{(i)} \bm{z}^{(i)}  + \lowerbias^{(i)} - \dualvar\intervar 
\end{equation}}

We then propagate this backwards through the network to obtain a valid lower bound with respect to the input layer $x$:
\begin{equation}\label{eq:lagrangian_input}
    g(x) \geq \underline{\spec}(x) = \max_{\nndual \ge 0} \,\lowerweight(\nnslope, \nndual) \inpoint + \lowerbias(\nnslope, \nndual)
\end{equation}
Here, we explicitly note the dependence of the linear coefficients on $\nndual$, which denotes the vector of $\dualvar$ introduced for all split neurons. \rev{These coefficients can be computed using a single (modified) LiRPA backward pass, where we add the Lagrange multipliers in each step as in Equations \ref{eq:lagrangian}, \ref{eq:lagrangian_pos}.}
Once we obtain the bounding plane for each half-space constraint in $\outset$, the preimage polytope can be formulated as $\polytope_{\indomain}(\outset) = \{\inpoint| \bigwedge_{i =1}^{\specnum} \underline{\spec_i}(\inpoint, \nnslope_i, \nndual_i)  \geq 0 \wedge \bigwedge_{i = 1}^{\indim} \boxcon_i(\inpoint) \}$.

Similarly to the optimisation over relaxation parameters $\nnslope$, we can then optimise $\bm{\beta}$ to maximise the preimage volume.
Our differentiable preimage volume estimate is given by:
\begin{equation}
\widehat{\volume}(\polytope_{\indomain_{sub}, \nnslope, \nndual}(\outset)) = 
\frac{\text{vol}(\indomain)}{\numsamples} \sum_{j = 1}^{\numsamples} \sigma\left(-\textnormal{LSE}( -\underline{\spec_1}(\inpoint_j, \nnslope_1, \nndual_1), ..., -\underline{\spec_\specnum}(\inpoint_j, \nnslope_\specnum, \nndual_\specnum))\right)
\end{equation}
where we have added the dependence on the Lagrange multipliers $\nndual$ to Equation \ref{eqnline:diff_approximation_alpha}. Intuitively, the additional splitting constraint enforced by the Lagrangian relaxation reduces the input space for maximising the preimage volume, which allows a tighter preimage bounding plane for the subregion.
In the case where all $\bm{\beta}$ coefficients are zero, this corresponds precisely to the 
previous standard LiRPA bound with $\nnslope$ parameters from Section \ref{sec:local_opt}. We then maximise the volume estimate of the under-approximation with the following loss function  in Algorithm \ref{alg:genunderapprox} (Line \ref{algline:loss_under}):
\begin{equation}
    \textnormal{Loss}(\nnslope, \nndual) = 
    - \widehat{\volume}(\underline{\polytope_{\indomain_{sub}, \nnslope, \nndual}}(\outset))
\end{equation}

\begin{tcolorbox}
[width=\linewidth, sharp corners=all, colback=white!95!black, frame empty]

\paragraph{Over-Approximation} In the case of an over-approximation (Line \ref{algline:loss_over} of Algorithm \ref{alg:genunderapprox} ), we instead aim to minimise the volume of the approximation:
\begin{equation}
    \textnormal{Loss}(\nnslope, \nndual) = 
    \widehat{\volume}(\overline{\polytope_{\indomain_{sub}, \nnslope, \nndual}}(\outset))
\end{equation}

\end{tcolorbox}

\begin{figure}[t]
     \centering
         \includegraphics[width=0.4\textwidth]{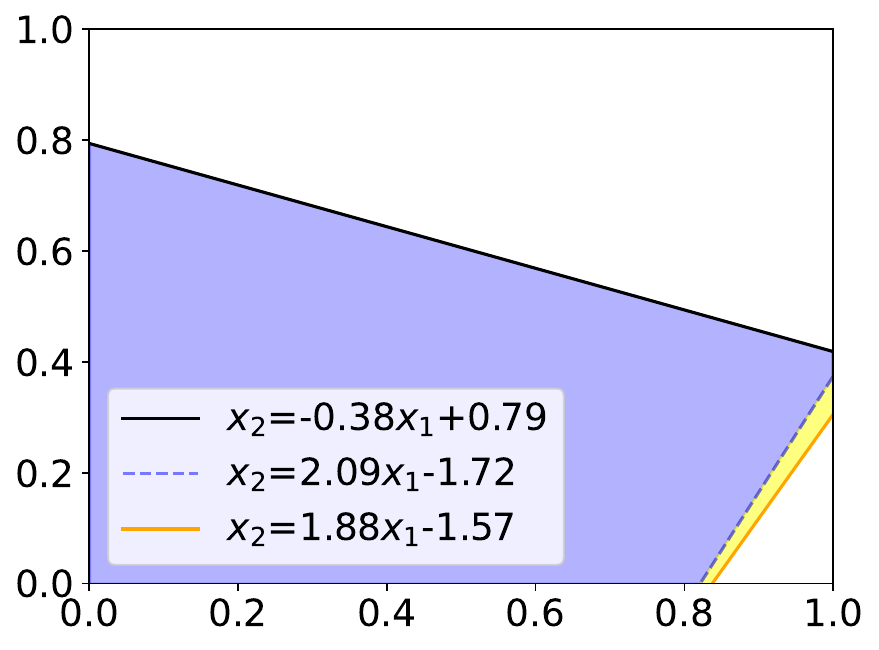}
        \caption{Optimisation of Lagrangian relaxation for preimage refinement. The preimage polytope in blue represents the under-approximation before Lagrangian optimisation and the yellow region displays the expanded polytope after optimisation.
        Details in Example \ref{eg:dual_optim}.
        }
        \label{fig:dual_optim}
\end{figure}

\begin{example}
\label{eg:dual_optim}
We now apply our  
optimisation method over Lagrangian relaxation to
Example~\ref{eg:formulation}.
\rev{Figure \ref{fig:dual_optim} shows the preimage polytope before and after Lagrangian optimisation, respectively, where the splitting plane of the selected unstable ReLU node is marked with a black solid line.
Note that the blue polytope before Lagrangian optimisation in Figure \ref{fig:dual_optim} is computed
by removing the relaxation errors of the selected unstable ReLU node, where the symbolic upper/lower bounding functions are substituted with the exact linear functions.
The preimage is further refined by enforcing the added splitting constraint $\intervar \le 0$ for one subdomain throughout the neuron network, which allows tighter preimage approximation (vs the blue polytope tightened via stabilizing a single neuron).}
\end{example}

\subsection{Overall Algorithm}\label{sec:overall}

Our overall preimage approximation method is summarised in Algorithm \ref{alg:main}. It takes as input a neural network $f$, input region $\indomain$, output region $\outset$, target polytope volume threshold $v$ (a proxy for approximation precision), maximum number of iterations $\maxiterations$, number of samples $\numsamples$ for statistical estimation, and Boolean variables indicating (i) whether to return an under-approximation or over-approximation and (ii) whether to use input or ReLU splitting, and returns a disjoint polytope union 
\bw{$\polytopeset_{\text{Dom}}$ representing a guaranteed under-approximation (or over-approximation) to the preimage.}

The algorithm initiates and maintains a priority queue of (sub)regions according to Equation~\ref{eqn:priority_under}.
\xy{The \textit{initialisation} step (Lines \ref{algline:initial_polytope}-\ref{algline:init_queue}) generates an initial polytope approximation of the whole region using Algorithm \ref{alg:genunderapprox} (Sections \ref{sec:poly_gen}, \ref{sec:local_opt}, \ref{sec:dual})}\bw{, with priority calculated (\texttt{CalcPriority}) according to Equations \ref{eqnline:priority_under_volume}, \ref{eqnline:priority_over_volume}}. Then, the \textit{preimage refinement} loop (Lines \ref{algline:while}-\ref{algline:refine_end}) partitions a subregion in each iteration, with the preimage restricted to the child subregions then being re-approximated (Line \ref{algline:split_node_id}-\ref{algline:subdomain_polytope}). In each iteration, we choose the region to split (Line \ref{algline:refine_start}) and the splitting plane to cut on (Line \ref{algline:input_selection} for input split and Line \ref{algline:relu_selection} for ReLU split), as
detailed in Section \ref{sec:branching}. The preimage subregion queue is then updated by computing the priorities for each subregion by approximating their volume (Line \ref{algline:update_approximation}). The loop terminates and the approximation is returned when the target volume \bw{threshold $v$} or maximum iteration limit $R$ is reached. 



\subsection{Quantitative Verification} \label{sec:verif}

\xy{We now show how to use our efficient preimage under-approximation method (Algorithm \ref{alg:main}) to verify a given quantitative property $(\inset, \outset, \proportion)$, where $\outset$ is a polyhedron, $\inset$ a polytope and $p$ the desired proportion threshold, summarised in Algorithm \ref{alg:verify}. 
Note that preimage over-approximation cannot be applied for sound quantitative verification as the approximation may contain false regions outside the true preimage. 
To simplify, assume that $\inset$ is a hyperrectangle, so that we can take $\indomain = \inset$.  
We discuss the case of general polytopes at the end of this section.}
\begin{algorithm}[tb]
\caption{Quantitative Verification}\label{alg:verify}
\KwIn{Neural network $f$, Property $(\inset, \outset, \proportion)$, Maximum iterations $\maxiterations$}
\KwOut{Verification result $\in \{\textnormal{True, False, Unknown}\}$}

$\volume(\inset) \gets \textnormal{EstimateVolume}(\inset)$\;
$\indomain \gets \textnormal{OuterBox}(\inset)$ \label{algline:outerbox}\tcp*{For general polytopes $\inset$}
$\polytopeset \gets \textnormal{InitialRun}(f, \indomain, \outset)$\;
\While{$\textnormal{Iterations} \leq \maxiterations$} {
    $\polytopeset \gets \textnormal{Refine}(f, \polytopeset, \indomain,\outset)$\;
    \If{$\textnormal{EstimateVolume}(\polytopeset) \geq \proportion \times \volume(\inset)$}{
    \KwRet{\textnormal{True}}
    }
    \If{\textnormal{AllReLUSplit}}{
    \KwRet{\textnormal{False}}
    }
}
\KwRet{\textnormal{Unknown}}
\end{algorithm}
We utilise Algorithm \ref{alg:main} by setting the \rev{volume threshold $v$} to $\proportion \times \volume(\inset)$, such that we have $\frac{\volume(\polytopeset)}{\volume(\inset)} \geq \proportion$ if the algorithm terminates before reaching the maximum number of iterations. 
If the final preimage polytope volume $\volume(\polytopeset) \geq \proportion \times \volume(\inset)$, then the property is verified. Otherwise, we continue running the preimage refinement. 
If the refinement loop has stabilised all ReLU neurons and the volume threshold is still not achieved, the property is falsified.

In Algorithm \ref{alg:verify}, \texttt{InitialRun} generates an initial under-approximation to the preimage as in Lines \ref{algline:initial_polytope}-\ref{algline:init_queue} of Algorithm 1, and \texttt{Refine} performs one iteration of approximation refinement (Lines \ref{algline:refine_start}-\ref{algline:refine_end}). Termination occurs when we have verified or falsified the quantitative property, or when the maximum number of iterations has been exceeded.


\begin{restatable}{proposition}{propSound}\label{prop:sound}
    Algorithm \ref{alg:verify} is sound for quantitative verification with input splitting.
\end{restatable}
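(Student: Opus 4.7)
The plan is to establish soundness by proving two things: (i) the DUP approximation $\polytopeset$ maintained by Algorithm \ref{alg:verify} is, at every iteration, a genuine under-approximation of the restricted preimage $\preimage_{\inset}(\outset)$, and (ii) the termination criterion $\volume(\polytopeset) \geq \proportion \cdot \volume(\inset)$ then forces the quantitative property to hold. The proof is essentially an invariant argument combined with the earlier under-approximation proposition for single polytopes.

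First I would set up the invariant: after each iteration of the main loop, the polytopes stored in \textnormal{Dom} correspond bijectively to hyperrectangle subregions $\{\indomain_{sub}\}$ whose union equals $\inset$ and which are pairwise disjoint (except on a measure-zero boundary). The base case holds because Line \ref{algline:init_queue} initialises Dom with the single pair $(\inset, \polytope_{\inset})$. For the inductive step, input splitting at Line \ref{algline:split_node_id} bisects a hyperrectangle $\indomain_{sub}$ along a single coordinate into $\indomain_{sub}^{l}, \indomain_{sub}^{u}$, which are again hyperrectangles, are disjoint up to a measure-zero face, and whose union is $\indomain_{sub}$. Replacing $(\indomain_{sub}, \polytope)$ in Dom by these two child entries therefore preserves both the partition property and axis-alignedness.

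Next I would use the cited proposition ($\underline{\polytope_{\indomain_{sub}}}(\outset) \subseteq \preimage_{\indomain_{sub}}(\outset)$) to lift the per-subregion under-approximation property to the union. Because the subregions partition $\inset$, we have
\begin{equation}
\polytopeset_{\textnormal{Dom}} \;=\; \bigsqcup_{(\indomain_{sub},\polytope)\in\textnormal{Dom}} \underline{\polytope_{\indomain_{sub}}}(\outset) \;\subseteq\; \bigsqcup_{\indomain_{sub}} \preimage_{\indomain_{sub}}(\outset) \;=\; \preimage_{\inset}(\outset),
\end{equation}
and disjointness of the constituent polytopes follows from disjointness of their enclosing hyperrectangles. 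Taking volumes and dividing by $\volume(\inset)$ gives $\volume(\polytopeset_{\textnormal{Dom}})/\volume(\inset) \leq \volume(\preimage_{\inset}(\outset))/\volume(\inset)$.

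Finally, soundness follows immediately: if Algorithm \ref{alg:verify} returns \textnormal{True}, then $\volume(\polytopeset_{\textnormal{Dom}}) \geq \proportion \cdot \volume(\inset)$, and the chain of inequalities above yields $\volume(\preimage_{\inset}(\outset))/\volume(\inset) \geq \proportion$, which is exactly the definition of the property $(\inset,\outset,\proportion)$ holding. The main subtlety, and what I would flag as the principal obstacle, is the use of \textnormal{EstimateVolume} in the termination check: the Monte Carlo estimate is not itself a certified lower bound on $\volume(\polytopeset)$. For a strictly sound statement one must interpret the volume check as an exact polytope-union volume computation (feasible since the constituents are explicit polytopes contained in disjoint hyperrectangles, e.g.\ via the tool of \citet{Augustin22polyvolume}), or equivalently replace \textnormal{EstimateVolume} by a certified lower bound; the sampling in the algorithm then serves only as a heuristic guide for subregion prioritisation, not for the final soundness decision.
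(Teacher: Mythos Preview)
Your proof is correct and follows essentially the same route as the paper: the paper's proof simply asserts that $\polytopeset$ is an under-approximation of $\preimage_{\inset}(\outset)$ and then applies the same volume inequality chain you derive, whereas you additionally spell out the partition invariant and the per-subregion containment that justify this assertion. Your flag on \textnormal{EstimateVolume} is also apt; the paper's proof sidesteps this by writing ``the exact volume $\volume(\polytopeset)$'' in place of the estimate, implicitly making the same interpretation you suggest.
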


\begin{restatable}{proposition}{propComplete}\label{prop:complete}
    Algorithm \ref{alg:verify} is sound and complete for quantitative verification on piecewise linear neural networks with ReLU splitting.
\end{restatable}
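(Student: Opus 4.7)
The plan is to prove Proposition \ref{prop:complete} by separately establishing soundness (for both \textnormal{True} and \textnormal{False} outputs) and completeness (no \textnormal{Unknown}), exploiting the fact that ReLU splitting is a finite process on a piecewise linear network. Soundness of a \textnormal{True} output proceeds essentially as in Proposition \ref{prop:sound}: the validity of LiRPA lower bounds is preserved under the Lagrangian relaxation in Equations \ref{eq:lagrangian} and \ref{eq:lagrangian_pos}, since $\max_{\dualvar \geq 0} \dualvar \intervar = 0$ on any subregion where the splitting constraint holds, so each polytope produced by \texttt{GenApprox} remains a valid under-approximation of the preimage restricted to its subregion. Because the leaf subregions form a disjoint partition of $\indomain = \inset$, the DUP $\polytopeset$ under-approximates $f^{-1}_{\inset}(\outset)$, and the volume threshold check is sound.

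For soundness of a \textnormal{False} output, I would argue that when \texttt{AllReLUSplit} triggers, $\polytopeset$ equals $f^{-1}_{\inset}(\outset)$ exactly. On any leaf subregion $\indomain_{sub}$ in which every ReLU neuron is stable (i.e.\ $\concretelowersingle^{(i)}_j \geq 0$ or $\concreteuppersingle^{(i)}_j \leq 0$ for all $i,j$), the relaxation in Equation \ref{eq:node_bounding} collapses to the exact linear map, so $f$ is globally affine on $\indomain_{sub}$. Backward LiRPA propagation through a composition of exact linear maps yields $\underline{\spec_i}(\inpoint) = \spec_i(\inpoint)$ for every half-space constraint, and the Lagrangian terms add no slack at $\dualvar = 0$ since the splitting constraints are already enforced by the subregion's geometry. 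Hence $\underline{\polytope_{\indomain_{sub}}}(\outset) = \preimage_{\indomain_{sub}}(\outset)$ on each leaf, and summing yields $\polytopeset = f^{-1}_{\inset}(\outset)$, so the volume comparison against $\proportion \cdot \volume(\inset)$ is now decisive.

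Completeness then reduces to showing that the loop must reach either the volume threshold or the \texttt{AllReLUSplit} branch in finitely many iterations. With $N$ ReLU neurons, the binary splitting tree has depth at most $N$, so at most $2^N - 1$ splits suffice to stabilize every neuron on every leaf. Provided $R$ is taken to be at least this bound (as is standard for complete branch-and-bound verification), the loop cannot exit with \textnormal{Unknown}; combined with the soundness arguments above, whatever it returns is correct.

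The main obstacle is the exactness step in the second paragraph: rigorously verifying that backward LiRPA propagation on a fully-stabilized subregion recovers the exact affine function of the network, independently of the choice of $\nnslope$ (any slope in $[0,1]$ is consistent with a stable ReLU) and with the Lagrangian terms contributing zero at the optimum. A subsidiary subtlety is that the threshold comparison in \texttt{EstimateVolume} must rely on an exact (or soundly lower-bounding) volume rather than a sampling estimate; this is achievable since a DUP's volume decomposes additively over its disjoint polytopes and each polytope volume can be computed deterministically by standard algorithms.
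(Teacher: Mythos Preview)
Your proposal is correct and follows essentially the same approach as the paper: soundness is inherited from Proposition~\ref{prop:sound}, and the key completeness step is that once every ReLU is stabilized on a leaf subregion the LiRPA relaxations collapse to exact affine maps, so $\underline{\spec_i}(\inpoint)=\spec_i(\inpoint)$ and $\polytopeset = f^{-1}_{\inset}(\outset)$. You are in fact more careful than the paper's proof on two points it leaves implicit---the finite bound $2^N-1$ on the number of splits (and the attendant assumption on $R$) and the need for deterministic volume computation in \texttt{EstimateVolume}---so nothing is missing.
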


\xy{Proofs of the propositions are presented in Appendix \ref{app:proofs}.}

\xy{\textbf{General Input Polytopes.}
Previously we detailed how to use our preimage under-approximation method to verify quantitative properties $(\inset, \outset, \proportion)$, where $\inset$ is a hyperrectangle. 
We now discuss how to extend our method for a general polytope $\inset = \{x \in \mathbb{R}^d | \bigwedge_{i=1}^{K_{in}} c_i^Tx + d_i \geq 0\}$, (where $\psi_i$ are half-space constraints). \rev{Intuitively, we can handle these general input polytopes (i) by finding a bounding hyperrectangle $\indomain$; and (ii) by encoding the polytope half-planes as additional constraints defining the preimage. Assuming that the output polytope $\outset$ is given as $g_i(x) \geq 0$ for $i = 1, \ldots, K_{out}$, then we have:
\begin{equation}
\nn^{-1}_{\indomain}(\outset) \cap I := \left\{\inpoint \in \mathbb{R}^{\indim} \Big\vert \bigwedge_{i=1}^{K_{out}} g_i(x) \geq 0 \wedge \inpoint \in \indomain \wedge \bigwedge_{i=1}^{K_{in}} c_i^Tx + d_i \geq 0 \right\}
\end{equation}
}

Firstly, in Line \ref{algline:outerbox} of Algorithm \ref{alg:verify},  we derive a hyperrectangle $\indomain$ such that $\inset \subseteq \indomain$, by converting the polytope $\inset$ into its \textit{V-representation} \citep{grunbaum2003polytope}, that is, a list of the vertices (extreme points) of the polytope, which can be computed as in \cite{avis1991pivoting,Barber96Qhull}. 
Once we have a V-representation, obtaining a bounding box (hyperrectangle) can be achieved simply by computing the minimum and maximum value $\underline{\inpoint_i}, \overline{\inpoint_i}$ of each dimension among all vertices.  

\rev{Once we have this input hyperrectangle $\indomain$, we can then run the preimage refinement as usual, but with the modification that, when defining the polytopes and restricted preimages, we must additionally include the polytope constraints from $\inset$. 
Practically, this means that, during every call to \texttt{EstimateVolume} in Algorithm \ref{alg:verify}, we add these polytope constraints, and in Line \ref{algline:append_under} of Algorithm \ref{alg:genunderapprox} we add the polytope constraints from $\inset$, in addition to those derived from the output $\outset$ and the box constraints from $\indomain_{sub}$. The output will be a DUP under/over-approximation of the preimage intersected with the polytope $I$.}}

\section{Experiments}
\label{sec:evaluation}
We have implemented our approach as a tool~\citep{premap2025}
for preimage approximation for 
polyhedral
output sets/specifications. 
In this section, 
we report on experimental evaluation of the proposed approach, 
and demonstrate its 
effectiveness in approximation generation 
and the application to
quantitative analysis of neural networks.

\subsection{Benchmark and Evaluation Metric}\label{sec:eval_metric}
We evaluate PREMAP on a benchmark of reinforcement learning and image classification tasks. 
Besides the vehicle parking task of \cite{Ayala11vehicle} shown in the running example,
we \bw{consider the following tasks:}
(1) aircraft collision avoidance system (VCAS) from \cite{Julian19nncontrol} with 9 feed-forward neural networks (FNNs); \rev{(2) neural network controllers from \cite{vnn22} for three reinforcement learning tasks (Cartpole, Lunarlander, and Dubinsrejoin) as in \cite{Brockman16}; and (3) the neural network  for MNIST classification from VNN-COMP 2022~\citep{vnncomp2022}.}
Details of the benchmark tasks and neural networks  are summarised in Appendix \ref{app:exp_setup}.

\textbf{Evaluation Metric.}
To evaluate the quality of the preimage approximation, 
we define the \textit{coverage ratio} to be the ratio of volume covered \bw{by the approximation} to the volume of the exact preimage,
i.e., 
    $\cov(\polytopeset, \preimage_{\indomain}(\outset)) := \frac{\volume(\polytopeset)}{\volume(\preimage_{\indomain}(\outset))}$.
Note that this is a normalised measure for assessing the quality of the approximation, 
as \bw{used} in Algorithm~\ref{alg:verify} when comparing with target coverage proportion $\proportion$ for termination of the refinement loop.
In practice, we use Monte Carlo estimation to compute $\volume(\preimage_{\indomain}(\outset))$ 
as $\widehat{\volume}(\preimage_{\indomain}(\outset)) = \text{vol}(\indomain)\times \frac{1}{\numsamples} \sum_{i=1}^{\numsamples} \mathds{1}_{\nn(\inpoint_i) \in \outset}$, where $\inpoint_1, ... \inpoint_{\numsamples}$ are samples from $\indomain$.
In Algorithm \ref{alg:main}, 
the target volume  (stopping criterion) is set as $\threshold = \targetcov \times \widehat{\volume}(\preimage_{\indomain}(\outset)$, where $\targetcov$ is the \textit{target coverage ratio}. \rev{Thus, in our experimental results, the coverage ratio reported will be greater than (resp. less than) the target coverage ratio for an under-approximation (resp. over-approximation), unless the maximum number of iterations is reached.}

\subsection{Evaluation}\label{sec:eval}
\subsubsection{Effectiveness on Preimage Approximation with Input Split}
\newcolumntype{g}{>{\columncolor{Gray}}c}
\begin{table}[tb]
\caption{Performance comparison on preimage generation\bw{, for four different specifications on the vehicle parking task}. Over-approximation results are highlighted
with \textit{grey} background in subcolumns labelled by \textbf{ox},  whereas under-approximation is shown with \textit{white} background in subcolumns labelled by \textbf{ux}.}\label{tab:vehicle}
\centering
\scriptsize
\begin{tabular}{c|cc| c g |c g | c g | c g | c g} 
 \toprule
        \multirow{3}{*}{\makecell{\textbf{Vehicle}\\ \textbf{Parking}}} 
        & \multicolumn{2}{c|}{\textbf{Exact}}
         & \multicolumn{4}{c|}{\textbf{Invprop}} 
        & \multicolumn{6}{c}{\textbf{PREMAP}}\\
\cmidrule{2-13}
        &\textbf{\#Poly} & \textbf{Time(s)}
          & \multicolumn{2}{c}{\textbf{Time(s)}}
        & \multicolumn{2}{c|}{\textbf{Cov}}  
         &\multicolumn{2}{c}{\textbf{\#Poly}} 
         & \multicolumn{2}{c}{\textbf{Time(s)}}
        & \multicolumn{2}{c}{\textbf{Cov}} \\   
\cmidrule{2-13}
        & \textbf{exact} & \textbf{exact} & \textbf{ux} & \textbf{ox} & \textbf{ux} & \textbf{ox} &\textbf{ux} & \textbf{ox}& \textbf{ux} & \textbf{ox} & \textbf{ux} & \textbf{ox} \\
\midrule
 $\wedge_{i \in \{2,3,4\}} y_1 \ge y_i$
& 10 & 3110.979
& 2.642 & \textbf{0.907} & 0.921 & \textbf{1.043}
& \textbf{4}  & \textbf{4} & \textbf{1.116} & 1.121  & \textbf{0.957} & 1.092 \\ 
 $\wedge_{i \in \{1,3,4\}} y_2 \ge y_i$
& 20 & 3196.561
& 2.242 & \textbf{0.793} & 0.895 & \textbf{1.051}
& \textbf{4}  & \textbf{4} & \textbf{1.235} & 1.336 & \textbf{0.948} & 1.074\\ 
 $\wedge_{i \in \{1,2,4\}} y_3 \ge y_i$
& 7 & 3184.298
& 2.325 & \textbf{0.865} & 0.906 & \textbf{1.083}
& \textbf{3} & \textbf{4} & \textbf{1.074} & 1.129 & \textbf{0.952} & 1.098\\ 
 $\wedge_{i \in \{1,2,3\}} y_4 \ge y_i$
& 15 & 3206.998
& 2.402 & \textbf{0.793} & 0.915 & \textbf{1.058}
& \textbf{3}  & \textbf{3} & \textbf{1.055} & 1.004 & \textbf{0.922} & 1.061\\ 
\bottomrule
\end{tabular}
\end{table}

We apply Algorithm~\ref{alg:main}
with input splitting to 
the 
\bw{preimage approximation problem} for
low-dimensional reinforcement learning tasks. 
For comparison, we also run the exact preimage generation method (Exact) from \cite{Matoba20Exact} and the preimage over-approximation method (Invprop) from \cite{ProveBound,invprop}.

\newcolumntype{g}{>{\columncolor{Gray}}c}
\begin{table}[tb]
\caption{Performance comparison on preimage generation (average performance) \bw{on vehicle parking and VCAS}, with
over-approximation shown in \textit{grey} background (subcolumns labelled by \textbf{ox}) and under-approximation in \textit{white} background (subcolumns labelled by \textbf{ux}).}\label{tab:compare_baseline}
\centering
\scriptsize
\begin{tabular}{c|c c|c g c g |c g c g c g} 
\toprule
\multirow{3}{*}{\textbf{Tasks}}  
        & \multicolumn{2}{c|}{\textbf{Exact}}
        & \multicolumn{4}{c|}{\textbf{Invprop}}
         & \multicolumn{6}{c}{\textbf{PREMAP}} \\
\cmidrule{2-13}
         & \textbf{\#Poly} & \textbf{Time(s)} 
        & \multicolumn{2}{c}{\textbf{Time(s)}} & \multicolumn{2}{c|}{\textbf{Cov}} 
        & \multicolumn{2}{c}{\textbf{\#Poly}} & \multicolumn{2}{c}{\textbf{Time(s)}} & \multicolumn{2}{c}{\textbf{Cov}} \\ 
\cmidrule{2-13}
        & \textbf{exact} & \textbf{exact} & \textbf{ux} & \textbf{ox} & \textbf{ux} & \textbf{ox} &\textbf{ux} & \textbf{ox}& \textbf{ux} & \textbf{ox} & \textbf{ux} & \textbf{ox} \\
\midrule
Vehicle  & 13 & 3174.709 & 2.403 & \textbf{0.840} & 0.909 & \textbf{1.059} & \textbf{4} & \textbf{4} & \textbf{1.120} & 1.148 & \textbf{0.945} & 1.081\\ 
\midrule
VCAS 
& 131 & 6363.272 
 & - & - & - & -
& \textbf{15} & \textbf{1} & \textbf{10.775} & \textbf{1.045} & \textbf{0.908}& \textbf{1.041}\\
\bottomrule
\end{tabular}
\end{table}
\textit{Vehicle Parking \& VCAS.}
Table~\ref{tab:vehicle} and~\ref{tab:compare_baseline} present the comparison results with state-of-the-art exact and approximate preimage generation methods.
In the table, we show the number of polytopes (\#Poly) in the preimage, computation time (Time(s)), and
the approximate coverage ratio (Cov) when the preimage approximation algorithm terminates with target coverage of 0.90 (the larger, the better) for under-approximation and 1.10 (the lower, the better) for over-approximation.
Note that the \textit{Exact} method computes the exact preimage (i.e., coverage ratio 1.0), while PREMAP computes the under- and over-approximation of the exact preimage.
The results for over-approximation are highlighted with \textit{grey} background, whereas under-approximation is shown with \textit{white} background.
\bw{Invprop only supports computing over-approximations natively; thus,}
we adapt it to produce an under-approximation by computing over-approximations for the complement of each output constraint; note that the resulting approximation is then the \textit{complement} of a union of polytopes, rather than a DUP.

Compared with the exact method, 
our approach yields \textit{orders-of-magnitude} improvement in efficiency (see Table \ref{tab:vehicle} and Table \ref{tab:compare_baseline}). 
It can also characterise the preimage with much fewer (and also disjoint) polytopes, achieving an average reduction of 69.2\% for vehicle parking (both under- and over-approximation) and 88.5\% (under-approximation) and 99.2\% (over-approximation) for VCAS.
Compared with Invprop, our method produces comparable results in terms of time and approximation coverage on the 2D vehicle parking task. 
\rev{In Table \ref{tab:compare_baseline}, Invprop provides a tighter over-approximation and lower runtime for the vehicle parking task.
This advantage can be attributed to Invprop's exploitation of output constraints for iterative refinement of intermediate bounds, which improves precision and reduces the number of splitting rounds needed. 
However, the iterative refinement procedure utilising output constraints in Invprop is quite expensive. It essentially has a quadratic dependence on network depth when all intermediate bounds are refined because refining the bounds of each intermediate layer requires bound propagation to the input layer. In contrast, our framework utilises output constraints with a linear dependence with respect to the number of layers.
As a result, Invprop's scalability to deeper neural networks remains a limitation, especially in more complex tasks or architectures with greater depth.}
%
\begin{figure}[t]
\centering
\includegraphics[width=0.7\textwidth]{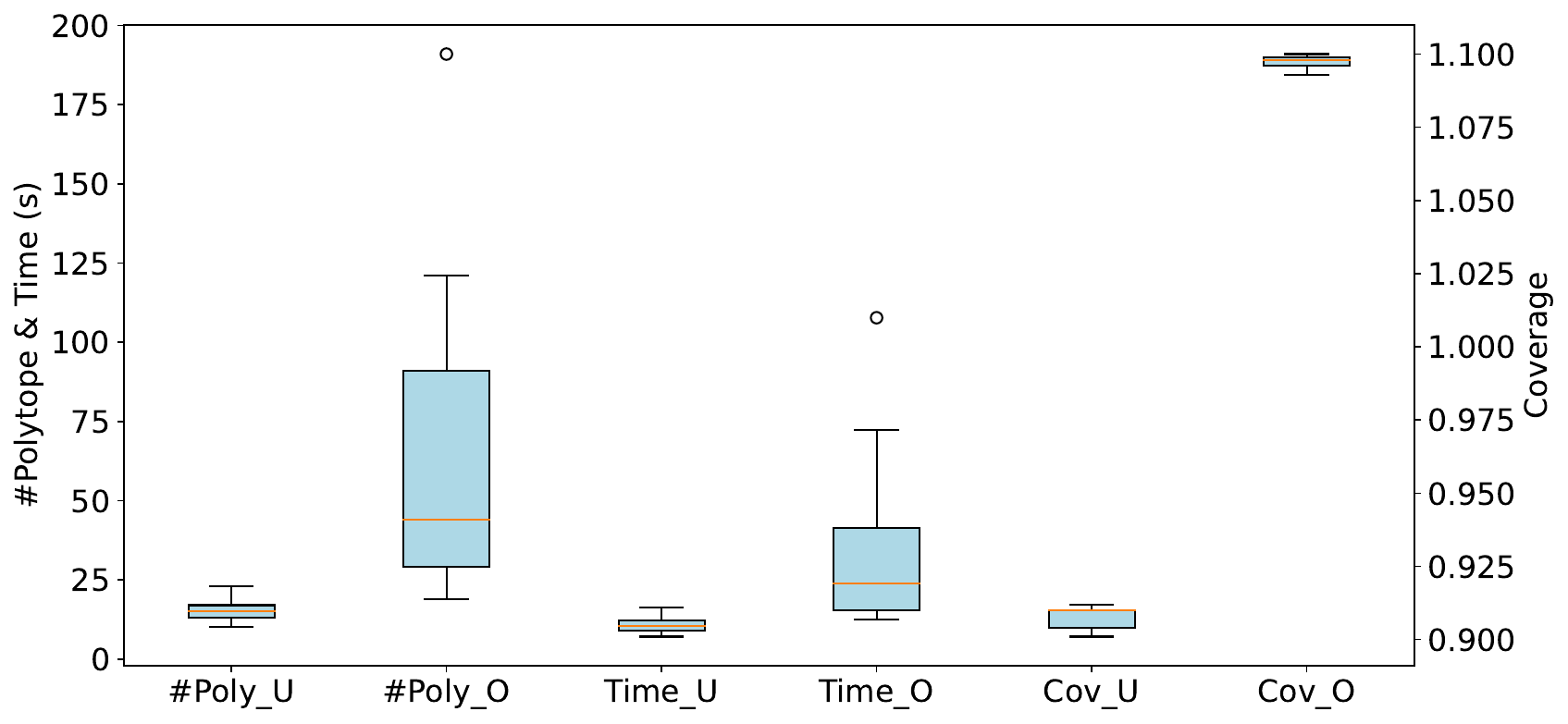}
\caption{Preimage approximation \bw{results over the 9 VCAS neural networks}.
Results for under-approximation are indicated with \textit{metric}\_U and over-approximation with \textit{metric}\_O. 
The scale for both the number of polytopes and time is indicated on the vertical axis on the left.
} 
\label{fig:box_vcas}
\end{figure}

\bw{While Table \ref{tab:compare_baseline} shows average performance on VCAS}, Figure~\ref{fig:box_vcas} plots more detailed results of our method for the nine neural networks in the VCAS task in terms of the number of polytopes ($y$-axis on the left), time cost ($y$-axis on the left) and approximation coverage ($y$-axis on the right) for both under- (indicated with \textit{metric}\_U) and over-approximation (indicated with \textit{metric}\_O).
As shown in the figure, our method is able to reach the targeted approximation coverage (0.90 for under-approximation and 1.10 for over-approximation) for all networks.
The median number of polytopes for the preimage under-approximation for property $\outset = \{y\in \mathbb{R}^9~|\wedge_{i \in [1,9]}~y_1 \ge y_i\}$ is  15 and the median time cost is 10.492s. 
The over-approximation shows higher variability in the number of generated polytopes and computation time for property $\outset = \{y\in \mathbb{R}^9~|\wedge_{i \in [1,9]\setminus 3}~y_3 \ge y_i\}$,
with the maximum reaching 191 polytopes and computation time  of 107.758s for VCAS model 3. 


\newcolumntype{g}{>{\columncolor{Gray}}c}
\begin{table}[t]
\caption{Performance of preimage approximation for reinforcement learning tasks, with
over-approximation shown in \textit{grey} background (marked in subcolumns \textbf{ox}) and under-approximation in \textit{white} background (marked in subcolumns \textbf{ux}).}\label{tab:rl-tasks}
\centering
\scriptsize
\begin{tabular}{cc|c|c g|c g|c g} 
\toprule
\multirow{2}{*}{\textbf{Task}} & \multirow{2}{*}{\textbf{Property}} & \multirow{2}{*}{\textbf{Config}} & \multicolumn{2}{c|}{\textbf{\#Poly}} & \multicolumn{2}{c|}{\textbf{Cov}} & \multicolumn{2}{c}{\textbf{Time(s)}} \\ 
\cmidrule{4-9}
& & & \textbf{ux} & \textbf{ox} & \textbf{ux} & \textbf{ox} & \textbf{ux} & \textbf{ox}\\
\midrule
\multirow{3}{*}{\makecell{Cartpole\\ (FNN $2 \times 64$)}} & \multirow{3}{*}{$\{y\in \mathbb{R}^2|~ y_1 \ge y_2\}$} & $\dot{\theta} \in [-2,-1]$ & 25 & 1 & 0.766 & 1.213 & 13.337 & 2.149\\ 
& & $\dot{\theta} \in [-2,-0.5]$ & 42 & 8 & 0.750 & 1.242 & 19.732 & 5.778\\ 
& & $\dot{\theta} \in [-2,0]$ & 66 & 22 & 0.755 & 1.246 & 30.563 & 11.476\\ 
\midrule
\multirow{3}{*}{\makecell{Lunarlander \\ (FNN $2 \times 64$)}}
 & 
\multirow{3}{*}{$\{y\in \mathbb{R}^4| \wedge_{i \in \{1,3,4\}}  y_2 \ge y_i\}$} & $\dot{v} \in [-1, 0]$ & 18 & 1 & 0.754 & 1.068 & 14.453 & 2.381 \\ 
& & $\dot{v} \in [-2, 0]$ & 67 & 23 & 0.751 & 1.246  & 48.455 & 19.210\\ 
& & $\dot{v} \in [-4, 0]$ & 97 & 90 & 0.751 & 1.249 & 76.234 & 72.285 \\
\midrule
\multirow{3}{*}{\makecell{Dubinsrejoin\\ (FNN $2 \times 256$)}}
& 
\multirow{3}{*}{\makecell{$\{y\in \mathbb{R}^8|\wedge_{i \in [2,4]} ~ y_1 \ge y_i $ \\
$\quad \quad \quad \,\bigwedge \wedge_{i \in [6,8]} ~y_5 \ge y_i\}$}} & $x_v \in [-0.1,0.1]$ & 211 & 20 & 0.751 & 1.242 & 182.821 & 18.666\\ 
& & $x_v \in [-0.2,0.2]$ & 409 & 23 & 0.750 & 1.241 & 323.839 & 24.788\\
& & $x_v \in [-0.3,0.3]$ & 677 & 43 & 0.750 & 1.244 & 589.939 & 41.502\\
\bottomrule
\end{tabular}
\end{table}

\begin{figure}[t]
\centering
 \begin{subfigure}[b]{0.49\textwidth}
     \centering
     \includegraphics[width=\textwidth]{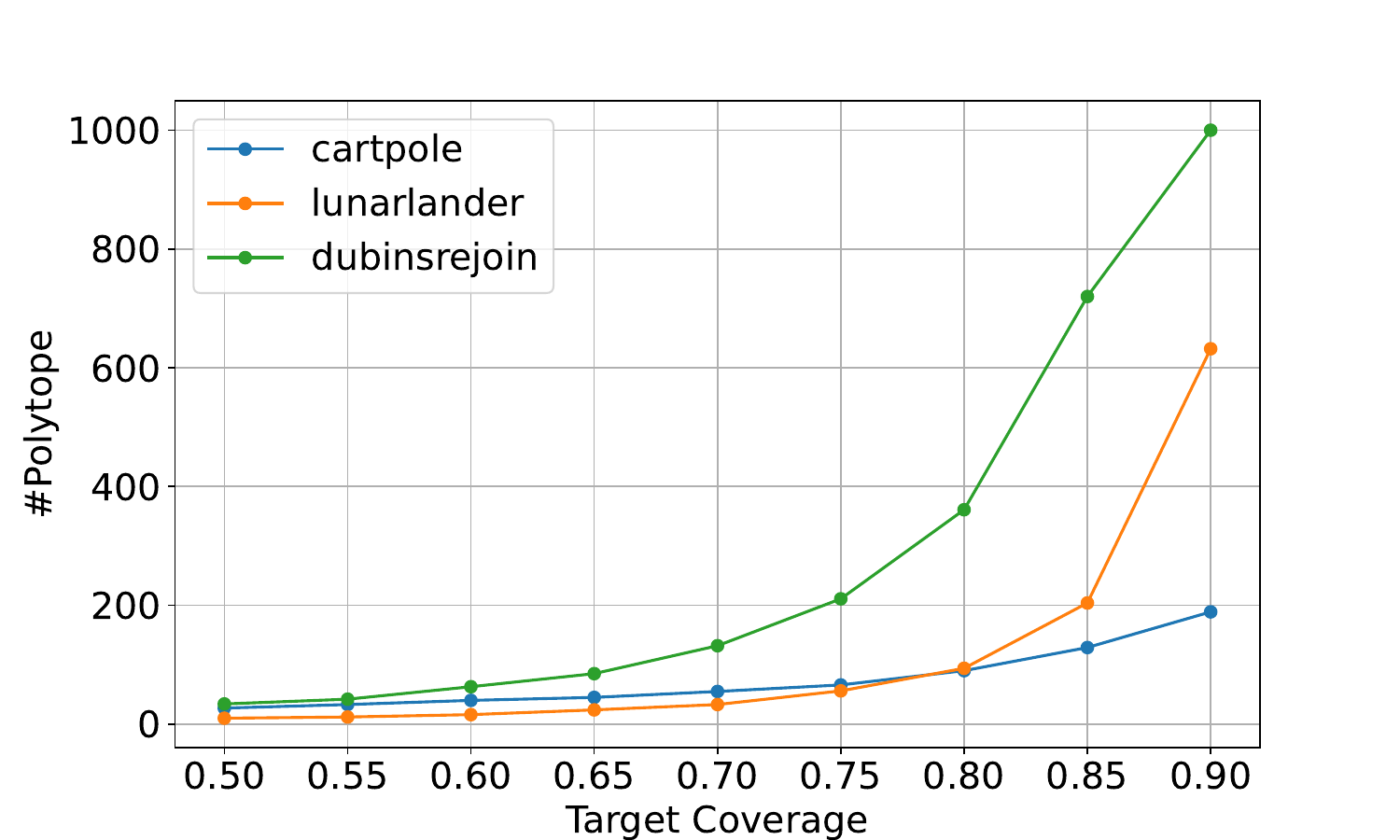}
 \end{subfigure}
 \begin{subfigure}[b]{0.49\textwidth}
     \centering
     \includegraphics[width=\textwidth]{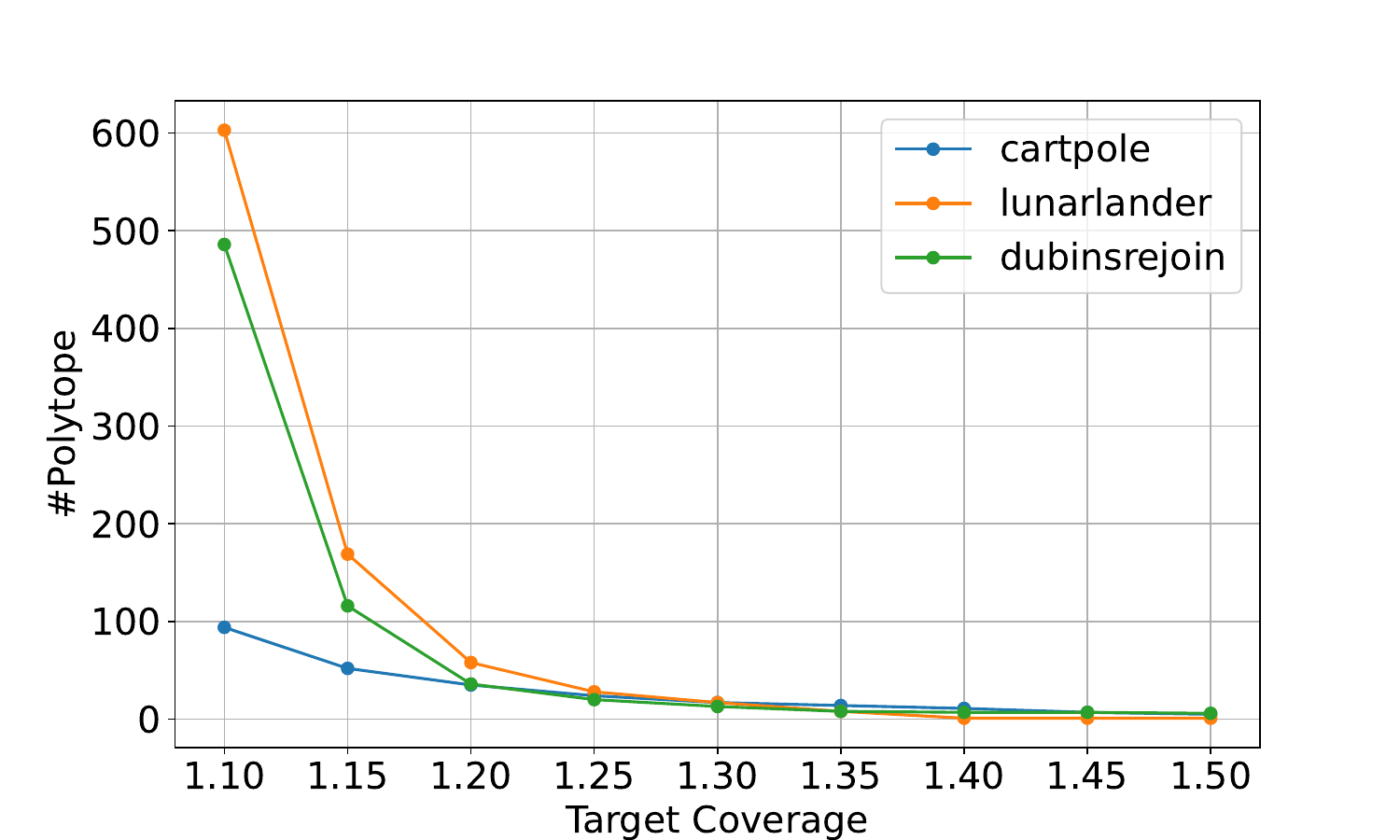}
 \end{subfigure}
 \caption{Number of polytopes in preimage approximation for a range of target coverage values. Left: under-approximation. Right: over-approximation.}\label{fig:poly_target_coverage}
\end{figure}

\textit{Neural Network Controllers.}
In this experiment, we consider preimage approximation for neural network controllers in reinforcement learning tasks. 
Note that the \textit{Exact} method in \cite{Matoba20Exact}  is unable to deal with neural networks of these sizes and Invprop in \cite{ProveBound}
\xy{is not capable of characterising the preimage under-approximation in the form of disjoint polytopes.}
Table~\ref{tab:rl-tasks} summarises the experimental results 
obtained by our method, where the columns for over-approximations are marked with grey background and under-approximations marked with a white background.

We evaluate 
Algorithm~\ref{alg:main} (with input splitting) with respect to \bw{a range of} different
configurations of the input region (e.g., angular velocity $\dot{\theta}$ for Cartpole). 
For comparison, we set the same target coverage ratio for different input region sizes (0.75 for under-approximation and 1.25 for over-approximation) and an iteration limit of 1000.
In Table \ref{tab:rl-tasks}, we see that our method \bw{successfully} generates preimage approximations for all configurations, reaching the targeted approximation coverage. 
Empirically, for the same coverage ratio, our method requires a number of polytopes and time roughly linear in the input region size for the preimage under-approximation.
For over-approximations, the bounding constraints of the input region are added as additional constraints to form the polytope approximation on each subregion,
which \bw{affects} 
the linear trend in the number of polytopes and computation time as the input region size increases.
For example, the constraint brought by the input configuration of $-2 \le \dot{\theta} \le -1$ for Cartpole, together with the single polytope over-approximation, already reaches the target coverage, while for a larger input region of $-2 \le \dot{\theta} \le 0$, 22 preimage polytopes are needed together with input bounding constraints for each input subregion.
\rev{Table \ref{tab:rl-tasks} also shows that the number of polytopes, and consequently the runtime cost, required for computing over-approximations is lower than for under-approximations. This difference can be attributed to how half-plane constraints are used to form the polytope. For over-approximation, the input bounds of each subregion act as additional half-plane constraints that further tighten the feasible set, resulting in fewer polytopes that reach the over-approximation precision. In contrast, the under-approximation imposes more strict constraints than the subregion bounds, as it is guaranteed to be included in the subregion. This often requires a finer splitting of the input space to achieve the target precision, resulting in a greater number of polytopes and increased computational effort.}

\xy{In Figure \ref{fig:poly_target_coverage}, we show the number of polytopes needed to reach different target coverage \bw{ratios} for both under-approximation (left) and over-approximation (right).
Our evaluation results indicate that the \bw{number of} refinement iterations taken is influenced by the number of output constraints and the size of the neural network. 
For instance, the neural network controller for Cartpole, which has a single output constraint, shows a roughly linear
increase in the number of polytopes as the target coverage increases for under-approximation (resp. decreases for over-approximation).
In contrast, accommodating multiple output constraints for larger neural networks, e.g., Dubinsrejoin, requires a significant increase in refinement iterations as the target coverage approaches 1.}
 
\begin{figure}[t]
\centering
 \begin{subfigure}[b]{0.45\textwidth}
     \centering
     \includegraphics[width=\textwidth]{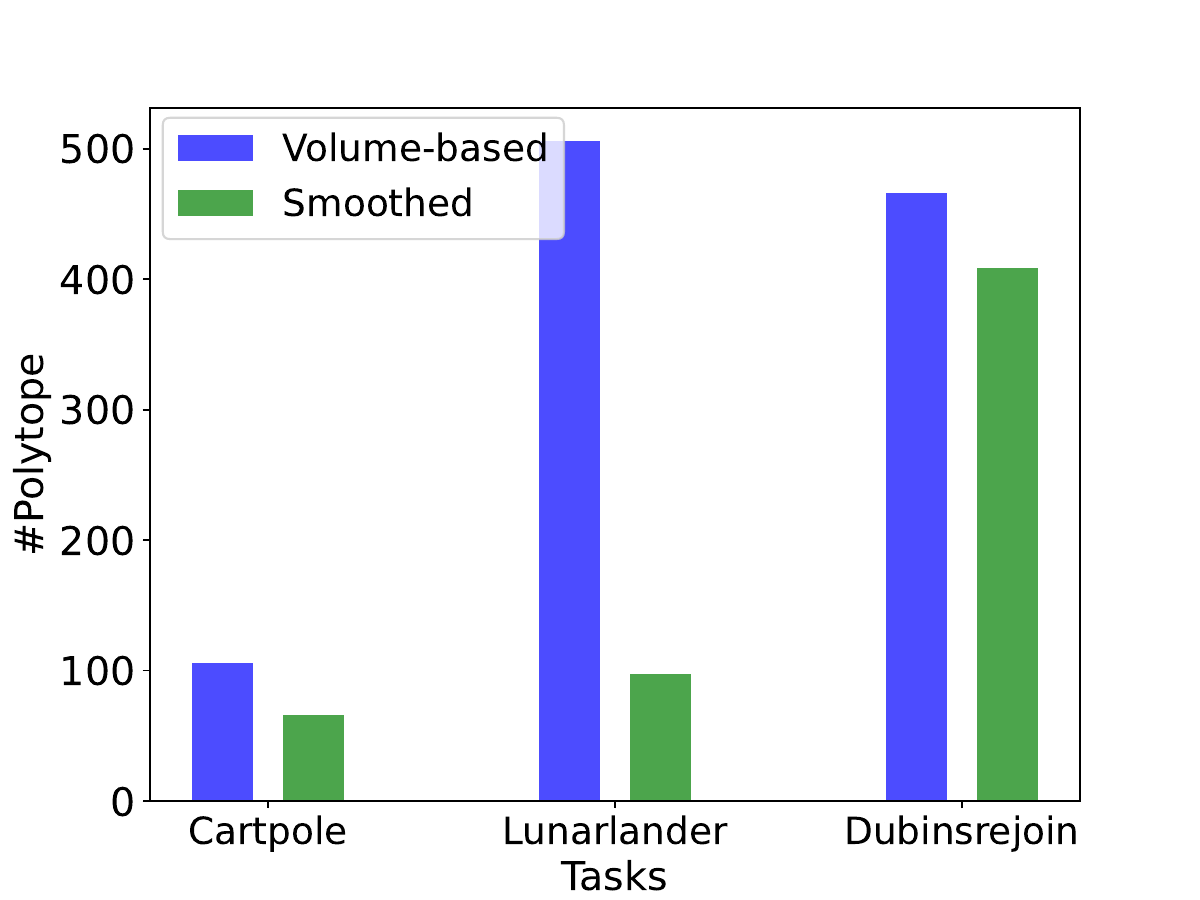}
     \caption{Number of polytopes}
     \label{fig:split_polytope_under}
 \end{subfigure}
 \begin{subfigure}[b]{0.45\textwidth}
     \centering
     \includegraphics[width=\textwidth]{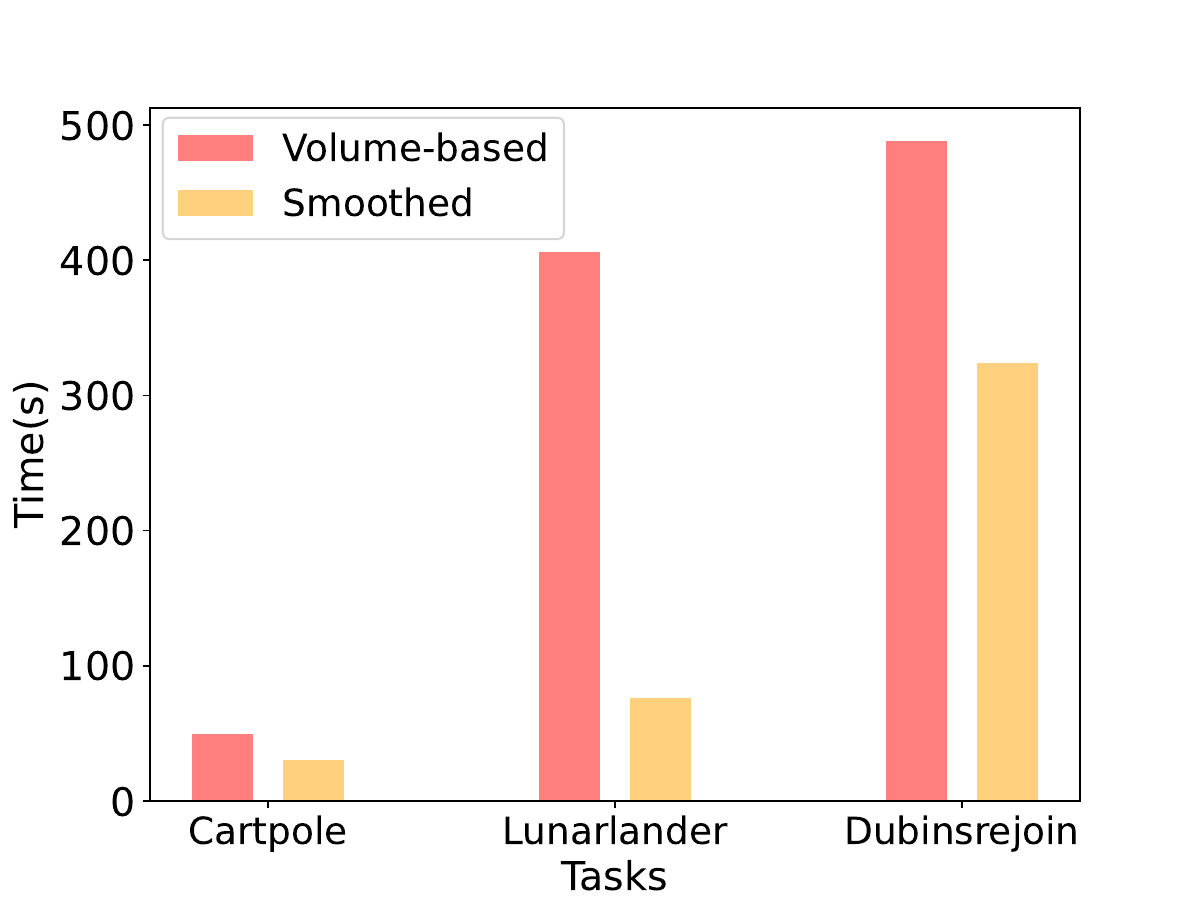}
     \caption{Time cost}
     \label{fig:split_time_under}
 \end{subfigure}
 \caption{Effectiveness of smooth splitting for preimage under-approximation. \rev{Comparison results with Volume-based method} from \cite{zhang23preimage}.}\label{fig:bar_under}
\end{figure}

\begin{figure}[t]
\centering
 \begin{subfigure}[b]{0.45\textwidth}
     \centering
     \includegraphics[width=\textwidth]{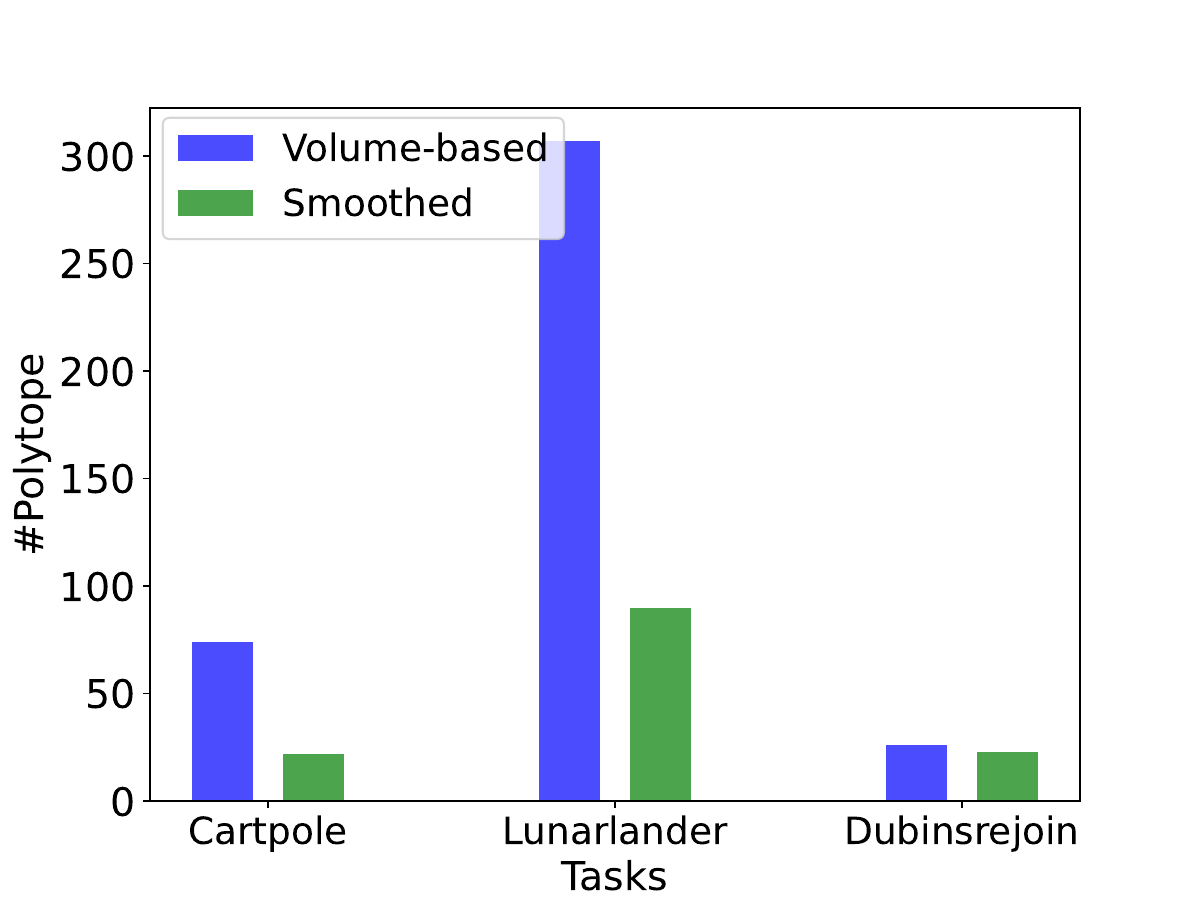}
     \caption{Number of polytopes}
     \label{fig:split_polytope_over}
 \end{subfigure}
 \begin{subfigure}[b]{0.45\textwidth}
     \centering
     \includegraphics[width=\textwidth]{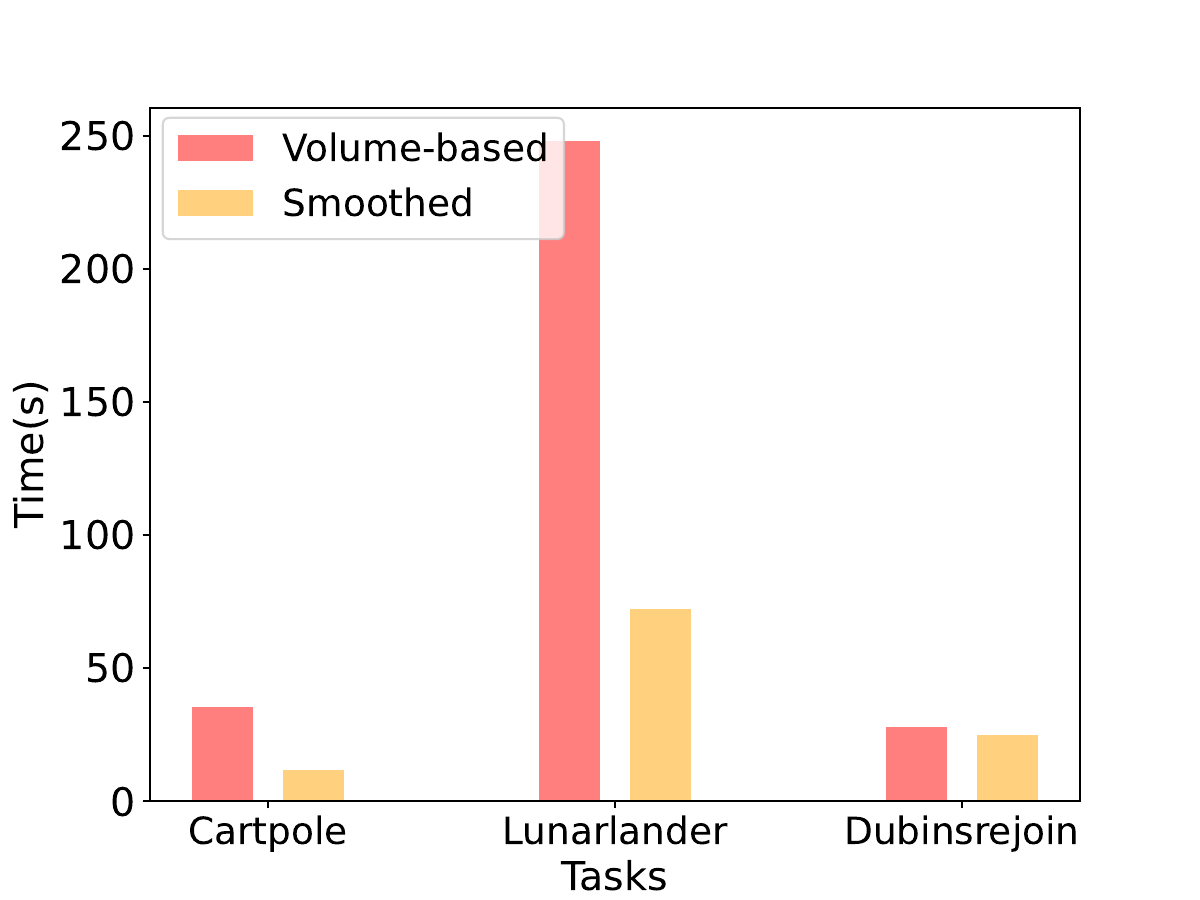}
     \caption{Time cost}
     \label{fig:split_time_over}
 \end{subfigure}
 \caption{Effectiveness of smooth splitting for preimage over-approximation. \rev{Comparison results with Volume-based method} from \cite{zhang23preimage}.} \label{fig:bar_over}
\end{figure}


\subsubsection{Effectiveness of Smoothed Input Splitting} \label{sec:smoothed_input_splitting_exp}
We now analyse the effectiveness of the smoothed splitting method \bw{described} in Section \ref{sec:branching} (Equation \ref{eqn:priority_under} and \ref{eqn:priority_over}), \rev{in comparison to a volume-guided splitting method in \cite{zhang23preimage} that chooses the input feature leading to the greatest improvement in approximation volume.} 
From Figures \ref{fig:bar_under} and \ref{fig:bar_over}, we observe that the smoothed splitting method 
requires \bw{significantly} fewer refinement iterations for all reinforcement learning controllers to achieve the target coverage, thus reducing the number of polytopes and computation time, than the volume-guided splitting method.
More specifically, the smoothed splitting method achieves an average reduction of 43.6\% in the number of polytopes and 51.0\% in computation time for under-approximation across the neural network controllers, up to 80.8\%/81.2\% reduction 
for the Lunarlander task. 
Similar improvements in computation efficiency and size of polytope union are also achieved for over-approximations, with an average reduction of 50.8\%/49.6\% across all reinforcement learning tasks.

\bw{Recall that the smoothed input splitting heuristic relaxes the volume-based heuristic, such that, for each sampled input point in the input region, we take into account not only \emph{whether} the point lies in the polytope approximation, but also \emph{how far away} the point is from the approximation.} 
\bw{This is particularly crucial in early iterations, where the approximation may be too loose; for example, an under-approximation may have no overlap with the input region (thus zero volume). Therefore, computing the approximation volume (after splitting on each input feature) provides very little signal.}
\bw{In such cases, the smoothed splitting heuristic is able to capture promising input features that, while not immediately improving the approximation volume, can bring the preimage bounding planes closer to the exact preimage, which is beneficial for future iterations.} 

\subsubsection{Effectiveness of Preimage Approximation with ReLU Split} \label{sec:relu_split_exp}

\newcolumntype{g}{>{\columncolor{Gray}}c}
\begin{table}[t]
\scriptsize
    \caption{Preimage under-approximation refinement with ReLU split ($L_{\infty}$ attack). Results with Lagrangian optimisation are marked in grey background in columns \textit{w/ LagOpt}.}
    \label{tab:image_linf_relu}
    \centering
    \begin{tabular}{c|c | g | c |  g | c | g}
\toprule
\multirow{2}{*}         {\makecell{$L_{\infty}$ \textbf{attack} \\ (FNN $6 \times 100$)}}
        & \multicolumn{2}{c|}{\textbf{\#Poly}}
         & \multicolumn{2}{c|}{\textbf{Cov}}
         & \multicolumn{2}{c} {\textbf{Time(s)}} \\
\cmidrule{2-7}
& \textbf{w/o  } & \textbf{w/ LagOpt} & \textbf{w/o  } & \textbf{w/ LagOpt} & \textbf{w/o } & \textbf{w/ LagOpt} 
\\
        \midrule
          0.06
        & 2 & 2
        & 1.0
        & 1.0
        & 3.183
        & 3.237
        \\
         0.07
        & 247
        & 40
        & 0.752
        & 0.756
         & 130.746
         & 29.019
         \\
         0.08
        & 522
        & 290
        & 0.751
        & 0.751
        & 305.867
        & 218.455
        \\
         0.09
         & 733
         & 563
         & 0.165
         & 0.751
         & 507.116
         & 365.552
         \\
        \bottomrule
    \end{tabular}
\end{table}

\newcolumntype{g}{>{\columncolor{Gray}}c}
\begin{table}[t]
\scriptsize
    \caption{Preimage under-approximation refinement with ReLU split (patch attack). Results with Lagrangian optimisation are marked in grey background in columns \textit{w/ LagOpt}.}
    \label{tab:image_patch_relu}
    \centering
    \begin{tabular}{c|c | g | c |  g | c | g}
\toprule
\multirow{2}{*}         {\makecell{\textbf{Patch attack}\\ (FNN $6 \times 100$)}}
        & \multicolumn{2}{c|}{\textbf{\#Poly}}
         & \multicolumn{2}{c|}{\textbf{Cov}}
         & \multicolumn{2}{c} {\textbf{Time(s)}} \\
\cmidrule{2-7}
& \textbf{w/o  } & \textbf{w/ LagOpt} & \textbf{w/o  } & \textbf{w/ LagOpt} & \textbf{w/o } & \textbf{w/ LagOpt} 
\\
        \midrule
$3 \times 3$(center)
        & 1
        & 1
        & 1.0
        & 1.0
        & 2.611
        & 2.637
         \\
$4 \times 4$(center)
        & 678
        & 678
        & 0.382
        & 0.427
         & 455.988
         & 514.272
        \\
$7 \times 7$(corner)
         & 7 
         & 7
         & 0.842
         & 0.861
         & 6.065
         & 6.217
         \\
$8 \times 7$(corner)
        & 956
        & 954
        & 0.033
        & 0.214
        & 488.849
        & 676.666
        \\
        \bottomrule
    \end{tabular}
\end{table}

In this subsection, we evaluate the scalability of Algorithm \ref{alg:main} with ReLU splitting
by applying it to MNIST image classifiers. 
\bw{In particular, we consider input regions defined by bounded perturbations to a given MNIST image.}
\xy{Table \ref{tab:image_linf_relu} and \ref{tab:image_patch_relu} summarise the evaluation results for
two types of image \bw{perturbations commonly considered in the adversarial robustness literature} ($L_{\infty}$ and patch attack, respectively).
For $L_{\infty}$ attacks, bounded perturbation noise is applied to all image pixels.
The patch attack applies only to a smaller patch area \bw{of $n \times m$ pixels} but allows arbitrary perturbations covering the whole valid range $[0,1]$. The task is then to produce a DUP approximation of the \bw{subset of the} perturbation region that is guaranteed to be classified correctly.

For $L_{\infty}$ attack, we evaluate our method over perturbations of increasing size, from $0.06$ to $0.09$. It is worth noting that for this size of preimage, e.g., from $0.06$ to $0.07$, the \emph{volume} of the input region increases by tens of orders of magnitude due to the high dimensionality, \bw{making effective preimage approximation significantly more challenging}.
\bw{Table \ref{tab:image_linf_relu} shows that} 
our approach (Algorithm \ref{alg:main}) without Lagrangian optimisation (marked in columns \textit{w/o}) is able to generate a preimage under-approximation that achieves the targeted coverage of $0.75$ for $L_{\infty}$ noise up to 0.08.  The fact that the number of polytopes and computation time remain manageable is due to the effectiveness of ReLU splitting.
In Table \ref{tab:image_patch_relu}, for the patch attack, we observe that the number of polytopes and time required increase sharply when increasing the patch size for both the centre and corner area of the image, \bw{suggesting that the model is more sensitive to larger local perturbations}.
\bw{It is also interesting that our method can generate preimage approximations for larger patches in the corner as opposed to the centre of the image; }we hypothesize this is due to the greater influence of central pixels on the neural network output, and correspondingly a greater number of unstable neurons over the input perturbation space.

Table \ref{tab:image_patch_relu_over} shows the preimage refinement results for over-approximations in the context of patch attack. 
The results of our approach (Algorithm \ref{alg:main}) without Lagrangian optimisation are summarised in columns \textit{w/o}. 
As shown in the table, our refinement method can effectively tighten the over-approximation to the targeted coverage of 1.25 for different attack configurations.
For patch size $10 \times 10$ (centre) and $16 \times 15$ (corner), \bw{we found that} the perturbation region is a trivial over-approximation itself for the target coverage \bw{of 1.25}; thus, we demonstrate the results with a target coverage of 1.1 and 1.05.
Similarly to 
under-approximations, a patch attack in the centre with a smaller patch size requires more refinement iterations than the patch attack in the corner, demonstrating a greater influence of central pixels. 

\noindent\textbf{Effectiveness of Lagrangian Optimisation.} 
The results of evaluation of
our approach (Algorithm \ref{alg:main}) \bw{for under-approximation} with Lagrangian optimisation are shown in Table \ref{tab:image_linf_relu} and \ref{tab:image_patch_relu} (marked in columns \textit{w/ LagOpt} with grey background).
For $L_{\infty}$ attack, the refinement method with Lagrangian optimisation generates preimage approximations that achieve the
target coverage of 0.75 for all perturbation settings, including perturbation noise $0.09$ where 
the refinement \textit{without} Lagrangian optimisation fails (0.751 vs 0.165 in Table \ref{tab:image_linf_relu}). 
The new refinement method also leads to a significant reduction in the number of polytopes and computation cost. 
For the patch attack, the refinement method with Lagrangian optimisation effectively improves the preimage approximation precision for all configuration settings. 
Since the patch attack allows arbitrary perturbations covering the whole valid range $[0,1]$, it leads to a rapid increase in the number of unstable neurons and exhausts the iteration limit when increasing the patch size. Nonetheless, the resulting preimage approximation coverage obtained with Lagrangian optimisation shows better per-iteration precision improvement, \bw{while introducing marginal computation overhead compared to the previous method.}

\newcolumntype{g}{>{\columncolor{Gray}}c}
\begin{table}[t]
    \caption{Preimage over-approximation refinement with ReLU split (patch attack). Results with Lagrangian optimisation are marked in grey background in columns \textit{w/ LagOpt}.}
    \scriptsize
    \label{tab:image_patch_relu_over}
    \centering
    \begin{tabular}{c|c | g |  c | g |   c | g}
\toprule
\multirow{2}{*}         {\makecell{\textbf{Patch attack}\\ (FNN $6 \times 100$)}}
        & \multicolumn{2}{c|}{\textbf{\#Poly}}
         & \multicolumn{2}{c|}{\textbf{Cov}}
         & \multicolumn{2}{c} {\textbf{Time(s)}} \\
\cmidrule{2-7}
& \textbf{w/o} & \textbf{w/ LagOpt} & \textbf{w/o} & \textbf{w/ LagOpt} & \textbf{w/o} & \textbf{w/ LagOpt} 
\\
        \midrule
$10 \times 10$(center)
        & 387
        & 387
        & 1.099
        & 1.099
        & 261.826
        & 281.916
         \\
$11 \times 11$(center)
        & 317
        & 317
        & 1.249
        & 1.249
         & 192.954
         & 212.735
        \\
$16 \times 15$(corner)
        & 616
        & 616
        & 1.050
        & 1.050
         & 328.589
         & 350.092
        \\
$16 \times 16$(corner)
        & 285
        & 285
        & 1.249
        & 1.249
         & 165.250
         & 175.605
        \\
        \bottomrule
    \end{tabular}
\end{table}


Columns \textit{w/ LagOpt} in Table \ref{tab:image_patch_relu_over} summarises the over-approximation results with Lagrangian optimisation.
In this case, we introduce 
the Lagrange multipliers with the opposite signs to the under-approximation to guarantee the validity of the symbolic over-approximation. 
\bw{Intriguingly, in contrast to under-approximation, we find that the optimised $\beta$ parameters are almost always close to 0, meaning that the results are similar to not using Lagrangian optimisation.}
\bw{We hypothesize that, for over-approximations, the objective function is relatively flat in the vicinity of $0$, which makes the parameters difficult to optimise.}
}

\begin{table}[t]
    \caption{Comparison with a robustness verifier.}
    \label{tab:compare_verifier}
    \centering
    \scriptsize
    \begin{tabular}{c|cc|ccc}
        \toprule
        \multirow{2}{*}{\textbf{Task}}
        &\multicolumn{2}{c|}{\textbf{$\alpha,\beta$-CROWN}}
         & \multicolumn{3}{c}{\textbf{PREMAP}} \\
        \cmidrule{2-6}
        & Result
        & Time
        & Cov(\%)
        & \#Poly
        & Time\\
        \midrule
        Cartpole ($\dot{\theta} \in [-1.642,-1.546]$) 
         & yes
         & 3.349
        & 100.0
        & 1
         & 1.137
        \\        
         \midrule
        Cartpole ($\dot{\theta} \in [-1.642,0]$)
         &no
         & 6.927
        & 94.9
        & 2
         & 3.632
        \\
        \midrule
        MNIST ($L_{\infty}$ 0.026)
        & yes
         & 3.415
        & 100.0
        & 1
         & 2.649
         \\
         \midrule
        MNIST ($L_{\infty}$ 0.04)
        & unknown
         & 267.139
        & 100.0
        & 2
         & 3.019
         \\
        \bottomrule
    \end{tabular}
\end{table}
\noindent\textbf{Comparison with Robustness Verifiers.}
We now illustrate empirically the utility of preimage computation in robustness analysis compared to robustness verifiers. Table \ref{tab:compare_verifier} shows comparison results with $\alpha,\beta$-CROWN,  winner of the VNN competition~\citep{vnn22}.
We set the tasks according to the problem instances from VNN-COMP 2022 for local robustness verification (localised perturbation regions).
For Cartpole, $\alpha,\beta$-CROWN can provide a verification guarantee (yes/no or safe/unsafe) for both problem instances. However, in the case where the robustness property does not hold, our 
method 
explicitly generates a preimage 
under-approximation in the form of a disjoint polytope union (which guarantees the satisfaction of the output properties), and
covers $94.9\%$ of the exact preimage. 
For MNIST, while the smaller perturbation region is successfully verified, $\alpha,\beta$-CROWN with tightened intermediate bounds by MIP solvers returns unknown with a timeout of 300s for the larger region. 
In comparison,
our algorithm provides a concrete union of polytopes where the input is guaranteed to be correctly classified, which we find covers 100$\%$ of the input region (up to sampling error). Note also, 
as shown in Table \ref{tab:image_linf_relu}, our algorithm can produce non-trivial under-approximations for input regions far larger than $\alpha, \beta$-CROWN can verify.

\begin{table*}[ht]
\centering
\scriptsize
\caption{Performance comparison on preimage generation for reinforcement learning tasks. Under-approximation results are in the column labelled by Under-Approx, whereas over-approximation is in the column labelled
by Over-Approx.}
\label{tab:comparison_inv_premap}
\begin{tabular}{cc|ccc|ccc}
\toprule
\multirow{2}{*}{\textbf{Task}} & \multirow{2}{*}{\textbf{Method}} & \multicolumn{3}{c|}{\textbf{Under-Approx}} & \multicolumn{3}{c}{\textbf{Over-Approx}} \\
\cmidrule(lr){3-5} \cmidrule(lr){6-8}
 & & \textbf{\#Poly} & \textbf{Cov} & \textbf{Time(s)} & \textbf{\#Poly} & \textbf{Cov} & \textbf{Time(s)} \\
\midrule
\multirow{2}{*}{Cartpole} 
 & Invprop & 81 & 0.760 & 19.835 & 196 & \textbf{1.119} & 224.718 \\
 & PREMAP & \textbf{25} & \textbf{0.766} & \textbf{13.337} & \textbf{8} & 1.242 & \textbf{5.778}  \\
\midrule
\multirow{2}{*}{Lunarlander} 
 & Invprop & 256 & 0.756 & 325.206 & 221 & 1.378 & 358.386 \\
 & PREMAP & \textbf{12} & \textbf{0.759} & \textbf{8.091} & \textbf{90} & \textbf{1.247} & \textbf{50.154} \\
\midrule
\multirow{2}{*}{Dubinsrejoin} 
 & Invprop & 295 & 0.441 & 360.69 & 205 & 1.876 & 361.16 \\
 & PREMAP & \textbf{20} & \textbf{0.765} & \textbf{12.683} & \textbf{13} & \textbf{1.232} & \textbf{8.822} \\
\bottomrule
\end{tabular}
\end{table*}

\rev{
\subsubsection{Comparison with Output Constraint-Enhanced Approach}
The original Invprop method focused on preimage over-approximation in low-dimensional tasks.
In our experiments, we compare our method, PREMAP, with the updated version of Invprop, which is now integrated into the auto-LiRPA framework, enabling both under- and over-approximation, 
which we still refer to as Invprop.
Table \ref{tab:comparison_inv_premap} presents a comparative evaluation of preimage approximation quality on neural network controllers between Invprop and our proposed method, PREMAP.
Note that, for Invprop, output constraints are leveraged to compute tighter intermediate bounds, thereby improving the preimage approximation quality.

The evaluation results show that PREMAP demonstrates superior preimage approximation precision and runtime efficiency across all evaluated neural network controllers. 
The results highlight the importance of effective preimage refinement strategies. Specifically, PREMAP's effectiveness relies on the smart subregion selection and greedy splitting, which leads to better preimage quality improvement than solely applying intermediate bound tightening on subregions.
A promising direction for future work is to explore a hybrid approach that integrates PREMAP's domain selection and splitting method with the bound-tightening capabilities of Invprop. 
Investigating whether this synergy can consistently outperform each method individually could offer valuable insights to advance the effectiveness of preimage analysis techniques.}


\rev{\subsubsection{Evaluation of Input vs ReLU Split}\label{sec:inputVSrelu}
Table~\ref{tab:compare_split} presents  performance comparison between input and ReLU splitting across low-dimensional control tasks.
Both approaches are able to reach the target coverage; however, input splitting generally achieves higher per-iteration precision refinement. It requires fewer iterations, and as a result, fewer preimage polytopes compared to ReLU splitting. 
This advantage is mainly because one input split of the original region into smaller ones can simultaneously stabilise multiple (unstable) ReLU neurons, whereas ReLU splitting addresses one unstable neuron per iteration. 

Notably, the proposed greedy input splitting method further selects the input feature that yields the greatest improvement in approximation precision per split, though at the cost of increased computational overhead in certain cases (e.g., Lunarlander).
For the same reason, for high-dimensional tasks such as the MNIST classification task with 784 input features, the same greedy method requires parallel computation across hundreds of processes. Each instance involves large intermediate tensors, which can easily exceed the GPU memory capacity, and the greedy selection for the optimal split further increases computation time.
}

\begin{table}[t]
    \caption{Preimage under-approximation refinement with Input vs ReLU split.}
    \label{tab:compare_split}
    \centering
    \scriptsize
    \begin{tabular}{cc|cc|cc|cc}
        \toprule
        \multirow{2}{*}{\textbf{Task}}
        & \multirow{2}{*}{\textbf{Config}}
        &\multicolumn{2}{c|}{\textbf{\#Poly}}
         & \multicolumn{2}{c|}{\textbf{Cov(\%)}}
         & \multicolumn{2}{c}{\textbf{Time(s)}} \\
        \cmidrule{3-8}
        &
        & Input
        & ReLU
        & Input
        & ReLU
        & Input
        & ReLU\\
         \midrule
        Cartpole
        & $\dot{\theta} \in [0,0.1]$
         & \textbf{3}
         & 106
        & \textbf{78.0}
        & 75.3
         & \textbf{4.233}
         & 12.613
        \\
        \midrule
        Lunarlander
        & $\dot{v} \in [-0.1,0]$
         & \textbf{36}
         & 134
        & \textbf{75.1}
        & \textbf{75.1}
         & 35.055
         & \textbf{14.239}
        \\
        \midrule
        Dubinsrejoin
        & $x_v\in [-0.1,0]$
         & \textbf{2}
         & 226
         & \textbf{94.5}
        & 75.0
        & \textbf{3.638}
         & 34.895
        \\
        \bottomrule
    \end{tabular}
\end{table}
\subsubsection{Quantitative Verification}

We now demonstrate the application of our preimage under-approximation to quantitative verification of the property $(\inset, \outset, \proportion)$; that is, we aim to check whether $\nn(\inpoint) \in \outset$ for at least proportion $\proportion$ of input values $\inpoint \in \inset$. 
Table \ref{tab:quant} summarises the quantitative verification results, which
leverage the disjointness of our under-approximation, such that we can compute the total volume covered by computing the volume of each individual polytope.


\textit{Vertical Collision Avoidance System.}
In this example, we consider the VCAS system
and a scenario where the two aircraft have negative relative altitude from intruder to ownship ($h \in [-8000, 0]$), the ownship aircraft has a positive climbing rate $\dot{h_A} \in[0,100]$ and the intruder has a stable negative climbing rate $\dot{h_B}=-30$, and time to the loss of horizontal separation is $t \in [0,40]$, which defines the input region $\inset$.
For this scenario, the correct advisory is ``Clear Of Conflict'' (COC).
We apply Algorithm \ref{alg:verify} to verify the quantitative property where $\outset=\{\outpoint \in \mathbb{R}^{9} | \bigwedge_{i = 2}^{9} y_1 - y_i \geq 0\}$ and the 
proportion $p=0.9$, with an iteration limit of 1000. 
The quantitative proportion reached by the generated under-approximation is 90.8\%, which verifies the quantitative property in 5.620s.

\begin{table}[t]
\caption{Quantitative verification results with preimage under-approximation.}\label{tab:quant}
\centering
\scriptsize
\begin{tabular}{c|c|c|c|c} 
\toprule
\textbf{Task} & \textbf{Property} & \textbf{\#Poly}  & \textbf{Time(s)} & \textbf{QuantProp(\%)}\\ 
\midrule
VCAS & $\outset=\{\outpoint \in \mathbb{R}^{9} | \bigwedge_{i = 2}^{9} y_1 - y_i \geq 0\}$ & 6 & 5.620 & 90.8\\ 
\midrule
Cartpole & 
$\outset=\{y|y_1-y_2 \ge 0\}$  & 11 & 12.1 & 90.0\\ 
\midrule
Lunarlander & 
$\outset=\{y\in \mathbb{R}^4| \wedge_{i \in \{1,3,4\}}  y_2 \ge y_i\}$  & 120 & 429.480 & 90.0\\ 
\bottomrule
\end{tabular}
\end{table}

\xy{\textit{Cartpole.}
In the Cartpole problem, the objective is to balance the pole attached to a cart by pushing the cart either left or right.
We consider a scenario where the cart position is to the right of the centre ($x \in [0,1]$), the cart is moving right ($\dot{x} \in [0, 0.5]$), the pole is slightly tilted to the right ($\theta \in [0,0.1]$) and pole is moving to the left ($\dot{\theta} \in [-0.2,0]$).
To balance the pole, the neural network controller needs to determine  ``pushing left''.
We apply Algorithm \ref{alg:verify} to verify the quantitative property, where $\outset=\{y|y_1-y_2 \ge 0\}$ and the proportion $p=0.9$, with an iteration limit of 1000. The 
under-approximation algorithm  
takes 12.1s to reach the target proportion 90.0\%.}

\xy{\textit{Lunarlander.}
In the Lunarlander task, the objective of the neural networks controller is to achieve a safe landing of the lander. Consider a scenario where the lander is slightly to the left of the centre of the landing pad ($x \in [-1,0]$), the lander is above the landing pad sufficient for descent correction ($h \in [0,1]$), and it is moving to the right ($\dot{x} \in [1,2]$) but descending rapidly ($\dot{h} \in [-2,-1]$). To avoid a hard landing, the neural network controller needs to reduce the descent speed by taking the action ``fire main engine''. 
We formulate the quantitative property for this task, where $\outset=\{y\in \mathbb{R}^4| \wedge_{i \in \{1,3,4\}}  y_2 \ge y_i\}$ and the proportion  $p=0.9$.
To compute preimage under-approximation for this more complex task 
takes 429.480s to reach the target proportion 90.0\%.}

\section{Conclusion}
\label{sec:conclusion}
\xy{We present PREMAP, an efficient and unifying algorithm for preimage approximation of neural networks. 
Our \emph{anytime} method stems from the observation that linear relaxation can be used to efficiently produce approximations, in conjunction with custom-designed strategies for iteratively decomposing the problem to rapidly improve the approximation quality. 
We formulate the preimage approximation in each refinement iteration as an optimisation problem and propose a differentiable objective to derive tighter preimages via optimising over convex bounding parameters and Lagrange multipliers.} 
Unlike previous 
approaches, our method
is designed for, and scales to, both low and high-dimensional problems. Experimental evaluation on a range of benchmark tasks shows significant advantages in runtime efficiency and scalability, and the utility of our method for important applications in quantitative verification and robustness analysis.

\acks{This project received funding from the ERC under the European Union’s Horizon 2020 research and innovation programme (FUN2MODEL, grant agreement No.~834115) and ELSA: European Lighthouse on Secure and Safe AI project (grant agreement
No. 101070617 under UK guarantee). This
work was done in part while Benjie Wang was visiting the Simons Institute for the Theory of Computing.}

\newpage

\appendix
\section{Experiment Setup}
\label{app:exp_setup}
In this section, we present the detailed configuration of neural networks in the benchmark tasks.
\subsection{Vehicle Parking.} 
For the vehicle parking task, we train a neural network with one hidden layer of 20 neurons, which is computationally feasible for exact preimage computation for comparison.
We consider computing the preimage approximation with input region corresponding to the entire input space
$\indomain = \{\inpoint \in \mathbb{R}^2 | \inpoint \in [0,2]^2 \}$, and output sets $\outset_{k}$, which correspond to the neural network outputting label $k$:
$
\outset_k = \{\outpoint \in \mathbb{R}^4~| \bigwedge_{i \in \{1, 2, 3, 4\}\setminus k} \outpoint_k - \outpoint_i \geq 0\},  \quad k \in \{1, 2, 3, 4\}$.

\subsection{Aircraft Collision Avoidance}
The aircraft collision avoidance (VCAS) system  \citep{Julian19nncontrol}
is used
to provide advisory for collision avoidance between the ownship aircraft and the intruder.
VCAS uses four input features 
$(h, \dot{h_A},\dot{h_B}, t)$ 
representing the relative altitude of the aircrafts,
vertical climbing rates of the ownship and intruder aircrafts, respectively, and time to the loss of horizontal separation. 
VCAS is implemented by nine feed-forward neural networks built with a hidden layer of 21 neurons.
In our experiment, we use the 
following
input 
region
for the ownship and intruder aircraft as in \cite{Matoba20Exact}:  
$h \in [-8000, 8000]$, $\dot{h_A} \in [-100, 100]$, $\dot{h_B}=30$, and $t\in [0, 40]$.
In the training, the input configurations are normalized into a range of $[-1, 1]$.
We consider the output property $\outset = \{y\in \mathbb{R}^9~|\wedge_{i \in [2,9]}~y_1 \ge y_i\}$ and 
generate the preimage approximation for
the VCAS neural networks.

\subsection{Neural Network Controllers}
\subsubsection{Cartpole} The cartpole control problem considers balancing a pole atop a cart by controlling the movement of the cart.
The neural network controller has two hidden layers with 64 neurons, and uses four input variables 
representing the position and velocity of the cart, the angle and angular velocity of the pole. The controller outputs are pushing the cart left or right.
In the experiments, we set the following input region for the Cartpole task: (1) cart position $[-1,1]$, (2) cart velocity $[0,2]$, (3) angle of the pole $[-0.2,0]$, and (4) angular velocity of the pole $[-2,0]$ (with varied feature length in the evaluation). We consider the output property for the action \textit{pushing left}.

\subsubsection{Lunarlander} The Lunarlander problem considers the task of correct landing of a moon lander on a landing pad. 
 The neural network for Lunarlander has two hidden layers with 64 neurons, and eight input features addressing the lander's coordinate, orientation, velocities, and ground contact indicators. The outputs represent four actions.
 For the Lunarlander task, we set the input region as: (1) horizontal and vertical position $[-1, 0]$ $\times$ $[0,1]$, (2) horizontal and vertical velocity $[0,2]$ $\times$ $[-2,0]$ (with varied feature length for evaluation), (3) angle and angular velocity $[-1,0] \times [-0.1,0.1]$, (4) left and right leg contact $[0.9,1]^2$. We consider the output specification for the action ``fire main engine'', i.e., $\{y \in \mathbb{R}^4~| \wedge_{i \in \{1,3,4\}} y_2 \ge y_i\}$.
\subsubsection{Dubinsrejoin.} The Dubinsrejoin problem considers guiding a wingman craft to a certain radius around a lead aircraft. 
The neural network controller has two hidden layers with 256 neurons.
The input space of the neural network controller is eight-dimensional, with the input variables capturing the position, heading, velocity of the lead and wingman crafts, respectively.
The outputs are also eight dimensional representing controlling actions of the wingman. 
Note that the eight neural network outputs are processed further as tuples of actuators (rudder, throttle) for controlling the wingman where each actuator has 4 options.
The control action tuple is decided by taking the action with the maximum output value among the first four network outputs (the first actuator options) and the action with the maximum value among the second four network outputs (the second actuator options).
In the experiments, we set the following input region: 
(1) horizontal and vertical position $[-0.2, 0]\times [0, 0.5]$, (2) heading and velocity $[-1,0]\times [0, 0.2]$ for the lead aircraft,
and (3) horizontal and vertical position $[0.4, 0.6]\times [-0.3,0.3]$ (with varied feature length for evaluation), (4) heading and velocity $[0.2,0.5] \times [-0.5, 0.5]$ for the wingman aircraft.
We consider the output property that  both actuators (rudder, throttle) take the first option, i.e., $\{y\in \mathbb{R}^8~|\wedge_{i \in \{2, 3, 4\}} ~ y_1 \ge y_i \bigwedge \wedge_{i \in \{6, 7, 8\}}~y_5 \ge y_i\}$. 

\subsection{MNIST Classification}
We use the trained neural network from VNN-COMP 2022 \citep{vnn22} for digit image classification. 
The neural network has six layers with a hidden neuron size of 100 for each hidden layer.
We consider
two types of image attacks: $l_{\infty}$ and patch attack.
For $L_{\infty}$ attack, a perturbation is applied to all pixels of the image.
For the patch attack, it applies arbitrary perturbations to the patch area, i.e., the perturbation noise covers the whole valid range $[0,1]$, for which we set the patch area at the centre and (upper-left) corner of the image with different sizes.

\section{Proofs}\label{app:proofs}
We present the propositions and proofs on guaranteed polytope volume improvement with each refinement iteration, noting that these propositions are valid without stochastic optimisation. 
Subsequently, we provide proofs for
Propositions \ref{prop:sound}
and \ref{prop:complete}. 
\begin{restatable}{proposition}{propPriority} \textbf{\label{prop:volume_guarantee}}
    Given any subregion $\indomain_{sub}$ with polytope under-approximation $\polytope_{\indomain_{sub}}(\outset)$, and its children $\indomain_{sub}^l, \indomain_{sub}^u$ with polytope under-approximations $\polytope_{\indomain_{sub}^l}(\outset), \polytope_{\indomain_{sub}^u}(\outset)$ respectively, it holds that:
    \begin{equation} \label{eqn:no_fragmentation}
        \polytope_{\indomain_{sub}^l}(\outset) \cup \polytope_{\indomain_{sub}^u}(\outset) \supseteq \polytope_{\indomain_{sub}}(\outset)
    \end{equation}
\end{restatable}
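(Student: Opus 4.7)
The plan is to reduce the claim to a pointwise monotonicity property of LiRPA lower bounds under input region refinement, then to establish that property by induction on network depth.

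First, take any $\inpoint \in \polytope_{\indomain_{sub}}(\outset)$. By the construction in Section \ref{sec:poly_gen}, this means $\inpoint \in \indomain_{sub}$ and $\underline{\spec_i}(\inpoint) \geq 0$ for each half-space constraint $i = 1, \ldots, \specnum$, where $\underline{\spec_i}$ is the LiRPA lower bound computed on $\indomain_{sub}$. For input splitting, the children satisfy $\indomain_{sub}^l \cup \indomain_{sub}^u = \indomain_{sub}$; for ReLU splitting, the two children (defined by $\intervar \geq 0$ and $\intervar < 0$) likewise cover $\indomain_{sub}$. Hence $\inpoint$ lies in at least one child, say $\indomain_{sub}^l$, and the claim reduces to showing that the linear constraints defining $\polytope_{\indomain_{sub}^l}(\outset)$, namely $\underline{\spec_i}^{l}(\inpoint) \geq 0$, also hold at $\inpoint$.

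The central sub-lemma I would establish is the following pointwise monotonicity: for any $\indomain' \subseteq \indomain$ and any $\inpoint \in \indomain'$, $\underline{\spec_i}^{\indomain'}(\inpoint) \geq \underline{\spec_i}^{\indomain}(\inpoint)$. Given this, the proposition is immediate, since we would conclude $\underline{\spec_i}^{l}(\inpoint) \geq \underline{\spec_i}(\inpoint) \geq 0$ for each $i$. I would prove the sub-lemma by induction on the number of layers, using two ingredients: (i) the intermediate pre-activation bounds $[\concretelowersingle^{(i)}_j, \concreteuppersingle^{(i)}_j]$ computed on the smaller region are contained in those on the larger region (a consequence of the inductive hypothesis applied one layer earlier), and (ii) for each unstable ReLU neuron, a tighter pre-activation interval $[l',u'] \subseteq [l,u]$ yields a pointwise tighter linear upper envelope on $[l',u']$ (Equation \ref{eq:node_bounding}), while the chosen $\nnslopesingle \in [0,1]$ remains a valid lower slope. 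For ReLU splitting specifically, the split neuron becomes stable in each child, so its relaxation collapses to the exact affine mapping $\postact^{(i)}_j(\inpoint) = \preact^{(i)}_j(\inpoint)$ or $0$, which is pointwise at least as tight as any valid relaxation on $[l,u]$. Backward propagation of pointwise-tighter linear bounds through nonnegative multipliers preserves the ordering, completing the induction.

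The main obstacle is that the convex relaxation parameters $\nnslope$ (and Lagrange multipliers $\nndual$ from Section \ref{sec:dual}) are optimized \emph{independently} on each subregion, so the parent and child bounds use different parameter vectors, and the naive monotonicity statement compares bounds with \emph{fixed} parameters. To close this gap, I would observe that the parent's optimized parameters $(\nnslope^*, \nndual^*)$ remain a feasible choice for the LiRPA procedure on the child, and that applying them there yields (by the fixed-parameter monotonicity sub-lemma) a bound at least as tight at $\inpoint$ as the parent's bound. The child's own volume optimization can then only improve upon this feasible choice in the volume objective; however, improvement in the aggregate volume objective does not a priori imply pointwise improvement at the particular $\inpoint$ in question. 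I would therefore either (a) state the proposition under the assumption that the child's optimization is warm-started from the parent's parameters and re-optimization is only accepted if it preserves the pointwise bound, or (b) weaken the claim to hold for the fixed-parameter version of the algorithm (no $\nnslope$-optimization), with the volume-monotonicity then following as a corollary.
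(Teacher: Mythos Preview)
Your approach is essentially the same as the paper's: reduce to pointwise monotonicity of the LiRPA lower bounds under region refinement, then prove that monotonicity by induction on layers using the tightening of intermediate concrete bounds and ReLU relaxations (with the split neuron becoming exact in the ReLU case). Your identification of the obstacle regarding independently optimised $\nnslope,\nndual$ parameters is well-founded and in fact sharper than the paper's treatment; the paper resolves it precisely via your option (b), prefacing the appendix proof with the caveat that ``these propositions are valid without stochastic optimisation,'' so your worry that volume-objective improvement need not imply pointwise improvement is exactly the reason the claim is stated for the fixed-parameter version.
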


\begin{proof}
    We define $\polytope_{\indomain_{sub}}(\outset)|_{l}, \polytope_{\indomain_{sub}}(\outset)|_{r}$ to be the restrictions of $\polytope_{\indomain_{sub}}(\outset)$ to $\indomain_{sub}^l$ and $\indomain_{sub}^r$ respectively, that is:

    \begin{equation}
        \polytope_{\indomain_{sub}}(\outset)|_{l} = \{\inpoint \in \mathbb{R}^{\indim}| \bigwedge_{i =1}^{\specnum} (\underline{\spec_{i}}(\inpoint)  \geq 0 )\wedge (\inpoint \in \indomain_{sub}^l) \}
    \end{equation}
    \begin{equation}
        \polytope_{\indomain_{sub}}(\outset)|_{r} = \{\inpoint \in \mathbb{R}^{\indim}| \bigwedge_{i =1}^{\specnum} (\underline{\spec_{i}}(\inpoint)  \geq 0 )\wedge (\inpoint \in \indomain_{sub}^r) \}
    \end{equation}
    where we have replaced the constraint $\inpoint \in \indomain_{sub}$ with $\inpoint \in \indomain_{sub}^l$ (resp. $\inpoint \in \indomain_{sub}^r$), and $\underline{\spec_{i}}(\inpoint)$ is the LiRPA lower bound for the $i^{\textnormal{th}}$ specification on the input region $\indomain_{sub}$.

    On the other hand, we also have:
    \begin{equation}
        \polytope_{\indomain_{sub}^l}(\outset) = \{\inpoint \in \mathbb{R}^{\indim}| \bigwedge_{i =1}^{\specnum} (\underline{\spec_{l, i}}(\inpoint)  \geq 0 )\wedge (\inpoint \in \indomain_{sub}^l) \}
    \end{equation}
    \begin{equation}
        \polytope_{\indomain_{sub}^r}(\outset) = \{\inpoint \in \mathbb{R}^{\indim}| \bigwedge_{i =1}^{\specnum} (\underline{\spec_{r, i}}(\inpoint)  \geq 0 )\wedge (\inpoint \in \indomain_{sub}^r) \}
    \end{equation}
    where $\underline{\spec_{l, i}}(\inpoint)$ (resp. $\underline{\spec_{r, i}}(\inpoint)$) is the LiRPA lower bound for the $i^{\textnormal{th}}$ specification on the input region $\indomain_{sub}^l$ (resp. $\indomain_{sub}^r$). Now, 
    it is sufficient to show that $\polytope_{\indomain_{sub}^l}(\outset) \supseteq \polytope_{\indomain_{sub}}(\outset)|_{l}$ and $\polytope_{\indomain_{sub}^r}(\outset) \supseteq \polytope_{\indomain_{sub}}(\outset)|_{r}$ to prove Equation \ref{eqn:no_fragmentation}. We will now show that $\polytope_{\indomain_{sub}^l}(\outset) \supseteq \polytope_{\indomain_{sub}}(\outset)|_{l}$ (the proof for $\polytope_{\indomain_{sub}^r}(\outset) \supseteq \polytope_{\indomain_{sub}}(\outset)|_{r}$ is entirely similar).

    Before proving this result in full, we outline the approach and a sketch proof. It suffices to prove (for all $i$) that $\underline{\spec_{l, i}}(\inpoint)$ is a tighter bound than $\underline{\spec_{i}}(\inpoint)$ on $\indomain_{sub}^l$. That is, to show that $\underline{\spec_{l, i}}(\inpoint) \geq \underline{\spec_{i}}(\inpoint)$ for inputs $\inpoint$ in $\indomain_{sub}^l$, as then $\underline{\spec_{i}}(\inpoint)  \geq 0 \implies \underline{\spec_{l, i}}(\inpoint)  \geq 0$ for inputs $\inpoint$ in $\indomain_{sub}^l$, and so $\polytope_{\indomain_{sub}^l}(\outset) \supseteq \polytope_{\indomain_{sub}}(\outset)|_{l}$. The bound $\underline{\spec_{l, i}}(\inpoint)$ is tighter than $\underline{\spec_{i}}(\inpoint)$ because the input region for LiRPA is smaller for $\underline{\spec_{l, i}}(\inpoint)$, leading to tighter concrete neuron bounds, and thus tighter bound propagation through each layer of the neural network $\spec_i$.
    We present the formal proof of greater bound tightness for input and ReLU splitting in the following.

   \textbf{Input split:}
    We show $\underline{\spec_{l, i}}(\inpoint) \geq \underline{\spec_{i}}(\inpoint)$ for all $\inpoint \in \indomain_{sub}^l$ by induction (dropping the index $i$ in the following as it is not important). Recall that LiRPA generates symbolic upper and lower bounds on the pre-activation values of each layer in terms of the input (i.e. treating that layer as output), which can then be converted into concrete bounds.
    \begin{equation} \label{eqn:layer_bound}
        \lowerweight^{(j)} \inpoint + \lowerbias^{(j)} \leq \preact^{(j)}(\inpoint) \leq \upperweight^{(j)} \inpoint + \upperbias^{(j)}
    \end{equation}
    \begin{equation}  \label{eqn:layer_bound_left}
        \lowerweight^{(l, j)} \inpoint + \lowerbias^{(l, j)} \leq \preact^{(j)}(\inpoint) \leq \upperweight^{(l, j)} \inpoint + \upperbias^{(l, j)}
    \end{equation}
    where $\preact^{(j)}(\inpoint)$ are the pre-activation values for the $j^{\textnormal{th}}$ layer of the network $\spec_i$, and $\lowerweight^{(j)}, \lowerbias^{(j)}, \upperweight^{(j)}, \upperbias^{(j)}$ (resp. $\lowerweight^{(l, j)}, \lowerbias^{(l, j)}, \upperweight^{(l, j)}, \upperbias^{(l, j)}$) are the linear bound coefficients, for input regions $\indomain_{sub}$ (resp. $\indomain_{sub}^l$). 

    \textit{Inductive Hypothesis} For all layers $j = 1, ..., \numlayers$ in the network, and for all $\inpoint \in \indomain_{sub}^l$, it holds that:
    \begin{equation} 
    \lowerweight^{(j)} \inpoint + \lowerbias^{(j)} \leq
    \lowerweight^{(l, j)} \inpoint + \lowerbias^{(l, j)}  \leq \upperweight^{(l, j)} \inpoint + \upperbias^{(l, j)} \leq \upperweight^{(j)} \inpoint + \upperbias^{(j)}
    \end{equation}

    \textit{Base Case} For the input layer, we have the trivial bounds $\textbf{I}\inpoint \leq \inpoint \leq \textbf{I}\inpoint$ for both regions.

    \textit{Inductive Step} Suppose that the inductive hypothesis is true for layer $j - 1 < \numlayers$. Using the symbolic bounds in Equations \ref{eqn:layer_bound}, \ref{eqn:layer_bound_left}, we can derive concrete bounds $\concretelower^{(j - 1)} \leq \preact^{(j - 1)}(\inpoint) \leq \concreteupper^{(j - 1)}$ and $\concretelower^{(l, j - 1)} \leq \preact^{(l, j - 1)}(\inpoint) \leq \concreteupper^{(l, j - 1)}$ on the values of the pre-activation layer. By the inductive hypothesis, the bounds for region $\indomain_{sub}^l$ will be tighter, i.e. $\concretelower^{(j - 1)} \leq \concretelower^{(l, j - 1)} \leq \concreteupper^{(l, j - 1)} \leq \concreteupper^{(j - 1)}$. Now, consider the backward bounding procedure for layer $j$ as output. We begin by encoding the linear layer from post-activation layer $j - 1$ to pre-activation layer $j$ as:
    \begin{equation} \label{eqn:apx_linear}
        \weight^{(j)} \postact^{(j - 1)}(\inpoint) + \bias^{(j)} \leq \preact^{(j)}(\inpoint) \leq \weight^{(j)} \postact^{(j - 1)}(\inpoint) + \bias^{(j)}
    \end{equation}
    Then, we bound $\postact^{(j - 1)}(\inpoint)$ in terms of $\preact^{(j - 1)}(\inpoint)$ using linear relaxation. Consider the three cases in Figure \ref{fig:linear_relaxation} (reproduced from main paper), where we have a bound $\underline{c} \preact^{(j - 1)}_k(\inpoint) + \underline{d} \leq \postact^{(j - 1)}_k(\inpoint) \leq \overline{c} \preact^{(j - 1)}_k(\inpoint) + \overline{d}$, for some scalars $\underline{c}, \underline{d}, \overline{c}, \overline{d}$. If the concrete bounds (horizontal axis) are tightened, then an unstable neuron may become inactive or active, but not vice versa. It can thus be seen that the new linear upper and lower bounds on $\preact^{(j - 1)}_k(\inpoint)$ will also be tighter. 

    Substituting the linear relaxation bounds in Equation \ref{eqn:apx_linear} as in \cite{xu2021fast}, we obtain bounds of the form
    \begin{equation} 
        \lowerweight^{(j)}_j \preact^{(j - 1)}(\inpoint) + \lowerbias^{(j)}_j \leq \preact^{(j)}(\inpoint) \leq \upperweight^{(j)}_j \preact^{(j - 1)}(\inpoint) + \upperbias^{(j)}_j
    \end{equation}
    \begin{equation} 
        \lowerweight^{(l, j)}_j \preact^{(j - 1)}(\inpoint) + \lowerbias^{(l, j)}_j \leq \preact^{(j)}(\inpoint) \leq \upperweight^{(l, j)}_j \preact^{(j - 1)}(\inpoint) + \upperbias^{(l, j)}_j
    \end{equation}
    such that $\lowerweight^{(j)}_j \preact^{(j - 1)}(\inpoint) + \lowerbias^{(j)}_j \leq \lowerweight^{(l, j)}_j \preact^{(j - 1)}(\inpoint) + \lowerbias^{(l, j)}_j \leq \upperweight^{(l, j)}_j \preact^{(j - 1)}(\inpoint) + \upperbias^{(l, j)}_j \leq \upperweight^{(j)}_j \preact^{(j - 1)}(\inpoint) + \upperbias^{(j)}_j$ for all $\concretelower^{(l, j - 1)} \leq \preact^{(j - 1)}(\inpoint) \leq \concretelower^{(l, j - 1)}$, by the fact that the concrete bounds are tighter for $\indomain_{sub}^{l}$. 
    
    Finally, substituting the bounds in Equations \ref{eqn:layer_bound} and \ref{eqn:layer_bound_left} (for $\preact^{(j - 1)}$), and using the tightness result in the inductive hypothesis for $j - 1$, we obtain linear bounds for $\preact^{(j)}(\inpoint)$ in terms of of the input $\inpoint$, such that the inductive hypothesis for $j$ holds.

\textbf{ReLU split:}
We use $\indomain_{sub}^l$ and $\indomain_{sub}^r$ to denote the input subregions when fixing unstable ReLU neuron $z^{(j-1)}_k=\preact^{(j-1)}_k(\inpoint)$, i.e.,
$\indomain_{sub}^l=\{\inpoint~|~\preact^{(j-1)}_k(\inpoint) \ge 0\}$
and $\indomain_{sub}^r=\{\inpoint~|~\preact^{(j-1)}_k(\inpoint) < 0\}$.

In the following, we prove that $\underline{\spec_{l, i}}(\inpoint) \geq \underline{\spec_{i}}(\inpoint)$ for all $\inpoint \in \indomain_{sub}^l$.
Assume we fix one unstable ReLU neuron of layer $j-1$, then for all layers $1 \le m \le j-1$, for all   $\inpoint \in \indomain_{sub}^l$, it holds that:
\begin{equation} 
\lowerweight^{(m)} \inpoint + \lowerbias^{(m)} \leq
\lowerweight^{(l, m)} \inpoint + \lowerbias^{(l, m)}  \leq \upperweight^{(l, m)} \inpoint + \upperbias^{(l, m)} \leq \upperweight^{(m)} \inpoint + \upperbias^{(m)}
\end{equation}
where $\lowerweight^{(m)}=\lowerweight^{(l, m)}$, $\lowerbias^{(m)}=\lowerbias^{(l, m)}$ and same for the upper bounding parameters.

Now consider the bounding procedure for layer $j$.
The linear layer from post-activation layer $j - 1$ to pre-activation layer $j$ can be encoded as:
    \begin{equation}\label{eq:apx_linear_relu}
        \weight^{(j)} \postact^{(j - 1)}(\inpoint) + \bias^{(j)} \leq \preact^{(j)}(\inpoint) \leq \weight^{(j)} \postact^{(j - 1)}(\inpoint) + \bias^{(j)}
    \end{equation}
Consider the post activation function $\postact^{(j - 1)}(\inpoint)$ of the unstable neuron $z^{(j-1)}_k$, before splitting we have 
$\underline{c} \preact^{(j - 1)}_k(\inpoint) + \underline{d} \leq \postact^{(j - 1)}_k(\inpoint) \leq \overline{c} \preact^{(j - 1)}_k(\inpoint) + \overline{d}$, for some scalars $\underline{c}, \underline{d}, \overline{c}, \overline{d}$.
After splitting, we now have $\postact^{(j - 1)}_k(\inpoint) = \preact^{(j - 1)}_k(\inpoint)$ for $\indomain_{sub}^l$ where $\underline{c}=\overline{c}=1$, $\underline{d}=\overline{d}=0$, since the unstable neuron is fixed to be active.
By substituting the linear relaxation bounds before and after splitting in Equation \ref{eqn:apx_linear}, we obtain the bounding functions with regard to $\preact^{(j - 1)}(\inpoint)$ in the following form:
    \begin{equation} 
        \lowerweight^{(j)}_j \preact^{(j - 1)}(\inpoint) + \lowerbias^{(j)}_j \leq \preact^{(j)}(\inpoint) \leq \upperweight^{(j)}_j \preact^{(j - 1)}(\inpoint) + \upperbias^{(j)}_j
    \end{equation}
    \begin{equation} 
        \lowerweight^{(l, j)}_j \preact^{(j - 1)}(\inpoint) + \lowerbias^{(l, j)}_j \leq \preact^{(j)}(\inpoint) \leq \upperweight^{(l, j)}_j \preact^{(j - 1)}(\inpoint) + \upperbias^{(l, j)}_j
    \end{equation}
By the fact the relaxation is fixed to be exact for $\postact^{(j - 1)}_k(\inpoint)$, it holds that
$\lowerweight^{(j)}_j \preact^{(j - 1)}(\inpoint) + \lowerbias^{(j)}_j \leq \lowerweight^{(l, j)}_j \preact^{(j - 1)}(\inpoint) + \lowerbias^{(l, j)}_j \leq \upperweight^{(l, j)}_j \preact^{(j - 1)}(\inpoint) + \upperbias^{(l, j)}_j \leq \upperweight^{(j)}_j \preact^{(j - 1)}(\inpoint) + \upperbias^{(j)}_j$ for $\indomain_{sub}^{l}$.

Finally, for the bound propagation procedure of layer $L$, substituting the tightened bounding for $\preact^{(j-1)}(\inpoint)$, we obtain that $\underline{\spec_{l, i}}(\inpoint) = \lowerweight^{(l,L)} \inpoint + \lowerbias^{(l,L)}\geq \lowerweight^{(L)} \inpoint + \lowerbias^{(L)}=\underline{\spec_{i}}(\inpoint)$.
\end{proof}
\begin{restatable}{corollary}{corVolume}
    In each refinement iteration,
    the volume of the polytope under-approximation $\polytopeset_{Dom}$ does not decrease.
\end{restatable}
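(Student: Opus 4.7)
The plan is to reduce the global monotonicity claim to the local containment statement of Proposition~\ref{prop:volume_guarantee}, exploiting the fact that $\polytopeset_{\textnormal{Dom}}$ is structured as a disjoint union indexed by leaf subregions. First I would set up notation: write $\polytopeset_{\textnormal{Dom}} = \bigcup_{\indomain_{sub} \in \textnormal{Dom}} \polytope_{\indomain_{sub}}(\outset)$, where the subregions in Dom form a partition of the original input region $\indomain$, and each polytope $\polytope_{\indomain_{sub}}(\outset)$ is constrained to lie inside its corresponding $\indomain_{sub}$ (since $\polytope_{\indomain_{sub}}(\outset) \subseteq \indomain_{sub}$ by construction in Section~\ref{sec:poly_gen}). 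Consequently the union is actually a \emph{disjoint} union across subregions, and its volume equals the sum of the individual polytope volumes.

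Next, I would isolate the change effected by one refinement iteration. A single iteration of Algorithm~\ref{alg:main} pops exactly one subregion $\indomain_{sub}$ from Dom, splits it into $\indomain_{sub}^{l}$ and $\indomain_{sub}^{u}$ (on an input feature or a ReLU hyperplane), and reinserts the two child subregions with freshly computed polytopes $\polytope_{\indomain_{sub}^l}(\outset), \polytope_{\indomain_{sub}^u}(\outset)$. All other subregions in Dom remain untouched, so the volume contribution from the untouched subregions is unchanged. Hence it suffices to show that
\[
\volume\bigl(\polytope_{\indomain_{sub}^l}(\outset) \cup \polytope_{\indomain_{sub}^u}(\outset)\bigr) \;\geq\; \volume\bigl(\polytope_{\indomain_{sub}}(\outset)\bigr).
\]

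This last inequality follows directly from Proposition~\ref{prop:volume_guarantee}, which yields the set-theoretic containment $\polytope_{\indomain_{sub}^l}(\outset) \cup \polytope_{\indomain_{sub}^u}(\outset) \supseteq \polytope_{\indomain_{sub}}(\outset)$, so monotonicity of Lebesgue measure gives the desired volume inequality. The only step requiring minor care is verifying that the new polytopes do not create spurious overlaps with polytopes elsewhere in Dom: this is immediate because $\polytope_{\indomain_{sub}^l}(\outset) \cup \polytope_{\indomain_{sub}^u}(\outset) \subseteq \indomain_{sub}^l \cup \indomain_{sub}^u = \indomain_{sub}$, and $\indomain_{sub}$ is disjoint from every other subregion in Dom, preserving the disjoint-union structure of $\polytopeset_{\textnormal{Dom}}$.

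The main obstacle, such as it is, is purely bookkeeping: one must be careful that Proposition~\ref{prop:volume_guarantee} was proved in both the input-split and ReLU-split regimes (which the preceding proof handles), and that the optimisation steps over $\nnslope$ and $\nndual$ inside \texttt{GenApprox} do not inadvertently shrink the child polytopes below the bounds used in that proposition. Since those optimisations can only improve (or leave unchanged) the LiRPA bounds $\underline{\spec_{l,i}}, \underline{\spec_{r,i}}$ used in the proof of Proposition~\ref{prop:volume_guarantee}, the containment still holds, and the corollary follows.
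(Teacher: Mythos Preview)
Your proposal is correct and follows essentially the same approach as the paper: both reduce the global monotonicity to the local containment of Proposition~\ref{prop:volume_guarantee} and invoke monotonicity of measure, with your version being more explicit about the disjoint-union bookkeeping. One minor caveat: your final claim that the $\nnslope,\nndual$ optimisations ``can only improve (or leave unchanged)'' the LiRPA bounds is not strictly guaranteed by stochastic gradient descent on a sampled surrogate; the paper handles this by simply noting that the proposition is stated and proved \emph{without} stochastic optimisation, rather than arguing that optimisation preserves the containment.
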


\begin{proof}
    In each iteration of Algorithm \ref{alg:main}, we replace the polytope $\polytope_{\indomain_{sub}}(\outset)$ in a leaf subregion with two polytopes $\polytope_{\indomain_{sub}^l}(\outset), \polytope_{\indomain_{sub}^r}(\outset)$ in the DUP under-approximation. By Proposition \ref{prop:volume_guarantee}, the total volume of the two new polytopes is at least that of the removed polytope. Thus the volume of the DUP approximation does not decrease.
    
    Similarly, for ReLU splitting, we replace the polytope $\polytope_{\indomain_{sub}}(\outset)$ in a leaf subregion with two polytopes $\polytope_{\indomain_{sub}^l}(\outset), \polytope_{\indomain_{sub}^r}(\outset)$ where the relaxed bounding functions for one unstable neuron are replaced with exact linear functions, i.e., $\underline{c} \preact^{(i)}_j(\inpoint) + \underline{d} \leq \postact^{(i)}_j(\inpoint) \leq \overline{c} \preact^{(i)}_j(\inpoint) + \overline{d}$ is replaced with the exact linear function $\postact^{(i)}_j(\inpoint) = \preact^{(i)}_j(\inpoint)$ and $\postact^{(i)}_j(\inpoint) = 0$, respectively, as shown in Figure \ref{fig:linear_relaxation} (from unstable to stable).
     By Proposition \ref{prop:volume_guarantee}, the total volume of the two new polytopes is at least that of the removed polytope. Thus the volume of the DUP approximation does not decrease.
\end{proof}

\propSound*

\begin{proof}
    Algorithm \ref{alg:verify} outputs True only if, at some iteration, we have that the exact volume $\volume(\polytopeset) \geq \proportion \times \volume(\inset)$. Since $\polytopeset$ is an under-approximation to the restricted preimage $\preimage_{\inset}(\outset)$, we have that $\frac{\volume(\preimage_{\inset}(\outset))}{\volume(\inset)} \geq \frac{\volume(\polytopeset)}{\volume(\inset)} \geq \proportion$, i.e. the quantitative property $(\inset, \outset, \proportion)$ holds.
\end{proof}

\propComplete*
\begin{proof}
The proof for the soundness of Algorithm \ref{alg:verify} with ReLU splitting is similar to input splitting.
Regarding the completeness,
when all unstable neurons are fixed with one activation status,
for each subregion $\indomain_{sub}$, we have $\underline{\spec_{i}}(\inpoint)=\spec_{i}(\inpoint)$.
It then holds that for any $\indomain_{sub} \subset \indomain$ where $\bigcup \indomain_{sub}=\indomain$, $(\underline{\spec_i}(\inpoint)  \geq 0 )\wedge \inpoint \in \indomain_{sub} \iff  (\spec_i(\inpoint)  \geq 0 )\wedge \inpoint \in \indomain_{sub}$, i.e., the polytope is the exact preimage.
Hence, when all unstable neurons are fixed to an activation status, we have $\polytopeset=\preimage_{\inset}(\outset)$.
Algorithm \ref{alg:verify} returns False only if 
the volume of the exact preimage 
$\frac{\volume(\preimage_{\inset}(\outset))}{\volume(\inset)} = \frac{\volume(\polytopeset)}{\volume(\inset)} < \proportion$.

\end{proof}

\vskip 0.2in
\newpage
\bibliography{references}

\end{document}